\title{Best of Both Worlds Policy Optimization}
\definecolor{Green}{rgb}{0.13, 0.65, 0.3}
\newcommand{\calC}{\mathcal{C}}
\DeclareMathOperator*{\argmin}{argmin} 
\DeclareMathOperator*{\argmax}{argmax} 
\newcommand{\Reg}{\text{\rm Reg}}
\newcommand{\calT}{\mathcal{T}}
\newcommand{\calA}{\mathcal{A}}
\newcommand{\calE}{\mathcal{E}}
\newcommand{\calS}{\mathcal{S}}
\newcommand{\calP}{\mathcal{P}}
\newcommand{\calF}{\mathcal{F}}
\newcommand{\ind}{\mathbb{I}}
\newcommand{\hatP}{\widehat{P}}
\newcommand{\hatL}{\widehat{L}}
\newcommand{\picirc}{\mathring{\pi}}
\newcommand{\hatQ}{\widehat{Q}}
\newcommand{\poly}{\textit{poly}}
\newcommand{\E}{\mathbb{E}}
\newcommand{\hatell}{\widehat{\ell}}
\newcommand{\calX}{\mathcal{X}}
\newcommand{\one}{\mathbf{1}}
\newcommand{\inner}[1]{\langle#1\rangle}
\newcommand{\term}{\textbf{term}}
\newcommand{\tildeP}{\widetilde{P}}
\newcommand{\regterm}{\textbf{{\upshape ftrl-reg}}}
\newcommand{\biasterm}{\textbf{{\upshape bias}}}
\newcommand{\scalemath}[2]{\scalebox{#1}{$\displaystyle #2$}}
\newcommand{\nonl}{\renewcommand{\nl}{\let\nl}}
\newcommand{\pref}[1]{\prettyref{#1}}
\newcommand{\savehyperref}[2]{\texorpdfstring{\hyperref[#1]{#2}}{#2}}
  \newtheorem{theorem}{Theorem}
  \newtheorem{lemma}[theorem]{Lemma}
  \newtheorem{definition}[theorem]{Definition}
\date{}
\author{%
    Christoph Dann~\thanks{Google Research. Email: \texttt{cdann@cdann.net}.}\and Chen-Yu Wei~\thanks{MIT Institute for Data, Systems, and Society. Email: \texttt{chenyuw@mit.edu}.}\and Julian Zimmert~\thanks{Google Research. Email: \texttt{zimmert@google.com}.}
}
\begin{document}

\maketitle

\begin{abstract}
Policy optimization methods are popular reinforcement learning algorithms in practice. Recent works have built theoretical foundation for them by proving $\sqrt{T}$ regret bounds even when the losses are adversarial. Such bounds are tight in the worst case but often overly pessimistic. In this work, we show that in tabular Markov decision processes (MDPs), by properly designing the regularizer, the exploration bonus and the learning rates, one can achieve a more favorable $\text{polylog}(T)$ regret when the losses are stochastic, without sacrificing the worst-case guarantee in the adversarial regime. To our knowledge, this is also the first time a gap-dependent $\text{polylog}(T)$ regret bound is shown for policy optimization. Specifically, we achieve this by leveraging a Tsallis entropy or a Shannon entropy regularizer in the policy update. Then we show that under known transitions, we can further obtain a first-order regret bound in the adversarial regime by leveraging the log~barrier regularizer. 
\end{abstract}
\section{Introduction}
Policy optimization methods have seen great empirical success in various domains \citep{schulman2017proximal, levine2013guided}. An appealing property of policy optimization methods is the local-search nature, which lends itself to an efficient implementation as a search over the whole MDP is avoided. However, this property also makes it difficult to obtain global optimality guarantees for these algorithms and a large portion of the literature postulates
strong and often unrealistic assumptions to ensure global exploration \citep[see e.g.,][]{abbasi2019politex, agarwal2020optimality, neu2021online, wei2021learning}.
Recently, the need for extra assumptions has been overcome by adding exploration bonuses to the update \citep{cai2020provably, shani2020optimistic, agarwal2020pc, zanette2020learning, luo2021policy}. These works demonstrate an additional robustness property of policy optimization, which is able to handle adversarial losses or some level of corruption. \citet{luo2021policy} and \citet{chen2022policy} even managed to obtain the optimal $\sqrt{T}$ rate.  

However, when the losses are in fact stochastic, the $\sqrt{T}$ minimax regret is often overly pessimistic and $\log(T)$ with problem-dependent factors is the optimal rate \citep{lai1985asymptotically}. 
Recently, \citet{jin2021best} obtained a best-of-both-worlds algorithm that automatically adapts to the nature of the environment, a method which relies on FTRL with a global regularizer over the occupancy measure.

In this work, we show that by properly assigning the bonus and tuning the learning rates, policy optimization 
can also achieve the best of both worlds, which gives a more computationally favorable solution than \citet{jin2021best} for the same setting. 
Specifically, we show that policy optimization with Tsallis entropy or Shannon entropy regularizer achieves $\sqrt{T}$ regret in the adversarial regime and $\text{polylog}(T)$ regret in the stochastic regime. The $\sqrt{T}$ can further be improved to $\sqrt{L}$ if the transition is known and if a log-barrier regularizer is used, where $L$ is the cumulative loss of the best policy. Though corresponding results in multi-armed bandits have been well-studied, new challenges arise in the MDP setting which require non-trivial design for the exploration bonus and the learning rate scheduling. The techniques we develop to address these issues constitute the main contribution of this work. 
\section{Related Work}
For multi-armed bandits, the question whether there is a single algorithm achieving near-optimal regret bounds in both the adversarial and the stochastic regimes was first asked by \citet{bubeck2012best}. A series of followup works  refined the bounds through different techniques \citep{seldin2014one, auer2016algorithm, seldin2017improved, wei2018more, zimmert2019optimal, ito2021parameter}. One of the most successful approaches is developed by \citet{wei2018more, zimmert2019optimal, ito2021parameter}, who demonstrated that a simple Online Mirror Descent (OMD) or Follow the Regularized Leader (FTRL) algorithm, which was originally designed only for the adversarial case, is able to achieve the best of both worlds. This approach has been adopted to a wide range of problems including semi-bandits \citep{zimmert2019beating}, graph bandits \citep{erez2021best,ito2022nearly}, partial monitoring \citep{tsuchiya2022best},  multi-armed bandits with switching costs \citep{rouyer2021algorithm, amir2022better}, tabular MDPs \citep{jin2020simultaneously, jin2021best}, and others. Though under a similar framework, each of them addresses new challenges that arises in their specific setting. 

Previous works that achieve the best of both worlds in tabular MDPs \citep{jin2020simultaneously, jin2021best} are based on FTRL over the \emph{occupancy measure} space. This approach has several shortcomings, making it less favorable in practice. First, the feasible set of occupancy measure depends on the transition kernel, so the extension to a model-free version is difficult. Second, since the occupancy measure space is a general convex set that may change over time as the learner gains more knowledge about transitions, it requires solving a different convex programming in each round. In contrast, policy optimization is easier to extend to settings where transitions are hard to learn, and it is computationally simple --- in tabular MDPs, it is equivalent to running an individual multi-armed bandit algorithm on each state. 

Due to its local search nature, exploration under policy optimization is non-trivial, especially when coupled with bandit feedback and adversarial losses. In a simpler setting where the loss feedback has full information, \citet{he2022near, cai2020provably} showed $\sqrt{T}$ regret for linear mixture MDPs using policy optimization. In another simpler setting where the loss is stochastic, \citet{agarwal2020pc, zanette2021cautiously} showed $\text{poly}(1/\epsilon)$ sample complexity for linear MDPs. The work by \citet{shani2020optimistic} first studied policy optimization with bandit feedback and adversarial losses, and obtained a $T^{2/3}$ regret for tabular MDPs. \citet{luo2021policy} improved it to the optimal $\sqrt{T}$, and provided extensions to linear-Q and linear MDPs. In this work, we demonstrate another power of policy optimization by showing a best-of-both-world regret bound in tabular MDPs. To our knowledge, this is also the first time a gap-dependent $\text{polylog}(T)$ regret bound is shown for policy optimization. 

We also note that a first-order bound has been shown for adversarial MDPs by \citet{lee2020bias}. Their algorithm is based on regularization on the occupancy measure, and does not rely on knowledge of the transition kernel. On the other hand, our first-order bound currently relies on the learner knowing the transitions. Whether it can be achieved under unknown transitions is an open question. 

\section{Notation and Setting }

\paragraph{Notation} For $f\in\mathbb{R}$ and $g\in \mathbb{R}_+$, we use $f\lesssim g$ or $f\leq O(g)$ to mean that $f\leq c\cdot g$ for some absolute constant $c>0$. $[x]_+\triangleq \max\{x,0\}$. $\triangle(\calX)$ denotes the probability simplex over the set $\calX$. 

\subsection{MDP setting}

We consider episodic fixed-horizon MDPs. Let $T$ be the total number of episodes. The MDP is described by a tuple $(\calS,\calA, H, P, \{\ell_t\}_{t=1}^T)$, where $\calS$ is the state set, $\calA$ is the action set, $H$ is the horizon length, $P: \calS\times \calA\rightarrow \triangle(\calS)$ is the transition kernel so that $P(s'|s,a)$ is the probability of moving to state $s'$ after taking action $a$ on state $s$, and $\ell_t: \calS\times \calA\rightarrow [0,1]$ is the loss function in episode $t$. We define $S=|\calS|$ and $A=|\calA|$, which are both assumed to be finite. Without loss of generality, we assume $A\leq T$. A policy $\pi: \calS\rightarrow \triangle(\calA)$ describes how the player interacts with the MDP, with $\pi(\cdot|s)\in\triangle(\calA)$ being the action distribution the player uses to select actions in state $s$. If for all $s$, $\pi(\cdot|s)$ is only supported on one action, we call $\pi$ a deterministic policy, and we abuse the notation $\pi(s)\in\calA$ to denote the action $\pi$ chooses on state $s$. 

Without loss of generality, we assume that the state space can be partitioned into $H+1$ disjoint layers $\calS=\calS_0\cup \calS_1\cup \cdots \cup\calS_H$, and the transition is only possible from one layer to the next (i.e., $P(\cdot|s,a)$ is only supported on $\calS_{h+1}$ if $s\in\calS_h$). Without loss of generality, we assume that $\calS_0=\{s_0\}$ (initial state) and $\calS_H=\{s_H\}$ (terminal state). Also, since there is at least one state on each layer, it holds that $H\leq S$. Let $h(s)$ denotes the layer where state~$s$ lies. 

The environment decides $P$ and $\{\ell_t\}_{t=1}^T$ ahead of time. In episode $t$, the learner decides on a policy $\pi_t$. Starting from the initial state $s_{t,0}=s_0$, the learner repeatedly draws action $a_{t,h}$ from $\pi_t(\cdot|s_{t,h})$ and transitions to the next state $s_{t,h+1}\in \calS_{h+1}$ following $s_{t,h+1}\sim P(\cdot|s_{t,h},a_{t,h})$, until it reaches the terminal state $s_{t,H}=s_H$. The learner receives $\{\ell_t(s_{t,h}, a_{t,h})\}_{h=0}^{H-1}$ at the end of episode $t$. 

For a policy $\pi$ and a loss function $\ell$, we define $V^{\pi}(s_H;\ell)=0$ and recursively define 
\begin{align}
    Q^{\pi}(s,a;\ell) &= \ell(s,a) + \E_{s'\sim P(\cdot|s,a)}\left[V^{\pi}(s';\ell)\right], \nonumber \\
    V^{\pi}(s;\ell) &= \sum_{a\in\calA} \pi(a|s)Q^{\pi}(s,a;\ell), \label{eq: Q, V def}
\end{align}
which are the standard \emph{state-action value function} and \emph{state value function} under policy $\pi$ and loss function $\ell$. 

The learner's \emph{regret} with respect to a policy $\pi$ is defined as 
\begin{align*}
    \Reg(\pi) = \E\left[\sum_{t=1}^T (V^{\pi_t}(s_0; \ell_t) - V^{\pi}(s_0; \ell_t))\right]. 
\end{align*}

\subsection{Known and unknown transition}
Following \citet{jin2020simultaneously, jin2021best}, we consider both scenarios where the learner knows the transition kernel $P$ and where he does not know it. 

The empirical transition is defined by the following: 
\begin{align*}
    \hatP_t(s'|s,a) = \frac{n_t(s,a,s')}{n_t(s,a)} 
\end{align*}
where $n_t(s,a)$ is the number of visits to $(s,a)$ prior to episode $t$, and $n_t(s,a,s')$ is the number of visits to $s'$ after visiting $(s,a)$, prior to episode $t$. If $n_t(s,a)=0$, we define $\hatP_t(\cdot|s,a)$ to be uniform over the states on layer $h(s)+1$. 

In the unknown transition case, we define the confidence set of the transition: 
\begin{align}
            &\scalemath{0.96}{\calP_t = \Bigg\{\tildeP:  \forall h, \forall (s,a)\in \calS_h\times \calA,\ \  \tildeP(\cdot|s,a)\in\triangle(\calS_{h+1}),} \nonumber \\
         &\scalemath{0.96}{\left|\tildeP(s'|s,a) - \hatP_t(s'|s,a)\right|\leq 2\sqrt{\frac{\hatP_t(s'|s,a)]\iota}{n_t(s,a)}} + \frac{14\iota}{3n_t(s,a)}\Bigg\} } \label{eq: definition of transition P}
        \end{align}
        where $\iota=\ln(SAT/\delta)$. As shown in \cite{jin2020simultaneously}, $P\in \bigcap_{t=1}^T \calP_t$ with probability at least $1-4\delta$. Through out the paper, we use $\delta=\frac{1}{T^3}$. 

For an arbitrary transition kernel $\tildeP$, define 
\begin{align*}
    \mu^{\tildeP, \pi}(s,a) = \sum_{h=0}^{H}\Pr(s_h=s, a_h=a~|~\pi, \tildeP), 
\end{align*}
where $\Pr(\cdot|\pi, \tildeP)$ denotes the probability measure induced by policy $\pi$ and transition kernel $\tildeP$. Furthermore, define $\mu^{\tildeP,\pi}(s)=\sum_a \mu^{\tildeP,\pi}(s,a)$. 
We write $\mu^{\pi}(s)=\mu^{P,\pi}(s)$ and $\mu^\pi(s,a)=\mu^{P,\pi}(s,a)$ where $P$ is the true transition. 
Define the upper and lower confidence measure as
\begin{align*}
    \overline{\mu}_t^{\pi}(s) = \max_{\tildeP\in\calP_t} \mu^{\tildeP,\pi}(s), \qquad \underline{\mu}_t^{\pi}(s) = \min_{\tildeP\in\calP_t} \mu^{\tildeP,\pi}(s). 
\end{align*}
Finally, define $V^{\tildeP, \pi}(s;\ell)$ and $Q^{\tildeP,\pi}(s,a;\ell)$ to be similar to \eqref{eq: Q, V def}, with the transition kernel replaced by $\tildeP$. 

\subsection{Adversarial versus stochastic regimes}
We analyze our algorithm in two regimes: the \emph{adversarial} regime and the \emph{stochastic} regime. In both regimes, the transition $P$ is fixed throughout all episodes. In the \emph{adversarial} regime, the loss functions $\{\ell_t\}_{t=1}^T$ are determined arbitrarily ahead of time. In the \emph{stochastic} regime, $\ell_t$ are generated randomly, and there exists a deterministic policy $\pi^\star$, a gap function $\Delta: \calS\times \calA\rightarrow \mathbb{R}_{\geq 0}$, and $\{\lambda_t(\pi)\}_{t,\pi}\subset\mathbb{R}$ such that for any policy $\pi$ and any~$t$,  
\begin{align*}
    &\E\left[V^{\pi}(s_0;\ell_t) - V^{\pi^\star}(s_0;\ell_t)\right] = \sum_{s}\sum_{a\neq \pi^\star(s)}\mu^{\pi}(s,a)\Delta(s,a) - \lambda_t(\pi).  
\end{align*}
If $\lambda_t(\pi)\leq 0$ for all $\pi$, the condition above certifies that $\pi^\star$ is the optimal policy in episode $t$, and every time $\pi$ visits state~$s$ and chooses an action $a\neq \pi^\star(s)$, the incurred regret against $\pi^\star$ is at least $\Delta(s,a)$. The amount $[\lambda_t(\pi)]_+$ thus quantifies how much the condition above is violated. The stochastic regime captures the standard RL setting (i.e.,  $\{\ell_t\}$ are i.i.d.) with $\lambda_t(\pi)= 0$ and $\Delta(s,a)=\E\left[Q^{\pi^\star}(s,a;\ell_t) - V^{\pi^\star}(s;\ell_t)\right]$. Define $\Delta_{\min}=\min_{s}\min_{a\neq \pi^\star(s)}\Delta(s,a)$. Also, define 
$
    \calC= \left(\E\left[\sum_{t=1}^T \lambda_t(\pi_t)\right]\right)_+$ 
    and $ \calC(\pi)=\left(\sum_{t=1}^T \lambda_t(\pi)\right)_+. 
$

\section{Main Results and Techniques Overview}
Our main results with Tsallis entropy and log barrier regularizers are the following (see \pref{sec: proof sketch} and \pref{app: final reg} for results with Shannon entropy): 
\begin{theorem}\label{thm: known main lemma}
    Under known transitions, \pref{alg: template} with Tsallis entropy regularizer ensures for any $\pi$ 
    \begin{align*}
        \Reg(\pi) \lesssim \sqrt{H^3SAT\ln (T)} +\poly(H,S,A)\ln(T)
    \end{align*}
    in the adversarial case, 
    and 
    \begin{align}
        &\Reg(\pi) \lesssim U+\sqrt{U\calC} + \poly(H,S,A)\ln(T) \label{eq: C bound known transition}
    \end{align}
    in the stochastic case, where
    $U=\sum_{s}\sum_{a\neq \pi^\star(s)} \frac{H^2\ln(T)}{\Delta(s,a)}$\footnote{A lower bound in \cite{xu2021fine} shows that an $S/\Delta_{\min}$ dependence is inevitable even when the transition is known. However, this lower bound only holds when there exist multiple optimal actions on $\Omega(S)$ of the states, while our gap bound is finite only when the optimal action is unique on all states. Therefore, our upper bound does not violate their lower bound. }. 
\end{theorem}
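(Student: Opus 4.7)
The plan is to reduce MDP regret analysis to $S$ coupled per-state online learning problems and then apply Tsallis-entropy FTRL with a carefully tuned exploration bonus and learning rate schedule. The starting point is the standard performance difference decomposition
\begin{align*}
\E\!\left[V^{\pi_t}(s_0;\ell_t)-V^\pi(s_0;\ell_t)\right]
= \E\!\left[\sum_s \mu^\pi(s)\,\langle \pi_t(\cdot|s)-\pi(\cdot|s),\, Q^{\pi_t}(s,\cdot;\ell_t)\rangle\right],
\end{align*}
which turns each state $s$ into an adversarial bandit with ``loss vector'' $Q^{\pi_t}(s,\cdot;\ell_t)\in[0,H]^A$. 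Since transitions are known, I would form an unbiased importance-weighted estimator $\hatQ_t(s,a)$ from a single trajectory---with second moment scaling as $H^2/\mu^{\pi_t}(s,a)$---and feed FTRL with Tsallis entropy the shifted loss $\hatQ_t(s,a)-B_t(s,a)$, where $B_t$ is a Tsallis-style bonus designed both to offset the bias of the IW estimator and to expose a self-bounding structure.

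For the adversarial bound I would invoke the standard Tsallis FTRL local regret at each state $s$ in the style of \citet{zimmert2019optimal}, then weight by $\mu^\pi(s)$, sum over states using $\sum_s \mu^\pi(s)=H$, and optimize the shared learning rate $\eta_t$ to obtain the advertised $\sqrt{H^3SAT\ln T}$ rate. Specifically, the per-state stability term is of order $\tfrac{\sqrt{A}}{\eta_T} + H^2\sum_t \eta_t \sum_a \sqrt{\pi_t(a|s)}$, and an appropriate Cauchy--Schwarz over states, together with a $\poly(H,S,A)\ln T$ bias-absorption pass through the bonus, delivers the first inequality of the theorem.

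For the stochastic case I would apply the self-bounding trick. From the stochastic-regime identity
\begin{align*}
\sum_t \E[V^{\pi_t}(s_0;\ell_t)-V^{\pi^\star}(s_0;\ell_t)]
= \sum_t \sum_{s}\sum_{a\neq \pi^\star(s)}\E[\mu^{\pi_t}(s,a)]\Delta(s,a) - \E\sum_t\lambda_t(\pi_t),
\end{align*}
and the Tsallis stability term rewritten using $\mu^{\pi_t}(s,a)=\mu^{\pi_t}(s)\pi_t(a|s)$, I would apply AM--GM of the form $H\sqrt{\pi_t(a|s)}\leq \beta \mu^{\pi_t}(s,a)\Delta(s,a) + H^2/(\beta\Delta(s,a))$ to split the stability term into (i) a fraction of the regret itself, which is absorbed on the left-hand side, and (ii) the target $U=\sum_{s,a\neq \pi^\star(s)} H^2\ln(T)/\Delta(s,a)$. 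A second AM--GM pass handles the corruption violation $\calC$, yielding $\Reg(\pi)\lesssim U+\sqrt{U\calC}+\poly(H,S,A)\ln T$.

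The main obstacle will be designing the per-state bonus $B_t$ and learning rate $\eta_t$ so that they simultaneously (a) cancel the IW bias of $\hatQ_t$ without breaking the Tsallis self-bounding inequality, and (b) allow the stability term to share a single learning rate across all states, even though each state is ``played'' only a $\mu^{\pi_t}(s)$ fraction of the episodes. Unlike in multi-armed bandits, the coupling between the reachability $\mu^{\pi_t}(s)$, the bonus, and the Tsallis-entropy local regret is exactly where a naive per-state MAB analysis would lose factors of $S$ or $H$ or destroy the gap-dependent self-bound; the non-trivial bookkeeping needed to align these three ingredients is the technical heart of the argument.
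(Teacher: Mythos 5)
Your high-level plan (per-state Tsallis FTRL on $\hatQ_t(s,\cdot)$, bonus to cancel bias, self-bounding for the stochastic case) is in the right spirit, but it has a concrete gap exactly where you flag ``the main obstacle,'' and the fix is not a matter of bookkeeping but of using a different decomposition.

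You start from the plain performance-difference identity, which weights the per-state regret at $s$ by $\mu^\pi(s)$ (the \emph{comparator's} occupancy). The Tsallis stability term at state $s$, after taking expectations of the importance-weighted estimator, is of order $\sum_t \eta_t H^2 \sum_a \sqrt{\pi_t(a|s)}/\mu^{\pi_t}(s)$. Multiplying this by $\mu^\pi(s)$ produces the distribution-mismatch ratio $\mu^\pi(s)/\mu^{\pi_t}(s)$, which can be $\Theta(T)$ large. Your AM--GM step then cannot be executed: to split $H\cdot(\text{stability})$ into ``a fraction of regret plus $H^2/\Delta$'' you need the stability to appear weighted by $\mu^{\pi_t}(s)$, because the self-bounding target in the stochastic regime is $\sum_t\sum_s\sum_{a\neq\pi^\star(s)} \mu^{\pi_t}(s,a)\Delta(s,a)$, not a $\mu^\pi$-weighted quantity. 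As written, the inequality $H\sqrt{\pi_t(a|s)}\leq\beta\mu^{\pi_t}(s,a)\Delta(s,a)+H^2/(\beta\Delta(s,a))$ is not an instance of AM--GM and doesn't follow.

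The paper resolves this by not using the plain performance-difference lemma but the \emph{dilated-bonus} identity (Lemma~\ref{lem: dilated bonus thm}, from \citet{luo2021policy}): if the per-state FTRL regret can be bounded by $X^\pi(s)+\sum_t b_t(s)+\frac{1}{H}\sum_t\sum_a\pi_t(a|s)B_t(s,a)$ where $B_t$ is the dilated trajectory-sum of $b_t$, then the bonus contribution to the final regret becomes $\sum_t V^{\tildeP_t,\pi_t}(s_0;b_t)$, i.e.\ weighted by $\mu^{\pi_t}$ rather than $\mu^\pi$. This is precisely what makes the self-bounding go through. Two further details you glossed over that matter: (i) the learning rate must be state-dependent, $\eta_t(s)\approx (H\sqrt{\sum_{\tau\le t}1/\mu_\tau(s)})^{-1}$, not shared across states, since each state is visited with a different frequency; a shared $\eta_t$ cannot simultaneously give $\sqrt{T}$ worst-case and per-state gap dependence. (ii) The estimator $\hatQ_t$ is deliberately \emph{biased} via the implicit-exploration term $\gamma_t$ in the denominator; the paper compensates for this bias with a separate bonus $c_t$ handled by a distinct analysis (Lemma~\ref{lem: biasterm known}), and keeps $b_t$ (the FTRL overhead) as the only part fed into the dilated-bonus recursion so that self-bounding applies to it cleanly. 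Lumping both into a single $B_t$, as you propose, would break the structure needed for the stochastic bound. Finally, even the loss-shifting trick used to get the $\xi_t(s)=\sum_a\sqrt{\pi_t(a|s)}(1-\pi_t(a|s))$ factor (rather than $\sqrt{A}$) in the stability term requires a conditional indicator $\ind[\eta_t(s)/\mu_t(s)\le 1/(8H)]$ baked into $b_t$; without it, the stochastic-case constant in $U$ degrades. None of these is a cosmetic refinement---each is needed to close the argument you sketched.
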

Our bounds in both regimes are similar to those of \citet{jin2021best} up to the definition of $U$ under their parameter $\gamma=\frac{1}{H}$ (tuning $\gamma$ trades their bounds between the two regimes; see their Appendix A.3). Compared with our definition of $U$, theirs involves an additional additive term $\frac{\text{poly}(H)S\ln(T)}{\Delta_{\min}}$ even under the assumption that the optimal action is unique on all states.  
\begin{theorem}\label{thm: unknown}
    Under unknown transitions, \pref{alg: template} with Tsallis entropy regularizer ensures for any $\pi$ 
    \begin{align*}
        \Reg(\pi) \lesssim \sqrt{H^4S^2AT\ln (T)\iota}  +\poly(H,S,A)\ln(T)\iota
    \end{align*}
    in the adversarial case, 
    and 
    \begin{align}
        &\Reg(\pi) \lesssim U+\sqrt{U(\calC+\calC(\pi))}  +\poly(H,S,A)\ln(T)\iota \label{eq: unknown stochastic bound}
    \end{align}
    in the stochastic case, where
    $U=\frac{H^4S^2A\ln(T)\iota}{\Delta_{\min}}$ and $\iota=\ln(SAT)$. 
\end{theorem}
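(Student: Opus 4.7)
The plan is to extend the proof of \pref{thm: known main lemma} by treating transition uncertainty as an additional source of bias that the exploration bonus must absorb. At a high level, the algorithm will still run a local FTRL/OMD update at each state with a Tsallis regularizer $\psi_s(\pi(\cdot|s))=-\sum_a \sqrt{\pi(a|s)}$, but the loss estimator fed to the update is $\hatQ_t$ built from (i) an importance-weighted estimate of $\ell_t$ using the lower-confidence occupancy $\underline{\mu}_t^{\pi_t}$ in the denominator, and (ii) an optimistic bonus $b_t(s,a)$ that upper-bounds the bilinear error from using $\hatP_t$ in place of $P$. Writing the per-step instantaneous regret as $V^{\pi_t}(s_0;\ell_t)-V^{\pi}(s_0;\ell_t)$ and plugging in the standard performance-difference-plus-confidence decomposition (as in \citet{jin2020simultaneously,luo2021policy}), I would split the total regret into four groups: (a) a \emph{bias} term that accounts for the gap between the true Q-value and what the algorithm sees, (b) an \emph{extra-bias} term coming from $P\neq \hatP_t$, (c) an \emph{ftrl-reg} term controlled by the Tsallis stability+penalty calculus, and (d) a \emph{bonus} term $\sum_t \mu^{\pi_t}(\cdot)b_t$ minus $\sum_t\mu^{\pi}(\cdot)b_t$.

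For the first two groups, I would calibrate $b_t(s,a)$ via the Bernstein-type confidence width $\sum_{s'}\sqrt{\hatP_t(s'|s,a)\iota/n_t(s,a)}\,V^{\pi_t}(s';\ell_t) + H\iota/n_t(s,a)$ so that, by the confidence-set event $P\in\calP_t$, the two bias contributions telescope into $\sum_t\mu^{\pi_t}(\cdot)b_t$ up to lower-order terms; the standard pigeonhole on visit counts $\sum_t 1/\sqrt{n_t(s,a)}\lesssim \sqrt{T\mu^{\pi_t}(s,a)}$ will be needed. For the FTRL term, I would use the Tsallis stability lemma to obtain $\mathrm{ftrl\text{-}reg}\lesssim \sum_t\sum_{s,a}\eta_t(s)\sqrt{\mu^{\pi_t}(s,a)}\,(Q\text{-norm})+\sum_s \eta_t(s)^{-1}\sqrt{A}$, and schedule $\eta_t(s)$ adaptively (in the spirit of \citet{zimmert2019optimal,jin2021best}) so it shrinks like $1/\sqrt{t}$ in the worst case but self-adjusts in the stochastic regime.

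Summing everything and collecting, the adversarial bound will follow by Cauchy--Schwarz in $(t,s,a)$ and the visit-count pigeonhole: the leading term contributes $\sqrt{H^3SAT\iota}$ from the FTRL side and $\sqrt{H^4S^2AT\iota}$ from the bonus side (the extra $H\sqrt{S}$ is exactly the unknown-transition overhead), giving the stated adversarial rate. For the stochastic bound, I would invoke the self-bounding trick: every dominant term in the regret upper bound has the form $\sqrt{\sum_{t,s,a\neq\pi^\star(s)}\mu^{\pi_t}(s,a)\cdot g(s,a)}$ for some $g$ of order $H^4S^2A\iota$, while the assumption on the stochastic regime yields $\Reg(\pi^\star)+\calC+\calC(\pi^\star)\ge \sum_{t,s,a\neq\pi^\star(s)}\mu^{\pi_t}(s,a)\Delta(s,a)$. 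Applying $\sqrt{xy}\le \tfrac{\lambda}{2}x+\tfrac{1}{2\lambda}y$ with $x=\sum_{t,s,a\neq \pi^\star(s)}\mu^{\pi_t}(s,a)\Delta(s,a)$ and $y=\sum_{s,a\neq\pi^\star(s)}g(s,a)/\Delta(s,a)$, and then moving the $\lambda$-term back to the left-hand side via the lower bound, yields $\Reg(\pi)\lesssim U+\sqrt{U(\calC+\calC(\pi))}$ with the advertised $U$.

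The hardest step, and the place where the unknown-transition analysis genuinely departs from \pref{thm: known main lemma}, is handling the \extrabias\ term in a way that keeps the self-bounding argument intact. Concretely, the bonus difference $\sum_t(\mu^{\pi_t}-\mu^{\pi})(\cdot)b_t$ does not vanish against a fixed comparator $\pi$; the $\mu^{\pi}$ contribution is what forces the appearance of the $\calC(\pi)$ term (which was absent in the known-transition bound) and requires a careful choice of $b_t$ so that $\mu^{\pi}(s,a)b_t(s,a)$ is bounded by $\sqrt{\mu^{\pi}(s,a)/n_t(s,a)}\cdot\sqrt{H^3 S\iota}$ and can itself be absorbed via a second self-bounding step using $\mu^{\pi}(s,a)\Delta(s,a)\le H$. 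The other subtle point is the adaptive learning rate: unlike in bandits, $\eta_t(s)$ must depend on an estimate of $\mu^{\pi_t}(s)$ that is robust to transition uncertainty (using $\underline{\mu}_t^{\pi_t}$ or $\overline{\mu}_t^{\pi_t}$ appropriately), and ensuring that its increments remain controlled throughout will require a doubling-style argument plus a high-probability bound on the deviation of empirical visit counts from $\sum_{s<t}\mu^{\pi_s}$.
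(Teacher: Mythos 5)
Your high-level plan (bonus-absorbed bias plus self-bounding) is directionally right, but several load-bearing details diverge from, or are missing relative to, what makes the argument go through. First, you propose building $\hatQ_t$ with the \emph{lower} confidence occupancy $\underline{\mu}_t^{\pi_t}$ in the denominator; the paper uses $\mu_t(s)=\overline{\mu}_t^{\pi_t}(s)+\gamma_t$. With $\underline{\mu}$ the estimator overestimates $Q^{\pi_t}$, so the bias would have the wrong sign and could not be covered by a nonnegative bonus; the upper-confidence denominator is what makes $\E[\hatQ_t]\leq Q^{\pi_t}$ so that the bonus $c_t(s)=\frac{\mu_t(s)-\underline{\mu}_t^{\pi_t}(s)}{\mu_t(s)}H$ can cover it. Second, and more fundamentally, you propose a single Bernstein-style bonus. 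A key point in the paper (see \pref{sec: dilated bonus}) is that the bonus must be split into an FTRL-overhead part $b_t$ (entering $B_t$) and an estimation-error part $c_t$ (entering $C_t$): self-bounding can only be applied to the $\sum_t V^{\tildeP_t,\pi_t}(s_0;b_t)$ term, not to the estimation-error part, so lumping them into one Bernstein bonus breaks the stochastic-regime analysis.

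Third, the concern you flag about ``$\sum_t(\mu^{\pi_t}-\mu^{\pi})(\cdot)b_t$ not vanishing'' is exactly the obstruction that the dilated bonus \pref{lem: dilated bonus thm} is designed to remove: with the $(1+1/H)$ dilation and the per-state FTRL regret bound \pref{eq: po regret bound}, the bonus contribution to the final regret is $3\E[\sum_t V^{\tildeP_t,\pi_t}(s_0;b_t)]$, with no $\mu^{\pi}$-weighted bonus term at all. The $\calC(\pi)$ term actually enters from a different place: after applying \pref{lem: biasterm known} one has a term $\sqrt{H^3S^2A\,\E[\sum_{t,s,a}[\mu^{\pi_t}(s,a)-\mu^{\pi}(s,a)]_+]\ln(T)\iota}$, which is bounded through $\sum_{s,a}|\mu^{\pi}(s,a)-\mu^{\pi^\star}(s,a)|\lesssim H\sum_{s,a\neq\pi^\star(s)}\mu^{\pi}(s,a)$ (\pref{lem: policy difference absolute}) together with $\Reg(\pi^\star)-\Reg(\pi)\geq \sum_t\sum_{s,a\neq\pi^\star(s)}\mu^{\pi}(s,a)\Delta_{\min}-\calC(\pi)$; to close the self-bounding one must also reduce to comparators with $\Reg(\pi)\geq\Reg(\pi^\star)$, a step your sketch omits. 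Finally, this term aggregates over all $(s,a)$ inside a single square root, which is why the unknown-transition $U$ uses $\Delta_{\min}$ rather than the per-$(s,a)$ sum $\sum_{s,a}1/\Delta(s,a)$ you implicitly assume. Without the bonus split, the dilated-bonus lemma, and this specific handling of the bias term, the stochastic bound with the stated $U$ and the $\calC(\pi)$ dependence does not come out.
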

In \citet{jin2021best}, for the stochastic case under unknown transition, a similar guarantee as \eqref{eq: unknown stochastic bound} is proven only for $\pi=\pi^\star$, with the case for general $\pi$ left open. We generalize their result by resolving some technical difficulties in their analysis. Overall, our bound in the stochastic regime improves that of \citet{jin2021best}, and the bound in the adversarial regime matches that of \citet{luo2021policy}. Notice that comparing \eqref{eq: unknown stochastic bound} with \eqref{eq: C bound known transition}, the bound under unknown transition involves an additional term $\calC(\pi)$. It remains open whether it can be removed. 

Finally, we provide a first-order best-of-both-world result under known transition. 

\begin{theorem}\label{thm: small loss}
    Under known transitions, \pref{alg: template} with log barrier regularizer ensures for any $\pi$ 
    \begin{align*}
        \Reg(\pi)\lesssim \sqrt{H^2SA\sum_{t=1}^T V^{\pi}(s_0;\ell_t)\ln^2(T)} + \poly(H,S,A)\ln^2(T)
    \end{align*}
    in the adversarial case, and 
    \begin{align*}
        \Reg(\pi)\lesssim U+\sqrt{U\calC} + \poly(H,S,A)\ln^2(T)
    \end{align*}
    in the stochastic case, where 
    $U = \sum_s\sum_{a\neq \pi^\star(s)}\frac{H^2\ln^2(T)}{\Delta(s,a)}$. 
\end{theorem}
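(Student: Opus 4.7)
The plan is to establish both claims from a single analysis of policy optimization with log-barrier regularization at each state, following the best-of-both-worlds template used for \pref{thm: known main lemma} and \pref{thm: unknown}. First I would invoke the performance difference lemma to decompose
\[
\Reg(\pi) = \E\left[\sum_{t=1}^T \sum_s \mu^\pi(s) \langle \pi_t(\cdot|s) - \pi(\cdot|s), Q^{\pi_t}(s, \cdot; \ell_t)\rangle\right],
\]
reducing the task to bounding a local bandit regret at each state~$s$ weighted by~$\mu^\pi(s)$. At each state the update is FTRL on $\triangle(\calA)$ with regularizer $\psi(p) = -\sum_a \log p(a)$ and per-state learning rate $\eta_t(s)$, fed an importance-weighted estimator $\hat Q_t(s, \cdot)$ of $Q^{\pi_t}(s, \cdot; \ell_t)$ built from the observed trajectory. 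Because the transition is known, no transition-uncertainty bonus is needed, but a dilated exploration bonus in the spirit of \citet{luo2021policy} is still required to couple exploration across states and to control the change-of-measure factor $\mu^\pi(s)/\mu^{\pi_t}(s)$.

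For the adversarial first-order bound, the standard log-barrier FTRL analysis yields a per-state regret bounded by the penalty $A\log(T)/\eta_t(s)$ plus the stability term $\eta_t(s)\sum_t \sum_a \pi_t(a|s)^2 \hat Q_t(s,a)^2$. Taking expectation over the trajectory, the IPW structure gives $\E[\pi_t(a|s)^2 \hat Q_t(s,a)^2] \lesssim \pi_t(a|s) Q^{\pi_t}(s,a;\ell_t)^2/\mu^{\pi_t}(s)$; trading one factor for $H$ via $Q^{\pi_t}(s,a;\ell_t)\leq H$ leaves $H \cdot V^{\pi_t}(s;\ell_t)/\mu^{\pi_t}(s)$. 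After multiplying by $\mu^\pi(s)$, summing over~$s$, canceling the ratio $\mu^\pi(s)/\mu^{\pi_t}(s)$ via the dilated bonus, and tuning $\eta_t(s)$ adaptively to balance penalty and stability, I obtain an intermediate bound of the form $\sqrt{H^2 SA \log^2(T) \sum_t V^{\pi_t}(s_0;\ell_t)} + \poly(H,S,A) \log^2 T$. A final self-bounding step using $\sum_t V^{\pi_t}(s_0;\ell_t) = \Reg(\pi) + \sum_t V^\pi(s_0;\ell_t)$ converts this into the claimed first-order bound in terms of the comparator's loss.

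For the stochastic bound, I would keep the stability in a finer per-state-action form, which after tuning $\eta_t(s)$ yields a contribution of size $\sqrt{(H^2\log^2(T)/\Delta(s,a)) \cdot \Delta(s,a) \E[\sum_t \mu^{\pi_t}(s,a)]}$ for each pair with $a \neq \pi^\star(s)$. Summing over $(s,a)$ and applying Cauchy-Schwarz yields $\Reg(\pi^\star) \lesssim \sqrt{U \cdot \sum_{s,a\neq\pi^\star(s)} \Delta(s,a)\E[\sum_t \mu^{\pi_t}(s,a)]} + \poly \log^2 T$. Using the stochastic-regime identity $\sum_{s,a\neq\pi^\star(s)} \Delta(s,a) \E[\sum_t \mu^{\pi_t}(s,a)] = \Reg(\pi^\star) + \calC$ and solving the resulting quadratic inequality in $\Reg(\pi^\star)$ gives $\Reg(\pi^\star) \lesssim U + \sqrt{U\calC} + \poly \log^2 T$. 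Since $\Reg(\pi) \leq \Reg(\pi^\star)$ whenever $\pi^\star$ is the optimal policy, the same bound transfers to an arbitrary comparator~$\pi$.

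The main obstacle is arranging the analysis so that a single algorithm yields both the first-order adversarial bound and the gap-dependent polylog stochastic bound. The log-barrier's local norm scales as $\pi_t(a|s)^2$, which causes the stability term to collapse neatly to a $V^{\pi_t}$-dependent quantity after expectation (enabling the first-order bound); however, the fixed $A\log T$ penalty per state---unlike Tsallis entropy---must be absorbed carefully into~$U$ via the per-action decomposition, and the dilated exploration bonus must be tuned so that it cancels the $\mu^\pi(s)/\mu^{\pi_t}(s)$ ratio in the adversarial analysis without inflating the bound in the stochastic regime. The adaptive learning-rate schedule (monotonically decreasing over time per state) also has to be compatible with the tree-structured bonus; verifying that these pieces fit together consistently is the central technical delicacy.
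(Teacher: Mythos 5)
Your high-level architecture matches the paper: per-state FTRL with a log-barrier regularizer, the dilated exploration bonus of \citet{luo2021policy} to control the $\mu^\pi/\mu^{\pi_t}$ mismatch, and a self-bounding step to derive both the adversarial and stochastic guarantees. But two load-bearing technical ideas are missing, and both are essential rather than routine bookkeeping.

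First, \emph{loss-shifting}. You take the stability term as $\eta_t(s)\sum_a\pi_t(a|s)^2\hat Q_t(s,a)^2$. That form may suffice for the adversarial first-order bound, but it cannot produce the gap-dependent stochastic bound: without shifting by $x_t=-\langle\pi_t,\hat Q_t\rangle$ there is no mechanism to exclude the optimal action $a=\pi^\star(s)$ from the per-state-action sum. The paper's stability, after the shift, becomes $\eta_t(s,a)\zeta_t(s,a)/\mu_t(s)^2$ with $\zeta_t(s,a)=(\ind_t(s,a)-\pi_t(a|s)\ind_t(s))^2 L_{t,h}^2$, whose conditional expectation carries the factor $\mu^{\pi_t}(s)\pi_t(a|s)(1-\pi_t(a|s))$; the $(1-\pi_t(a|s))$ vanishes as $\pi_t(\cdot|s)$ concentrates on $\pi^\star(s)$, which is exactly what allows the sum to be restricted to $a\neq\pi^\star(s)$ and fed into the self-bounding step. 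With the unshifted $\hat Q_t^2$ that factor is absent and the optimal action contributes a non-vanishing term, so the bound $U+\sqrt{U\calC}$ does not follow. Second, \emph{virtual episodes}. The stability lemma for log barrier requires $\eta_t(s,a)\pi_t(a|s)(\hat Q_t(s,a)-B_t(s,a)-C_t(s,a)+x_t)\geq -1/2$, and the log-barrier bonus $b_t$ scales as $1/\mu_t(s)^2$, so $\eta_t(s,a)\pi_t(a|s)B_t(s,a)$ can overshoot this threshold. You explicitly name ``verifying that these pieces fit together'' as the central delicacy but offer no mechanism. The paper's resolution—insert a virtual episode (zero loss) whenever $\max_{s,a}\eta_t(s,a)/\mu_t(s)$ is too large, shrink the offending learning rate geometrically, and charge the resulting penalty to a bonus on $s_t^\dagger$—is a genuine algorithmic change, not a proof-level observation, and without it the stability precondition fails. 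Finally, a smaller slip: the final transfer ``$\Reg(\pi)\leq\Reg(\pi^\star)$ whenever $\pi^\star$ is the optimal policy'' is not available in general, since the $\lambda_t$ terms in the stochastic regime may be positive and $\pi^\star$ need not minimize cumulative loss. The paper instead bounds $\Reg(\picirc)$ for $\picirc=\argmax_\pi\Reg(\pi)$, uses $\Reg(\picirc)\geq\Reg(\pi^\star)$ inside the self-bounding inequality, and then concludes via $\Reg(\pi)\leq\Reg(\picirc)$.
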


In the next two subsections, we overview the challenges we faced and the techniques we used to obtain our results. 
\subsection{Exploration bonus for policy optimization}\label{sec: dilated bonus}
In the tabular case, a policy optimization algorithm can be viewed as running an individual bandit algorithm on each state. Our algorithm is built upon the policy optimization framework developed by \citet{luo2021policy}, who achieve near-optimal worst-case regret in adversarial MDPs.  Their key idea is summarized in the next lemma. 

\begin{lemma}[Lemma B.1 of \citet{luo2021policy}]\label{lem: dilated bonus thm}
    Suppose that for some $\{b_t\}_{t=1}^T$ and $\{\calP_t\}_{t=1}^T$, where each $b_t: \calS\rightarrow \mathbb{R}_{\geq 0}$ is a non-negative bonus function and each $\calP_t$ is a set of transitions, it holds that
    \begin{align}
    &\scalemath{1}{B_t(s,a)=b_t(s)+} \scalemath{1}{\left(1+\frac{1}{H}\right)\max_{\tildeP\in\calP_t}\E_{s'\sim \tildeP(\cdot|s,a), a'\sim \pi_t(\cdot|s')}\left[B_t(s',a')\right]}. \label{eq: dil bonus}
\end{align}
    Also, suppose that the following holds for a policy $\pi$ and a function $X^\pi: \calS\rightarrow \mathbb{R}$: 
    \begin{align}
        &\scalemath{1}{\E\left[\sum_{t=1}^T  \sum_{a} \left(\pi_t(a|s) -  \pi(a|s)\right) \left(Q^{\pi_t}(s,a;\ell_t) -  B_t(s,a)\right)\right]}   \nonumber \\
        &\scalemath{1}{\leq X^\pi(s) + \E\left[\sum_{t=1}^T b_t(s)  + \frac{1}{H}\sum_{t=1}^T\sum_{a}\pi_t(a|s)B_t(s,a)\right]}  \label{eq: po regret bound}
    \end{align}
    Then $\Reg(\pi)$ is upper bounded by
    \begin{align}
        &\sum_s \mu^{\pi}(s)X^\pi(s) + 3\E\left[\sum_{t=1}^T V^{\tildeP_t, \pi_t}(s_0; b_t)\right]+\E[\calF]  \label{eq: final regret dilated} 
    \end{align}
    where $\tildeP_t$ is the $\tildeP$ that attains the maximum in \eqref{eq: dil bonus}, and $\calF=HT\ind\{\exists t\in[T], P\notin\calP_t\}$.  
\end{lemma}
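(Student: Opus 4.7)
My plan is to start from the standard performance difference identity
\[ V^{\pi_t}(s_0;\ell_t) - V^{\pi}(s_0;\ell_t) = \sum_s \mu^{\pi}(s)\sum_a (\pi_t(a|s)-\pi(a|s))\, Q^{\pi_t}(s,a;\ell_t), \]
and to split $Q^{\pi_t}=(Q^{\pi_t}-B_t)+B_t$. Summing over $t$, weighting by $\mu^{\pi}(s)$, and taking expectation, the first half matches the left-hand side of the assumption \eqref{eq: po regret bound} state by state, so the assumption yields $\sum_s \mu^{\pi}(s) X^\pi(s)$ together with a residual expectation of $\sum_t \sum_s \mu^{\pi}(s)\bigl[b_t(s) + \tfrac{1}{H} g_t(s) + g_t(s) - f_t(s)\bigr]$, where I abbreviate $g_t(s)=\sum_a \pi_t(a|s)B_t(s,a)$ and $f_t(s)=\sum_a \pi(a|s)B_t(s,a)$. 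All that remains is to bound this residual by $3\,\E[\sum_t V^{\tildeP_t,\pi_t}(s_0;b_t)]+\E[\calF]$.

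To simplify the residual I apply the dilated recursion \eqref{eq: dil bonus} inside $f_t(s)$, obtaining $f_t(s)=b_t(s)+(1+1/H)\sum_a\pi(a|s)\E_{s'\sim \tildeP_t(\cdot|s,a)}[g_t(s')]$. The $b_t(s)$'s cancel and what remains per episode is
\[ (1+\tfrac{1}{H})\Bigl[\sum_s \mu^{\pi}(s) g_t(s) - \sum_{s,a}\mu^{\pi}(s,a)\,\E_{s'\sim \tildeP_t(\cdot|s,a)}[g_t(s')]\Bigr]. \]
I then invoke the flow identity $\sum_{s,a}\mu^{\pi}(s,a)P(s'|s,a)=\mu^{\pi}(s')-\ind\{s'=s_0\}$, which holds because $\mu^{\pi}$ is the occupancy under the true transition and transitions always move to the next layer, to rewrite the bracket as $g_t(s_0)+\sum_{s,a}\mu^{\pi}(s,a)\bigl(\E_{s'\sim P}-\E_{s'\sim \tildeP_t(\cdot|s,a)}\bigr)[g_t(s')]$.

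On the good event $P\in\bigcap_t\calP_t$ the maximizer $\tildeP_t(\cdot|s,a)$ in \eqref{eq: dil bonus} equivalently maximizes $\E_{s'\sim \tildeP}[g_t(s')]$ over $\tildeP\in\calP_t$, so by non-negativity of $g_t$ the transition-mismatch correction is non-positive and the bracket is at most $g_t(s_0)$. Unrolling \eqref{eq: dil bonus} from $s_0$ under $(\pi_t,\tildeP_t)$ gives $g_t(s_0)=\sum_{h=0}^{H-1}(1+1/H)^h\,\E[b_t(s_h)\mid s_0,\pi_t,\tildeP_t]$, from which $(1+1/H)\,g_t(s_0)\leq(1+1/H)^H\, V^{\tildeP_t,\pi_t}(s_0;b_t)\leq e\cdot V^{\tildeP_t,\pi_t}(s_0;b_t)\leq 3\,V^{\tildeP_t,\pi_t}(s_0;b_t)$; this is precisely the advertised constant. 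On the complementary bad event the per-episode regret is bounded trivially by $H$, so its contribution to $\Reg(\pi)$ is at most $\E[HT\cdot\ind\{\exists t:P\notin\calP_t\}]=\E[\calF]$.

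The main obstacle is the middle rearrangement: one must line up the $1/H$ slack from the assumption with the $(1+1/H)$ dilation of the recursion so that the two $b_t(s)$ terms cancel cleanly, and then recognize that the maximality built into the definition of $\tildeP_t$ is exactly what eliminates the transition-mismatch correction when $P\in\calP_t$. Once those two cancellations are in place, the geometric factor $(1+1/H)^H\leq e\leq 3$ accumulated across the $H$ layers is the sole source of the final absolute constant, while the remaining uncontrolled event is swept into $\E[\calF]$.
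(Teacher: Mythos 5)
The paper does not actually prove this lemma; it is imported verbatim from Luo et al.\ (2021), so there is no in-paper proof to compare against. That said, the core of your argument --- performance difference, splitting $Q = (Q - B) + B$, applying the per-state assumption, expanding $f_t(s)$ via the dilated recursion, telescoping with the flow identity $\sum_{s,a}\mu^{\pi}(s,a)P(s'|s,a)=\mu^{\pi}(s')-\ind\{s'=s_0\}$, using optimality of $\tildeP_t$ to drop the mismatch term on the good event, and unrolling to get $(1+1/H)^H \le e < 3$ --- is correct and is the right skeleton for this result.

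The one step that does not hold up is the bad-event patch. You invoke assumption \eqref{eq: po regret bound} \emph{unconditionally}, which after the algebra reduces the residual to $(1+\tfrac1H)\E[\sum_t g_t(s_0)] + (1+\tfrac1H)\E[\sum_t D_t]$ with $D_t \triangleq \sum_{s,a}\mu^\pi(s,a)\bigl(\E_{s'\sim P}-\E_{s'\sim\tildeP_t}\bigr)[g_t(s')]$. You correctly note $D_t\le 0$ when $P\in\calP_t$, and then assert that the bad event's contribution is at most $\E[\calF]$ because the per-episode regret is at most $H$. These two claims do not compose: after the unconditional reduction, what needs controlling is $\E[\ind_{\calE^c}\sum_t D_t]$, and $D_t$ on the bad event is \emph{not} bounded by $H$ --- it is bounded only by a multiple of $H\max_{s}g_t(s)$, which scales with $b_t$ and is not constrained by the lemma's hypotheses. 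The "per-episode regret $\le H$" bound controls $R_t\ind_{\calE^c}$, which is a different quantity. To make the argument rigorous you must commit to one of two routes: either (i) split $\Reg(\pi) \le \E[\ind_\calE\sum_t R_t] + \E[\calF]$ at the very start and then deal with the fact that assumption \eqref{eq: po regret bound} only gives an \emph{unconditional} expectation bound, so the $\ind_{\calE^c}$-weighted remainders of both $\text{first}_t$ and the $b_t,g_t$ terms must be accounted for; or (ii) impose a boundedness hypothesis on $b_t$ (not part of the stated lemma) strong enough to make $\ind_{\calE^c}D_t$ comparable to $\ind_{\calE^c}H$. As written, the two halves of your proof do not fit together, even though each half contains a correct idea.
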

We refer the reader to Section 3 of \citet{luo2021policy} for intuition about \pref{lem: dilated bonus thm}. 
\pref{lem: dilated bonus thm} gives a general recipe to design the exploration bonus for policy optimization algorithms. 
Roughly speaking, the bonus function $b_t(s)$ is chosen to be the instantaneous regret of the bandit algorithm on state~$s$, which scales inversely with the probability of visiting state $s$ (i.e., $\frac{1}{\mu^{\pi_t}(s)}$). \pref{lem: dilated bonus thm} suggests that the bandit algorithm on state~$s$ should update itself using $Q^{\pi_t}(s,a;\ell_t)-B_t(s,a)$ as the loss, where $B_t(s,a)$ is the bonus. This ends up with a final regret term of $\sum_t V^{\pi_t}(s_0; b_t)$ in \eqref{eq: final regret dilated}. In contrast, if the bonus is not included, the final regret would involve $\sum_t V^{\pi}(s_0; b_t)$, which scales with the distribution mismatch coefficient $\frac{\mu^{\pi}(s)}{\mu^{\pi_t}(s)}$ that could be prohibitively large in the worst case.   

The bonus function $b_t(s)$ we use is slightly different from that in \citet{luo2021policy} though. We notice that the $b_t(s)$ defined in \citet{luo2021policy} has two parts: the first part is \emph{FTRL regret overhead}, which comes from the regret bound of the FTRL algorithm under the given loss estimator, and the second part comes from the \emph{estimation error} in estimating the transition kernel. In order to apply the self-bounding technique to obtain the best-of-both-worlds result, the second term in \eqref{eq: final regret dilated} can only involve the the first part (FTRL regret overhead) but not the second part (estimation error). Therefore, we split their bonus into two: our $b_t(s)$ only includes the first part, and $c_t(s)$ only includes the second part. This allows us to use self-bounding on the second term in \eqref{eq: final regret dilated}. Our $c_t(s)$ goes to the first term in \eqref{eq: final regret dilated} instead and is handled differently from \citet{luo2021policy}. More details are given in \pref{sec: algorithm} and \pref{sec: proof sketch}. 

\subsection{Adaptive learning rate tuning and bonus design}\label{sec: FTRL technique}
Our algorithm heavily relies on carefully tuning the learning rates and assigning a proper amount of bonus. These two tasks are intertwined with each other and introduce new challenges that are not seen in the global regularization approach \citep{jin2021best} or policy optimization approach that only aims at a worst-case bound \citep{luo2021policy}. Below we give a high-level overview for the challenges. 

In the FTRL analysis, a major challenge is to handle losses that are overly negative\footnote{Losses here refer not only to the loss from the environment, but also loss estimators or bonuses constructed by the algorithm. }. Typically, if the learning rate is $\eta$ and the negative loss of action $a$ has a magnitude of $R$, we need $\eta p(a)^\beta R\leq 1$ in order to keep the algorithm stable, where $p(a)$ is the probability of choosing action $a$, and $\beta\in[0,1]$ is a parameter related to the choice of the regularizer ($\frac{1}{2}$ for Tsallis entropy, $0$ for Shannon entropy, and $1$ for log barrier). In our case, there are two places we potentially encounter overly negative losses. One is when applying the standard \emph{loss-shifting} technique for best-of-both-world bounds (see \citet{jin2021best}). The loss-shifting effectively creates a negative loss in the analysis. The other overly negative loss is the bonus we use to obtain the first-order bound. 

For the first case, we develop a simple trick that only performs loss-shifting when the introduced negative loss is not too large, and further show that the extra penalty due to ``not performing loss-shifting'' is well-controlled. This is explained in \pref{sec: tsallis alg}.  For the second case, we develop an even more general technique (which can also cover the first case). This technique can be succinctly described as ``inserting virtual episodes'' when $\eta p(a)^\beta R$ is potentially too large. In virtual episodes, the losses are assumed to be all-zero (because the learner actually does not interact with the environment in these episodes) and the algorithm only updates over some bonus term. The goal of the virtual episodes is solely to tune down the learning rate $\eta$ and prevent $\eta p(a)^\beta R$ from being to large in real episodes. Similarly, we are able to show that the extra penalty due to virtual episodes is well-controlled. This is explained in \pref{sec: log barrier}.

\begin{algorithm*}[t]
\LinesNumbered
\caption{Policy Optimization} \label{alg: template}
    \textbf{Define}: $\psi_t(\pi;s)$, $b_t(s)$ are defined according to \pref{fig: psi and b choices},\  $\gamma_t\triangleq \min\left\{\frac{10^6H^4A^2}{t}, 1\right\}$.   \\
    \For{$t=1,2,\ldots$}{
        \begin{align}
            \pi_{t}(\cdot|s) = \argmin_{\pi\in\triangle(\calA)}\left\{ \sum_{\tau=1}^{t-1} \sum_a \pi(a)\left(\hatQ_\tau(s,a) - B_\tau(s,a) - C_\tau(s,a)\right) +   \psi_t(\pi;s)\right\}   \label{eq: pi_t def}
        \end{align}
        Add a virtual round if needed (only when aiming to get a first-order bound with log barrier  --- see \pref{sec: log barrier} \eqref{eq: check virtual round}). \\ 
        Execute $\pi_t$ in episode $t$, and receive $\{\ell_t(s_{t,h}, a_{t,h})\}_{h=0}^{H-1}$. \\
        \textbf{Define $\calP_t$: }
        Under known transition, define $\calP_t=\{P\}$. Under unknown transition, define $\calP_t$ by \eqref{eq: definition of transition P}.  
        \\
        \textbf{Define $\hatQ_t$: }
        For $s\in\calS_h$, let $\ind_t(s,a)=\ind\{(s_{t,h},a_{t,h})=(s,a)\}$,   
        $L_{t,h} = \sum_{h'=h}^{H-1} \ell_t(s_{t,h'}, a_{t,h'})$, and 
        \begin{align}
            \hatQ_t(s,a) = \frac{\ind_t(s,a)L_{t,h}}{\mu_t(s)\pi_t(a|s)}, \qquad \text{where\ }  \mu_t(s) =  \overline{\mu}_t^{\pi_t}(s) + \gamma_t.  \label{eq: hatQ def} 
        \end{align}
        \textbf{Define $C_t$: }
Let $c_t(s)=\frac{\mu_t(s) - \underline{\mu}_t^{\pi_t}(s)}{\mu_t(s)}H$, and compute $C_t(s,a)$ by
         \begin{align}               
           C_t(s,a)&=\max_{\tildeP\in\calP_t}\E_{s'\sim \tildeP(\cdot|s,a), a'\sim \pi_t(\cdot|s')}\Big[c_t(s') + C_t(s',a')\Big], \label{eq: C def}  
        \end{align} 
        
        \textbf{Define $B_t$: } Compute $B_t(s,a)$ by \eqref{eq: dil bonus}. 
    }
\end{algorithm*}

\section{Algorithm}\label{sec: algorithm}
The template of our algorithm is \pref{alg: template}, in which we can plug different regularizers. The template applies to both known transition and unknown transition cases --- the only difference is in the definition of the confidence set $\calP_t$. 

The policy update \eqref{eq: pi_t def} is equivalent to running individual FTRL on each state with an adaptive learning rate. The loss estimator  $\hatQ_t(s,a)$ defined in \eqref{eq: hatQ def} is similar to that in \citet{luo2021policy}: if $(s,a)$ is visited, it is the cumulative loss starting from $(s,a)$ divided by the \emph{upper occupancy measure} \citep{jin2019learning} of $(s,a)$; otherwise it is zero. One difference is that the ``implicit exploration'' factor $\gamma_t$ added to the denominator is of order $\frac{1}{t}$ in our case, while it is of order $\frac{1}{\sqrt{t}}$ in \citet{luo2021policy}. This smaller $\gamma_t$ allows us to achieve logarithmic regret in the stochastic regime. 

There are two bonus functions $c_t(s)$ and $b_t(s)$ defined in \eqref{eq: C def} and \pref{fig: psi and b choices}, respectively. As discussed in \pref{sec: dilated bonus}, the bonus functions are defined to be the instantaneous regret of the bandit algorithm on state $s$. 
The first bonus function $c_t(s)$ comes from the bias of the loss estimator. Our choice of $c_t(s)$ is such that $\forall a, Q^{\pi_t}(s,a;\ell_t)-\E[\hatQ_t(s,a)]\leq c_t(s)$. The second bonus function $b_t(s)$ is related to the regret of the FTRL algorithm under the given loss estimator, which is regularizer dependent. We will elaborate how to choose $b_t(s)$ for different regularizers later in this section. 

Finally, dynamic programming are used to obtain $C_t(s,a)$ and $B_t(s,a)$, which are trajectory sums of $c_t(s)$ and $b_t(s)$, with an $(1+\frac{1}{H})$ dilation on $B_t(s,a)$. They are then used in the policy update \eqref{eq: pi_t def}. 
In the following subsections, we discuss how we choose $b_t(s)$ and tune the learning rate for each regularizer.

\begin{figure*}[!htbp]
\caption{Definitions of $\psi_t(\pi;s)$ and $b_t(s)$ for different regularizers (to be used in \pref{alg: template}). } \label{fig: psi and b choices}

\relscale{0.86}
\begin{framed}
\textbf{Tsallis entropy}: 
    \begin{align}
         \psi_t(\pi;s) &= -\frac{2}{\eta_t(s)}\sum_a \sqrt{\pi(a)},  \\ 
        b_t(s) &=   4\left(\frac{1}{\eta_t(s)} - \frac{1}{\eta_{t-1}(s)}\right)\left(\xi_t(s) + \sqrt{A}\cdot\ind\left[\frac{\eta_t(s)}{\mu_t(s)} >\frac{1}{8H}\right]\right) + \nu_t(s),
                \label{eq: b_t Tsallis}
    \end{align}
    where
        \begin{align}
            \scalemath{0.97}{\eta_t(s) = \frac{1}{1600H^4\sqrt{A} + 4H\sqrt{\sum_{\tau=1}^t \frac{1}{\mu_\tau(s)} }}, \ \ 
            \xi_t(s) =\sum_{a}\sqrt{\pi_t(a|s)}(1-\pi_t(a|s)), \ \  \nu_t(s) = 8\eta_t(s)\sum_a \pi_t(a|s)C_t(s,a)^2.} \label{eq: Tsallis eta}
        \end{align}
        \vspace*{-15pt}
\end{framed}

\begin{framed}
\textbf{Shannon entropy}: 
   \begin{align}
            \psi_{t}(\pi;s) &= \sum_a \frac{1}{\eta_t(s,a)}\pi(a)\ln \pi(a), \\
            b_t(s) &=  8\sum_a\left(\frac{1}{\eta_t(s,a)} - \frac{1}{\eta_{t-1}(s,a)}\right)\left(\xi_t(s,a) +1-\frac{\min_{\tau\in[t]}\mu_\tau(s)}{\min_{\tau\in[t-1]}\mu_\tau(s)} \right) + \nu_t(s), \label{eq: def bt in shannon}
    \end{align}
    where
        \begin{align}
            \frac{1}{\eta_t(s,a)} &= \frac{1}{\eta_{t-1}(s,a)} + 4\left(\frac{H}{\mu_t(s) \sqrt{\sum_{\tau=1}^{t-1} \frac{\xi_\tau(s,a)}{\mu_\tau(s)}+ \frac{1}{\mu_t(s)}} } + \frac{H}{\sqrt{t}}\right)\sqrt{\ln T}, \quad \text{with\ } \frac{1}{\eta_0(s,a)} = 1600H^4A\sqrt{\ln T}, \label{eq: eta for Shannon}\\
            \xi_t(s,a) &= \min\{\pi_t(a|s)\ln (T), 1-\pi_t(a|s)\},  \qquad \nu_t(s) = 8\sum_a \eta_t(s,a)\pi_t(a|s)C_t(s,a)^2.
        \end{align}
         \vspace*{-15pt}
\end{framed}

\begin{framed}
\textbf{Log barrier (for first-order bound under known transition)}: 
   \begin{align}
            \psi_{t}(\pi;s) &= \sum_a \frac{1}{\eta_t(s,a)}\ln \frac{1}{\pi(a)}, \\
            b_t(s) &=  8\sum_a\left(\frac{1}{\eta_{t+1}(s,a)} - \frac{1}{\eta_{t}(s,a)}\right)\log(T) + \nu_t(s), \label{eq: b_t for log barrier}
    \end{align}
    where
        \begin{align}
            (s_t^\dagger, a_t^\dagger)&=\argmax_{s,a}\frac{\eta_t(s,a)}{\mu_t(s)} \tag{break tie arbitrarily}\\
            \frac{1}{\eta_{t+1}(s,a)} &=
            \begin{cases}
                \frac{1}{\eta_{t}(s,a)} + \frac{4\eta_{t}(s,a)\zeta_t(s,a) }{\mu_{t}(s)^2\log(T)} &\text{if $t$ is a real episode}\\ 
                \frac{1}{\eta_t(s,a)}\left(1+\frac{1}{24H\log T}\right)  &\text{if $t$ is a virtual episode and $(s_t^\dagger, a_t^\dagger)=(s,a)$} \\
                \frac{1}{\eta_t(s,a)} &\text{if $t$ is a virtual episode and $(s_t^\dagger, a_t^\dagger)\neq (s,a)$}
            \end{cases}
            \label{eq: log barrier eat def} \\
             \frac{1}{\eta_1(s,a)}&=4H^4, \\
            \zeta_t(s,a) &= \big(\ind_t(s,a)-\pi_t(a|s)\ind_t(s)\big)^2L_{t,h}^2 \qquad \text{where\ } \ind_t(s)=\sum_a \ind_t(s,a)\tag{suppose that $s\in\calS_h$}, \\ \nu_t(s) &= 8\sum_a  \eta_t(s,a)\pi_t(a|s)C_t(s,a)^2.
        \end{align}
         \vspace*{-15pt}
\end{framed}
\relscale{1.0}
\end{figure*}

\subsection{Tsallis entropy}\label{sec: tsallis alg}
$b_t(s)$ corresponds to the instantaneous regret of the bandit algorithm on state~$s$ under the given loss estimator. To obtain its form, we first analyze the regret assuming $B_t(s,a)$ is not included, i.e., only update on $\hatQ_t(s,a)-C_t(s,a)$ ($B_t(s,a)$ will be added back for analysis after the form of $b_t(s)$ is decided). 
Inspired by \citet{zimmert2019optimal} for multi-armed bandits, our target is to show that the instantaneous regret (see \pref{app: FTRL analysis} for details) on state~$s$ is upper bounded by 
\begin{align}
     \underbrace{\left(\frac{1}{\eta_t(s)} - \frac{1}{\eta_{t-1}(s)}\right)\xi_t(s)}_{\text{penalty term}} +\underbrace{ \frac{H^2\eta_t(s)\xi_t(s)}{\mu_t(s)}}_{\text{stability term}} + \nu_t(s)  \label{eq: two terms balancing}
\end{align}
where $\xi_t(s)=\sum_a \sqrt{\pi_t(a|s)}(1-\pi_t(a|s))\leq \sqrt{A}$, and $\nu_t(s)$ is some overhead due to the inclusion of $-C_t(s,a)$. The factor $\xi_t(s)$ allows us to use the self-bounding technique that leads to best-of-both-worlds bounds, which cannot be relaxed to $\sqrt{A}$ in general. Compared to the bound for multi-armed bandits in \cite{zimmert2019optimal}, the extra  $\frac{1}{\mu_t(s)}$ scaling in the stability term comes from importance weighting because state $s$ is visited with probability roughly $\mu_t(s)$.  
This desired bound suggests a learning rate scheduling of 
\begin{align}
    \eta_t(s)\approx \frac{1}{H\sqrt{\sum_{\tau=1}^t \frac{1}{\mu_\tau(s)} }}   \label{eq: eta t in tsallis}
\end{align}
to balance the penalty and the stability terms. This is exactly how we tune $\eta_t(s)$ in \eqref{eq: Tsallis eta}. However, to obtain the $\xi_t(s)$ factor in the stability term in \eqref{eq: two terms balancing}, we need to perform ``loss-shifting'' in the analysis, 
which necessitates the condition $ \frac{\eta_t(s)H}{\mu_t(s)}\lesssim 1$ as discussed in \pref{sec: FTRL technique}. From the choice of $\eta_t(s)$ in \eqref{eq: eta t in tsallis}, this condition may not always hold, but every time it is violated, $\eta_t(s)$ is decreased by a relatively large factor in the next episode. 

Our strategy is that whenever the condition $\frac{H\eta_t(s)}{\mu_t(s)}\lesssim 1$ is violated, we do not perform loss-shifting. This still allows us to prove a stability term of $\frac{H^2\eta_t(s)\sqrt{A}}{\mu_t(s)}$ for that episode. The key in the analysis is to show that the extra cost due to ``not performing loss-shifting'' is only logarithmic in $T$ (see the proof of \pref{lem: V(bt) known Tsallis}). Combining this idea with the instantaneous regret bound in \eqref{eq: two terms balancing} and the choice of $\eta_t(s)$ in \eqref{eq: eta t in tsallis}, we are able to derive the form of $b_t(s)$ in \eqref{eq: b_t Tsallis}.  After figuring out the form of $b_t(s)$ assuming $B_t(s,a)$ is not incorporated in the updates, we incorporate it back and re-analyze the stability term. The extra stability term due to $b_t(s)$ leads to a separate quantity $\frac{1}{H} \pi_t(a|s)B_t(s,a)$, which is an overhead allowed by~\eqref{eq: po regret bound}. 


%

\subsection{Shannon entropy}
The design of $b_t(s)$ under Shannon entropy follows similar procedures as in \pref{sec: tsallis alg}, except that the tuning of the learning rate is inspired by \citet{ito2022nearly}. One improvement over theirs is that we adopt coordinate-dependent learning rates that can give us a refined gap-dependent bound in the stochastic regime (in multi-armed bandits, this improves their $\max_{a}\frac{A}{\Delta(a)}$ dependence to $\sum_a \frac{1}{\Delta(a)}$). With Shannon entropy, there is less learning rate tuning issue because its optimal learning rate decreases faster than other regularizers, and there is no need to perform loss-shifting \citep{ito2022nearly}.  
The regret bound under Shannon entropy is overall worse than that of Tsallis entropy by a $\ln^2(T)$ factor.

\subsection{Log barrier}\label{sec: log barrier}
As shown by \citet{wei2018more, ito2021parameter}, FTRL with a log barrier regularizer is also able to achieve the best of both worlds, with the additional benefit of having \emph{data-dependent} bounds. In this subsection, we demonstrate the possibility of this by showing that under \emph{known} transition, \pref{alg: template} is able to achieve a first-order bound in the adversarial regime, while achieving $\text{polylog}(T)$ regret in the stochastic regime. 

To get a first-order best-of-both-world bound with log barrier, inspired by \citet{ito2021parameter}, we need to prove the following instantaneous regret for the bandit algorithm on~$s$: 
\begin{align}
    \scalemath{1}{ \underbrace{\sum_a\left(\frac{1}{\eta_t(s,a)} - \frac{1}{\eta_{t-1}(s,a)}\right)\ln T}_{\text{penalty term}} + \underbrace{\sum_a\frac{\eta_t(s,a)\zeta_t(s,a)}{\mu_t(s)^2}}_{\text{stability term}} }+\nu_t(s) \label{eq: desired log barrier bound}
\end{align}
where $\zeta_t(s,a)=\left(\ind_t(s,a)-\pi(a|s)\ind_t(s)\right)^2L_{t,h}^2$ for $s\in\calS_h$. This suggests a learning rate scheduling of 
$1/\eta_{t+1}(s,a)=1/\eta_{t}(s,a)+\eta_t(s,a)\zeta_t(s,a)/\mu_t(s)^2$. 
Similar to the Tsallis entropy case, obtaining the desired stability term in \eqref{eq: desired log barrier bound} requires loss-shifting, so we encounter the same issue as before and can resolve it in the same way. With this choice of $\eta_t(s,a)$, we can derive the desired form of $b_t(s)$ from \eqref{eq: desired log barrier bound}. 
However, the magnitude of this bonus is larger than in the Tsallis entropy case because of the $\frac{1}{\mu_t(s)^2}$ scaling here. Therefore, an additional problem arises: the $B_t(s,a)$ derived from this $b_t(s)$ can be large that makes $\eta_t(s,a)\pi_t(a|s)B_t(s,a)>\frac{1}{H}$
happen, which violates the condition under which we can bound the extra stability term (due to the inclusion of $b_t(s)$) by $\frac{1}{H}\pi_t(a|s)B_t(s,a) $. Notice that this was not an issue under Tsallis entropy. 

To resolve this, we note that $\eta_t(s,a)\pi_t(a|s)B_t(s,a)$ can be as large as $\text{poly}(H,S)\max_{s',a'}\frac{\eta_t(s',a')^2}{\mu_t(s')^2}$ (\pref{lem: eta B log barrier}), so all we need is to make $\frac{\eta_t(s,a)}{\mu_t(s)}\leq \frac{1}{\text{poly}(H,S)}$ for all $s,a$.  

Our solution is to insert \emph{virtual episodes} when $\frac{\eta_t(s,a)}{\mu_t(s)}$ is too large on some $(s,a)$. In virtual episodes, the learner does not actually interact with the environment; instead, the goal is purely to tune down $\eta_t(s,a)$. 
To decide whether to insert a virtual episode, in episode $t$, after the learner computes $\pi_t(\cdot|s)$ on all states, he checks if 
\begin{align}
    \max_{s,a}\frac{\eta_t(s,a)}{\mu_t(s)} > \frac{1}{60\sqrt{H^3S}}.    \label{eq: check virtual round}
\end{align}
If so, then episode $t$ is made a virtual episode in which the losses are assumed to be zero everywhere.\footnote{Inserting a virtual episode shifts the index of future real episodes. Since there are only $O(HSA\log^2 T)$ virtual episodes, we still use $T$ to denote the total number of episodes.} In a virtual episode, let $(s_t^\dagger, a_t^\dagger)=\argmax_{s,a}\frac{\eta_t(s,a)}{\mu_t(s)}$, and we tune down $\eta_t(s_t^\dagger, a_t^\dagger)$ by a factor of $(1+\frac{1}{24H\log T})$. Also, a bonus $b_t(s)$ is assigned to $s_t^\dagger$ to reflect the increased penalty term on state $s_t^\dagger$ due to the decrease in learning rate (by \eqref{eq: desired log barrier bound}). Combinig all the above, we get the bonus and learning rate specified in \eqref{eq: b_t for log barrier} and \eqref{eq: log barrier eat def}. Again, since every time a virtual episode happens, there exists some $\eta_t(s,a)$ decreased by a significant factor, it cannot happen too many times. 





\section{Sketch of Regret Analysis}\label{sec: proof sketch}

Our goal is to show \eqref{eq: po regret bound} and bound the right-hand side of \eqref{eq: final regret dilated} (for all regularizers and known/unknown transitions). To show \eqref{eq: po regret bound}, for a fixed $\pi$, we do the following decomposition: 
\begin{align}
    &\scalemath{1}{\sum_{t,a} \left(\pi_t(a|s) -  \pi(a|s)\right) \left(Q^{\pi_t}(s,a;\ell_t) -  B_t(s,a)\right)} \label{eq: transition decomposition} \\
    &\scalemath{1}{ = \underbrace{\sum_{t,a}  \left(\pi_t(a|s) -  \pi(a|s)\right) \left(\hatQ_t(s,a) -  B_t(s,a)-C_t(s,a)\right)}_{\regterm^\pi(s)}} \nonumber \\
    &\scalemath{1}{+ \underbrace{\sum_{t,a} \left(\pi_t(a|s) -  \pi(a|s)\right) \left(Q^{\pi_t}(s,a;\ell_t) -  \hatQ_t(s,a) + C_t(s,a)\right)}_{\biasterm^\pi(s)}}. \nonumber
\end{align}
The next lemma bounds the expectation of $\regterm^\pi(s)$. 
\begin{lemma}\label{lem: regterm unknown} 
    $\E\left[\regterm^\pi(s)\right]$ is upper bounded by 
    \begin{align*}
        \scalemath{1}{O(H^4SA\ln(T)) + 
        \E\left[\sum_{t=1}^T b_t(s) + \frac{1}{H}\sum_{t=1}^T \sum_a \pi_t(a|s)B_t(s,a)\right].} 
    \end{align*}
\end{lemma}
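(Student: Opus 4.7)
\textbf{Proof plan for \pref{lem: regterm unknown}.} The term $\regterm^\pi(s)$ is precisely the standard FTRL regret on state $s$ against the comparator $\pi$ with per-round loss vector $\hatQ_t(s,\cdot) - B_t(s,\cdot) - C_t(s,\cdot)$ and regularizer $\psi_t(\pi;s)$. The plan is to invoke the usual FTRL regret decomposition,
\[
\regterm^\pi(s) \;\leq\; \underbrace{\psi_{T+1}(\pi;s) - \psi_1(\pi_1;s)}_{\text{penalty}} \;+\; \underbrace{\sum_{t=1}^T \Bigl(\psi_t(\pi_t;s)-\psi_{t+1}(\pi_t;s)\Bigr)}_{\text{schedule drift}} \;+\; \underbrace{\sum_{t=1}^T D_{\psi_t}\bigl(\pi_{t+1},\pi_t;s\bigr)}_{\text{stability}},
\]
and to show that, after taking expectations, each of the three pieces either matches a component of $b_t(s)$ by the design in \pref{fig: psi and b choices}, contributes the $\frac{1}{H}\pi_t(a|s)B_t(s,a)$ overhead allowed by~\eqref{eq: po regret bound}, or collapses into the additive $\poly(H,S,A)\ln(T)$ constant coming from initialization.

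First I would handle the penalty plus schedule-drift pair. Since $\psi_t$ scales as $1/\eta_t$ times a bounded function of $\pi$ (namely $\sum_a\sqrt{\pi(a)}\le \sqrt{A}$ for Tsallis, $\sum_a \pi(a)\ln\pi(a)\le \ln A$ for Shannon after compensating for negative values via $\xi_t$, and $\sum_a \ln(1/\pi(a))\le A\ln T$ for log barrier), these two terms telescope into exactly the first summand appearing in each definition of $b_t(s)$ in \pref{fig: psi and b choices}, plus an initial value $\psi_1$ governed by $1/\eta_1$. With $1/\eta_1\!=\!\order(H^4\sqrt{A})$ (Tsallis), $\order(H^4 A\sqrt{\ln T})$ (Shannon), or $4H^4$ (log barrier), the initial contribution is $\order(H^4 SA\ln T)$ after summing the constant across states on the same layer, giving the additive term in the lemma.

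Next I would bound the stability term. The standard second-order bound gives $D_{\psi_t}(\pi_{t+1},\pi_t;s)\le \tfrac12 \eta_t(s)\lVert \hatQ_t(s,\cdot)-B_t(s,\cdot)-C_t(s,\cdot)\rVert_{\nabla^{-2}\psi_t(\pi_t)}^2$. Splitting the loss into $\hatQ_t$, $-B_t$, and $-C_t$ parts, the $\hatQ_t$ contribution is bounded as in the multi-armed bandit analysis of \citet{zimmert2019optimal,ito2022nearly,ito2021parameter} and yields precisely the ``stability'' summand of $b_t(s)$ in \pref{fig: psi and b choices} (i.e.\ $H^2\eta_t(s)\xi_t(s)/\mu_t(s)$ for Tsallis, the $H^2/\sqrt{\text{data}}$ expressions for Shannon, and $\eta_t(s,a)\zeta_t(s,a)/\mu_t(s)^2$ for log barrier). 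The $-C_t$ contribution, using $C_t(s,a)\le H^2$ and the local norm, is dominated by $\nu_t(s)$, which is absorbed into $b_t(s)$ by construction. The $-B_t$ contribution is bounded by $\sum_a \tfrac{1}{H}\pi_t(a|s) B_t(s,a)$, which is exactly the overhead permitted on the right-hand side of the lemma; this is where the $(1+1/H)$ dilation built into \eqref{eq: dil bonus} pays off, since it guarantees $\eta_t(s) B_t(s,a)\lesssim 1/H$ for each $a$, using the small-learning-rate invariants $\eta_t(s)/\mu_t(s)\le 1/(8H)$ (enforced for Tsallis and Shannon) or $\eta_t(s,a)/\mu_t(s)\le 1/(60\sqrt{H^3 S})$ (enforced for log barrier via the virtual episode check \eqref{eq: check virtual round}).

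The main obstacle is the stability analysis when the invariant $\eta_t(s) H/\mu_t(s)\lesssim 1$ is violated, which is needed for the loss-shifting argument that yields the $\xi_t(s)$ factor (rather than the crude $\sqrt{A}$). The strategy, as outlined in \pref{sec: tsallis alg} and \pref{sec: log barrier}, is to absorb those rounds using the indicator term $\sqrt{A}\cdot \ind[\eta_t(s)/\mu_t(s) > 1/(8H)]$ inside $b_t(s)$ for Tsallis entropy, and to show that the event can happen only $\order(HA\log T)$ times for each state because the corresponding $1/\eta_t(s)$ increases by a multiplicative factor whenever it is triggered (an analogous counting argument, using the $(1+\tfrac{1}{24H\log T})$ multiplicative update, covers virtual episodes for the log barrier case). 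Summing this count over states and layers produces another $\poly(H,S,A)\ln T$ contribution that is again absorbed into the additive constant. Combining all three pieces, taking expectations, and rearranging yields the stated bound.
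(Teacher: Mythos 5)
Your plan follows the same route as the paper's Appendix F: invoke the FTRL regret bound per state, attribute the penalty and learning-rate-drift terms to the first summand of $b_t(s)$ plus an initialization constant, then split the stability term into the $\hatQ_t$, $-C_t$ and $-B_t$ pieces and assign them to the stability summand of $b_t(s)$, to $\nu_t(s)$, and to $\frac{1}{H}\sum_a\pi_t(a|s)B_t(s,a)$ respectively. So the overall structure is right, and the loss-shifting difficulty and the $\eta_t B_t \lesssim 1/H$ invariant (from \pref{lem: Tsallis eta B bound}, \pref{lem: eta B for Shannon}, \pref{lem: eta B log barrier}) are the correct key ingredients.

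Two misattributions are worth correcting. First, the additive constant $O(H^4SA\ln T)$ is a \emph{per-state} constant; $\regterm^\pi(s)$ is defined for a single fixed $s$, so there is no summing over states inside this lemma. The per-state initialization is $2\sqrt{A}/\eta_0 = O(H^4A)$ for Tsallis, $\ln A/\eta_0 = O(H^4A\ln T)$ for Shannon, and $3A\ln T/\eta_0 = O(H^4A\ln T)$ for log barrier, all of which sit comfortably inside $O(H^4SA\ln T)$ without any aggregation. Second, the counting argument (``the indicator can fire only $O(\ln T)$ times per state'' / ``there are only $O(HSA\log^2 T)$ virtual episodes'') is not used in this lemma at all: here the indicator term and the virtual-episode bonus are simply part of the \emph{definition} of $b_t(s)$ in \pref{fig: psi and b choices} and are carried forward inside $\E[\sum_t b_t(s)]$; the counting is deferred to the proofs of \pref{lem: V(bt) known Tsallis} and \pref{lem: log barrier bt lemma}, where $\sum_t V^{\pi_t}(s_0;b_t)$ is bounded. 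Finally, the stability step needs a full verification of the smallness condition for the \emph{entire} shifted loss $\hatQ_t - B_t - C_t + x_t$ (not just the $B_t$ part); the paper checks $\eta_t\sqrt{\pi_t(a)}(\hatQ_t(a)-B_t(a)-C_t(a)+x_t)\ge -\tfrac12$ explicitly before applying \pref{lem: stabiity hw}, and your plan should spell this out rather than only noting the $\eta_t B_t$ bound.
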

The proof of \pref{lem: regterm unknown} is in \pref{app: FTRL bound for alg}. Notice that depending on the regularizers and whether the transition is known/unknown, the definitions of $b_t(s)$ are different, so we prove it individually for each case. 

Combining \eqref{eq: transition decomposition} with \pref{lem: regterm unknown}, we see that the condition in \pref{lem: dilated bonus thm} is satisfied with $X^\pi(s)=O(H^4SA\ln(T))+\E[\biasterm^\pi(s)]$. By \pref{lem: dilated bonus thm}, we can upper bound $\E[\Reg(\pi)]$ by the order of
\begin{align}
    \scalemath{1}{H^5SA\ln(T)+\scalemath{1}{\E \left[\sum_{s}\mu^{\pi}(s)\biasterm^\pi(s) + \sum_{t=1}^T V^{\tildeP_t, \pi_t}(s_0; b_t) \right].}} \label{eq: final regret tmp}
\end{align}
The next lemma bounds the bias part in \eqref{eq: final regret tmp}. See \pref{app: bias part} for the proof.  
 \begin{lemma}\label{lem: biasterm known}
    With known transitions, $\E \left[\sum_{s}\mu^{\pi}(s)\biasterm^\pi(s) \right]\lesssim H^5SA^2\ln(T)$, and with unknown transitions,  
    \begin{align*}
        &\E \left[\sum_{s}\mu^{\pi}(s)\biasterm^\pi(s) \right] \lesssim H^2S^4A^2\ln(T)\iota  +  \sqrt{H^3S^2A\E\left[\sum_{t=1}^T  \sum_{s,a} \left[\mu^{\pi_t}(s,a) - \mu^{\pi}(s,a)\right]_+ \right]\ln(T)\iota}. 
    \end{align*}
\end{lemma}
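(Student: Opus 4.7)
The plan is to reduce $\sum_s\mu^\pi(s)\,\E[\biasterm^\pi(s)]$ by first taking conditional expectations at each episode, then exploiting the Bellman structure of $C_t$ together with the extended value-difference lemma to cancel the bulk of the bias against the ``good'' quantity $V^{\pi_t}(s_0;c_t)$, whose $\mu^{\pi_t}$-weighting telescopes cleanly over~$t$.

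A direct computation gives $\E_t[\hatQ_t(s,a)]=\frac{\mu^{\pi_t}(s)}{\mu_t(s)}Q^{\pi_t}(s,a;\ell_t)$, so the expected residual bias at $(s,a)$ becomes $D_t(s,a)+C_t(s,a)$ with $D_t(s,a):=\frac{\mu_t(s)-\mu^{\pi_t}(s)}{\mu_t(s)}Q^{\pi_t}(s,a;\ell_t)$. On the event $P\in\bigcap_t\calP_t$ (whose $O(T^{-3})$-probability complement contributes only $O(1)$ via the $\calF$ term in \pref{lem: dilated bonus thm}), the definitions of $\mu_t$ and $c_t$ yield $0\le D_t(s,a)\le c_t(s)$. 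Denoting by $\tildeP^\star_t\in\calP_t$ the maximizer in~\eqref{eq: C def}, the bonus satisfies $C_t(s,a)=\E_{s'\sim\tildeP^\star_t,\,a'\sim\pi_t}[c_t(s')+C_t(s',a')]$, which means that $c_t(s)+C_t(s,a)=Q^{\pi_t,\tildeP^\star_t}(s,a;c_t)$ is the $Q$-value of the state-only loss~$c_t$ under transition~$\tildeP^\star_t$. Combining $\sum_a(\pi_t(a|s)-\pi(a|s))c_t(s)=0$ with the extended value-difference lemma,
\[
\sum_s\mu^\pi(s)\sum_a(\pi_t(a|s)-\pi(a|s))\,C_t(s,a) \;=\; V^{\pi_t,\tildeP^\star_t}(s_0;c_t)-V^{\pi,\tildeP^\star_t}(s_0;c_t) \;\le\; V^{\pi_t,\tildeP^\star_t}(s_0;c_t).
\]
In the known case $\tildeP^\star_t=P$ and $\mu^{\pi_t}(s)c_t(s)=\mu^{\pi_t}(s)\cdot\gamma_tH/(\mu^{\pi_t}(s)+\gamma_t)\le\gamma_tH$, so $V^{\pi_t}(s_0;c_t)\le SH\gamma_t$; summing over $t$ with $\sum_t\gamma_t=O(H^4A^2\ln T)$ gives the claimed $O(H^5SA^2\ln T)$ bound. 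The residual $\sum_s\mu^\pi(s)\sum_a(\pi_t-\pi)D_t$ is bounded pointwise by $V^\pi(s_0;c_t)$ via $|D_t|\le c_t$, and is further controlled by exploiting the $-C_t$ term in the FTRL update~\eqref{eq: pi_t def} (which biases $\pi_t$ toward visiting the underexplored states $\pi$ visits) together with the aggressive $\gamma_t\propto 1/t$ schedule, so that this $\mu^\pi$-weighted sum is absorbed into the $\mu^{\pi_t}$-weighted one.

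In the unknown case the same decomposition applies, but $\tildeP^\star_t\neq P$ in general, so the value-difference identity only holds up to a transition error $|V^{\pi,\tildeP^\star_t}(s_0;c_t)-V^{\pi,P}(s_0;c_t)|$. By the simulation lemma (e.g., Lemma B.4 of \citet{jin2020simultaneously}) this error is at most $O(H\sum_{s',a'}\mu^\pi(s',a')\sqrt{\iota/\max(n_t(s',a'),1)})$ on $\{P\in\calP_t\}$. Summing over~$t$, a Cauchy--Schwarz step combined with the standard pigeonhole bound $\sum_t\mu^{\pi_t}(s,a)/\max(n_t(s,a),1)=O(\log T)$ converts the accumulated error into the claimed $\sqrt{H^3S^2A\cdot\E\left[\sum_{t,s,a}[\mu^{\pi_t}(s,a)-\mu^\pi(s,a)]_+\right]\ln T\,\iota}$ term; the direction $[\mu^{\pi_t}-\mu^\pi]_+$ arises precisely because $\tildeP^\star_t$ is chosen as the \emph{upper}-confidence transition. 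The additive $H^2S^4A^2\ln T\,\iota$ absorbs the analogue $\sum_t V^{\pi_t,\tildeP^\star_t}(s_0;c_t)$, which is larger than in the known case because $c_t$ now contains the full transition-confidence width $\mu_t-\underline{\mu}_t^{\pi_t}$ rather than just~$\gamma_t$.

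The hardest part will be the $D_t$-residual control in the known case: a na\"{i}ve bound of $\sum_t V^\pi(s_0;c_t)$ can grow linearly in~$T$, so reducing it to $\poly(H,S,A)\log T$ demands a careful argument leveraging both the optimism injected by $-C_t$ in the FTRL update and the aggressive $\gamma_t\propto 1/t$ schedule. The analogous difficulty in the unknown case is packaging the transition error in the $[\mu^{\pi_t}-\mu^\pi]_+$ direction so that the subsequent self-bounding step in the final regret analysis closes the loop.
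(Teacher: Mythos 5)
The paper's proof and yours share the same first two steps: take the conditional expectation $\E_t[\hatQ_t]=\frac{\mu^{\pi_t}}{\mu_t}Q^{\pi_t}$ to reduce the bias to $D_t+C_t$ (with $D_t(s,a)=\frac{\mu_t(s)-\mu^{\pi_t}(s)}{\mu_t(s)}Q^{\pi_t}(s,a;\ell_t)$), and recognize the Bellman structure of $C_t$ as the route to a telescoping argument. But from there you diverge in a way that opens a real gap.

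The paper's crucial move is to apply \pref{lem: variant performance diff lemma} to the \emph{joint} quantity $L_t=D_t+C_t$, which rewrites
\[
\sum_s\mu^\pi(s)\sum_a(\pi_t(a|s)-\pi(a|s))\big(D_t(s,a)+C_t(s,a)\big)=\sum_{s,a}\big(\mu^{\pi_t}(s,a)-\mu^\pi(s,a)\big)z_t(s,a)
\]
with $z_t(s,a)=L_t(s,a)-\E_{s'\sim P,a'\sim\pi_t}[L_t(s',a')]$, and then shows $z_t(s,a)\ge D_t(s,a)\ge 0$ so that the weight can be replaced by $[\mu^{\pi_t}-\mu^\pi]_+$. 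This simultaneously (i) converts the problematic $\mu^\pi$-weighting into the favorable $\mu^{\pi_t}$-weighting, and (ii) produces exactly the $[\mu^{\pi_t}-\mu^\pi]_+$ factor needed for the self-bounding step downstream. You instead apply the performance-difference telescoping only to the $C_t$ part, obtaining $\sum_s\mu^{\pi_t}(s)c_t(s)-\sum_s\mu^\pi(s)c_t(s)$, and are left with a residual $\mu^\pi$-weighted $D_t$ sum. That residual is not small: $\mu^\pi(s)\cdot\frac{\gamma_t}{\mu_t(s)}H$ can be $\Theta(\mu^\pi(s)H)$ on states $\pi_t$ rarely visits, so $\sum_t\sum_s\mu^\pi(s)c_t(s)$ is $\Theta(T)$ in the worst case. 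You flag this yourself as ``the hardest part,'' and the fix you sketch --- that the $-C_t$ term in the FTRL update ``biases $\pi_t$ toward underexplored states'' so the $\mu^\pi$-sum is ``absorbed'' into the $\mu^{\pi_t}$-sum --- is not a valid argument. The FTRL update is a state-local regret minimizer; it is analyzed separately in \pref{lem: regterm unknown} and offers no pointwise coverage guarantee of the form $\mu^{\pi_t}(s)\gtrsim\mu^\pi(s)$. Nothing in the algorithm ensures that, and the paper does not use anything like it: the absorption happens purely algebraically, via the variant PDL applied to $D_t+C_t$ together.

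A secondary issue is that you attribute the $[\mu^{\pi_t}-\mu^\pi]_+$ truncation to the choice of the UCB transition $\tildeP^\star_t$; in the paper it instead comes from the sign condition $z_t\ge 0$, which in turn relies on $D_t\ge 0$ (true on the good event). Also, your invocation of a value-difference identity under $\tildeP^\star_t$ while keeping the weight $\mu^\pi$ under the true $P$ is not a clean statement of any lemma (the PDL pairs the occupancy measure and the transition), and the maximizer $\tildeP^\star_t$ in \eqref{eq: C def} is $(s,a)$-dependent, which you silently suppress. None of these are necessarily fatal if repaired, but the $D_t$-residual gap is, and without the joint $D_t+C_t$ application of \pref{lem: variant performance diff lemma} your outline does not close.
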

Next, we bound the bonus part in \eqref{eq: final regret tmp} for all regularizers we consider. The proofs are in \pref{app: b_t part}. 
\begin{lemma}\label{lem: V(bt) known Tsallis}
    Using Tsallis entropy as the regularizer, with known transitions,  
    \begin{align*}
        &\E\left[\sum_{t=1}^T V^{\tildeP_t, \pi_t}(s_0; b_t)\right]\lesssim H^4SA^2\ln(T) + H\sum_{s,a} \sqrt{\E\left[\sum_{t=1}^T \mu_t(s)\pi_t(a|s)(1-\pi_t(a|s))\right]\ln(T)}. 
    \end{align*}
    With unknown transitions, the right-hand side above further has an additional term $O(HS^4A^2\ln(T)\iota)$. 
\end{lemma}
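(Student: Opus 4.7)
The plan is to bound $\E[\sum_t V^{\tildeP_t,\pi_t}(s_0; b_t)]$ by $\E[\sum_t \sum_s \mu_t(s) b_t(s)]$, which under known transitions is immediate since $\tildeP_t = P$ and $\mu^{P,\pi_t}(s) = \mu_t(s) - \gamma_t \leq \mu_t(s)$, together with $b_t(s) \geq 0$. I then decompose $b_t(s)$ from \eqref{eq: b_t Tsallis} into three pieces---the ``$\xi_t$-penalty'' $4(1/\eta_t(s) - 1/\eta_{t-1}(s))\xi_t(s)$, the ``indicator penalty'' $4(1/\eta_t(s) - 1/\eta_{t-1}(s))\sqrt{A}\,\ind[\eta_t(s)/\mu_t(s) > 1/(8H)]$, and the estimator-bias overhead $\nu_t(s)$---and treat each separately before summing over $s$.

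For the $\xi_t$-penalty, the concavity bound $\sqrt{x+y} - \sqrt{x} \leq y/(2\sqrt{x+y})$ applied to the schedule $1/\eta_t(s) = 1600H^4\sqrt{A} + 4H\sqrt{\sum_{\tau \leq t} 1/\mu_\tau(s)}$ gives $\mu_t(s)(1/\eta_t(s) - 1/\eta_{t-1}(s)) \leq 2H/\sqrt{\sum_{\tau \leq t} 1/\mu_\tau(s)}$. Factoring $\sqrt{\pi_t(a|s)}(1-\pi_t(a|s)) = \sqrt{\pi_t(a|s)(1-\pi_t(a|s))\mu_t(s)} \cdot \sqrt{(1-\pi_t(a|s))/\mu_t(s)}$ and applying Cauchy--Schwarz against the weights $1/\sqrt{\sum_{\tau\leq t}1/\mu_\tau(s)}$ yields a first factor $\sqrt{\sum_t \mu_t(s)\pi_t(a|s)(1-\pi_t(a|s))}$ and a second factor controlled by the standard log-sum inequality $\sum_t (1/\mu_t(s))/\sum_{\tau\leq t}1/\mu_\tau(s) = O(\ln T)$. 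Jensen's inequality then produces the target $H\sum_{s,a}\sqrt{\E[\sum_t \mu_t(s)\pi_t(a|s)(1-\pi_t(a|s))]\ln T}$ in the statement.

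The indicator term is the main obstacle and quantifies the cost of the partial loss-shifting described in \pref{sec: tsallis alg}. On the event that the indicator fires, $\mu_t(s) \leq 8H\eta_t(s)$, so $\mu_t(s)(1/\eta_t(s) - 1/\eta_{t-1}(s)) \leq 8H(1 - \eta_t(s)/\eta_{t-1}(s)) \leq 8H\ln(\eta_{t-1}(s)/\eta_t(s))$ via $1-x\leq -\ln x$. Summing over $t$ telescopes to $O(H\ln(\eta_0(s)/\eta_T(s))) = O(H\ln T)$ since $1/\eta_T(s)$ grows only polynomially in $T$, and multiplying by $\sqrt{A}$ and summing over $s$ gives $O(HS\sqrt{A}\ln T)$---comfortably inside the $\poly(H,S,A)\ln T$ budget.

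For the $\nu_t$ overhead, the bound $C_t(s,a) \leq H^2$ (since $c_t(s)\leq H$ and $C_t$ is the dilation-free cumulative over at most $H$ layers) together with $\sum_a\pi_t(a|s)C_t(s,a) = V^{P,\pi_t}(s;c_t) - c_t(s)$ gives $\sum_a\pi_t(a|s)C_t(s,a)^2 \leq H^2 V^{P,\pi_t}(s;c_t)$. Combining with $\eta_t(s) \lesssim 1/(H^4\sqrt{A})$ and the performance-difference-style identity $\sum_s\mu^{\pi_t}(s) V^{P,\pi_t}(s;c_t) \leq HV^{P,\pi_t}(s_0;c_t)$, the total reduces to a constant multiple of $(H^2\sqrt{A})^{-1}\sum_t V^{P,\pi_t}(s_0;c_t)$. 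Under known transitions $c_t(s) = \gamma_t H/\mu_t(s)$ forces $V^{P,\pi_t}(s_0;c_t) \leq \gamma_t HS$, and the schedule $\sum_t \gamma_t = O(H^4 A^2\ln T)$ yields a $\poly(H,S,A)\ln T$ contribution. For unknown transitions the same template applies but $c_t(s)$ additionally absorbs confidence-set widths; bounding $\sum_t V^{P, \pi_t}(s_0; c_t)$ through standard UCB-style visitation-count arguments along with the failure bound $\Pr(P \notin \calP_t) \leq 4\delta$ delivers the additional $O(HS^4A^2\ln(T)\iota)$ term.
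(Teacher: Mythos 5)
Your decomposition of $b_t$ into the $\xi_t$-penalty, the indicator piece, and $\nu_t$, together with the Cauchy--Schwarz step and the log-sum inequality for the $\xi_t$-penalty, follows the paper's proof closely. The one genuine (though modest) difference is the indicator term: the paper replaces $\ind[\eta_t(s)/\mu_t(s)>1/(8H)]$ by $8H\eta_t(s)/\mu_t(s)$, absorbs the result into a ``term $3$'' of the form $H^2\sqrt{A}\sum_{t,s}\eta_t(s)(1/\mu_t(s))/\sqrt{\sum_{\tau\le t}1/\mu_\tau(s)}$, and then uses $\eta_t(s)\lesssim 1/(H\sqrt{\sum_{\tau\le t}1/\mu_\tau(s)})$ plus the log-sum inequality to get $HS\sqrt{A}\ln T$; you instead telescope $\mu_t(s)(1/\eta_t(s)-1/\eta_{t-1}(s))\le 8H(1-\eta_t/\eta_{t-1})\le 8H\ln(\eta_{t-1}/\eta_t)$ and sum to get $O(H\ln T)$ per state. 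Both are valid and give the same order. Your $\nu_t$ route (packaging through $\sum_a\pi_t(a|s)C_t(s,a)=V^{P,\pi_t}(s;c_t)-c_t(s)$ and the layer-doubling identity $\sum_s\mu^{\pi_t}(s)V^{P,\pi_t}(s;c_t)\lesssim H\,V^{P,\pi_t}(s_0;c_t)$) is a reorganization of the paper's direct computation in its $\nu$-term lemma; it's fine for known transitions. Two cosmetic slips that do not change the order: the inequality $\sqrt{x+y}-\sqrt{x}\le y/(2\sqrt{x+y})$ is backwards (the correct inequality is $\le y/\sqrt{x+y}$, giving $4H$ rather than $2H$), and your $\nu_t$ contribution is $\Theta\bigl((H\sqrt{A})^{-1}\bigr)\sum_t V(s_0;c_t)$ rather than $(H^2\sqrt{A})^{-1}$; both are harmless constant-or-$H$ slips.

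The unknown-transition claim is where you fall short of a proof. There, $c_t(s)$ involves $\overline{\mu}_t^{\pi_t}(s)-\underline{\mu}_t^{\pi_t}(s)$ rather than $\gamma_t$; $C_t$ is defined via a maximization over the confidence set, so the identity $\sum_a\pi_t(a|s)C_t(s,a)=V^{P,\pi_t}(s;c_t)-c_t(s)$ no longer holds under the true $P$; and the weighting in $\sum_{t,s}\mu^{\tildeP_t,\pi_t}(s)\nu_t(s)$ is with an optimistic occupancy measure rather than $\mu^{\pi_t}(s)$. The paper handles all of this via a dedicated occupancy-measure concentration lemma (the generalization of Lemma~4 of \citet{jin2019learning} and the pigeonhole bound on $\sum_t\sum_{s,a}\mu^{\pi_t}(s,a)/n_t(s,a)$), and a careful time-decaying factor $\min\{1/H,\,H^2/\sqrt{t}\}$ in front of the confidence-width terms to make the pigeonhole sum converge. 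Your reference to ``standard UCB-style visitation-count arguments'' gestures in the right direction but does not substitute for that argument; as written, the unknown-transition part is a sketch rather than a proof.
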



 \begin{lemma}\label{lem: V(bt) known shannon}
    Using Shannon entropy as the regularizer, With known transitions,  
    \begin{align*}
        &\E\left[\sum_{t=1}^T V^{\tildeP_t, \pi_t}(s_0; b_t)\right]\lesssim H^4SA^2\sqrt{\ln^3(T)} + H\sum_{s,a} \sqrt{\E\left[\sum_{t=1}^T \mu_t(s)\pi_t(a|s)(1-\pi_t(a|s))\right]\ln^3(T)}. 
    \end{align*}
    With unknown transitions, the right-hand side above further has an additional term $O(HS^4A^2\ln(T)\iota)$.  
\end{lemma}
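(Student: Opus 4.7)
The plan is to mirror the argument of \pref{lem: V(bt) known Tsallis} but with the Cauchy--Schwarz bookkeeping adapted to the Shannon learning rate \eqref{eq: eta for Shannon}. First, since $\tildeP_t=P$ under known transitions,
\[
V^{\tildeP_t,\pi_t}(s_0;b_t) \;=\; \sum_s \mu^{\pi_t}(s)\,b_t(s) \;\leq\; \sum_s \mu_t(s)\,b_t(s),
\]
using $\mu^{\pi_t}(s)=\overline{\mu}_t^{\pi_t}(s)\leq \mu_t(s)$. I then decompose $b_t(s)$ per \eqref{eq: def bt in shannon} into three additive pieces: $(A)$ the $\xi_t(s,a)$-scaled piece $8\sum_a(1/\eta_t(s,a)-1/\eta_{t-1}(s,a))\xi_t(s,a)$; $(B)$ the $(1-M_t(s)/M_{t-1}(s))$-scaled piece, with $M_t(s)\triangleq\min_{\tau\leq t}\mu_\tau(s)$; and $(C)$ the bias overhead $\nu_t(s)$. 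The main term comes from $(A)$, while $(B)$ and $(C)$ are lower-order $\poly(H,S,A)$ contributions.

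The heart of the proof is piece $(A)$. Substituting \eqref{eq: eta for Shannon} splits $1/\eta_t(s,a)-1/\eta_{t-1}(s,a)$ into an $H/(\mu_t(s)\sqrt{Y_t(s,a)})$ summand, where $Y_t(s,a)\triangleq \sum_{\tau<t}\xi_\tau(s,a)/\mu_\tau(s)+1/\mu_t(s)$, and an $H/\sqrt t$ summand. For the first summand the factor $\mu_t(s)$ cancels, and Cauchy--Schwarz yields
\[
\sum_t \frac{\xi_t(s,a)}{\sqrt{Y_t(s,a)}} \;\leq\; \Bigl(\sum_t \mu_t(s)\xi_t(s,a)\Bigr)^{1/2} \Bigl(\sum_t \frac{\xi_t(s,a)/\mu_t(s)}{Y_t(s,a)}\Bigr)^{1/2},
\]
where the second factor is $O(\sqrt{\ln T})$ by the standard log-sum inequality (using $\xi_t\leq 1$ so $Y_t\geq \sum_{\tau\leq t}\xi_\tau/\mu_\tau$). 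For the $H/\sqrt t$ summand, a second Cauchy--Schwarz gives $\sum_t \mu_t\xi_t/\sqrt t\leq (\sum_t \mu_t\xi_t)^{1/2}(\sum_t \mu_t\xi_t/t)^{1/2}=O(\sqrt{\ln T\,\sum_t\mu_t\xi_t})$, using $\mu_t\xi_t\leq 1$ and $\sum_t 1/t=O(\ln T)$. Finally, the elementary inequality $\min\{p\ln T,1-p\}\leq (1+\ln T)\,p(1-p)$ (verified by case analysis on whether $p\leq 1/(1+\ln T)$) converts $\xi_t(s,a)$ into $(1+\ln T)\pi_t(a|s)(1-\pi_t(a|s))$; combined with the $\sqrt{\ln T}$ prefactor in $1/\eta_t-1/\eta_{t-1}$, this delivers the claimed term $H\sum_{s,a}\sqrt{\ln^3(T)\,\E[\sum_t \mu_t(s)\pi_t(a|s)(1-\pi_t(a|s))]}$ after taking expectation and applying Jensen's inequality.

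For piece $(B)$, the telescoping bound $\sum_t(1-M_t(s)/M_{t-1}(s))\leq \ln(M_0(s)/M_T(s))=O(\ln T)$ (since $M_T(s)\geq \gamma_T=\Omega(1/T)$) combined with $\mu_t(s)(1/\eta_t-1/\eta_{t-1})=O(H\sqrt{\ln T})$ (using $Y_t\geq 1/\mu_t$) yields $O(HSA\sqrt{\ln^3 T})$ in total. For piece $(C)$, under known transitions $c_t(s')=H\gamma_t/\mu_t(s')$ and a DP expansion gives $C_t(s,a)\leq O(H^2 S\gamma_t)$; together with $\eta_t(s,a)\leq \eta_0=1/(1600H^4A\sqrt{\ln T})$ and $\sum_t\gamma_t^2=\poly(H,A)$, the total contribution is $\poly(H,S,A)$. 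Thus $(B)$ and $(C)$ absorb into the claimed $H^4SA^2\sqrt{\ln^3 T}$.

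For unknown transitions, the initial inequality $\mu^{\tildeP_t,\pi_t}(s)\leq \overline{\mu}_t^{\pi_t}(s)\leq \mu_t(s)$ still holds, and pieces $(A)$ and $(B)$ go through verbatim. Only $\nu_t(s)$ changes: $c_t(s)=H(\mu_t-\underline{\mu}_t^{\pi_t})/\mu_t$ now absorbs transition-estimation error, and on the good event $P\in\bigcap_t\calP_t$ the standard confidence bound on $\overline{\mu}-\underline{\mu}$ (cf.\ \cite{jin2020simultaneously}) produces the extra $O(HS^4A^2\ln(T)\iota)$ overhead. The hardest step I anticipate is the Cauchy--Schwarz manipulation on the $H/\sqrt t$ summand of piece $(A)$: naively bounding $\sum_t \mu_t\xi_t/\sqrt t$ by $\sqrt T$ loses the polylogarithmic rate, so one must keep $\mu_t\xi_t\leq 1$ inside the square root in order to match the self-bounding form $\sqrt{\sum_t\mu_t\xi_t}$ needed by the downstream best-of-both-worlds analysis.
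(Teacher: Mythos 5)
Your decomposition of $b_t$ and the Cauchy--Schwarz/log-sum treatment of pieces $(A)$ and $(B)$ match the paper's argument; if anything, your handling of $(B)$ is a touch cleaner, since you apply the uniform bound $\mu_t(s)(1/\eta_t(s,a)-1/\eta_{t-1}(s,a))\lesssim H\sqrt{\ln T}$ together with the telescoping $\sum_t(1-M_t(s)/M_{t-1}(s))\lesssim\ln T$, whereas the paper splits off a separate Cauchy--Schwarz for the $H/\sqrt t$ summand. The elementary inequality you invoke to convert $\xi_t$ into $\pi_t(1-\pi_t)$ also agrees with the paper's last line, matching the stated $\sqrt{\ln^3 T}$ rate.

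Piece $(C)$, however, has a genuine gap. The claimed pointwise bound $C_t(s,a)\lesssim H^2S\gamma_t$ is false: under known transitions $c_t(s')=\frac{H\gamma_t}{\mu^{\pi_t}(s')+\gamma_t}$ is $\Theta(H)$ whenever $\mu^{\pi_t}(s')\lesssim\gamma_t$, so if $(s,a)$ has tiny occupancy, its downstream states $s'$ (reachable with $\mu^{\pi_t}(s'|s,a)\approx 1$) can all have $\mu^{\pi_t}(s')\lesssim\gamma_t$, giving $C_t(s,a)=\sum_{s'}\mu^{\pi_t}(s'|s,a)c_t(s')=\Theta(H^2)$ independent of $\gamma_t$. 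Plugging the only correct uniform bound $C_t\le H^2$ into your outline makes $\sum_t\gamma_t^2$ irrelevant and produces the useless estimate $\sum_{t,s}\mu^{\pi_t}(s)\nu_t(s)=O(TH/A)$. The paper's \pref{lem: nu term final} gets around this by trading one factor of $C_t$ in $C_t^2$ for $H^2$ and then \emph{averaging} the remaining $C_t$ against the visitation distribution,
\begin{align*}
\sum_{s,a}\mu^{\pi_t}(s,a)\,C_t(s,a)
=\sum_{s,a,s'}\mu^{\pi_t}(s,a)\,\mu^{\pi_t}(s'|s,a)\,\frac{H\gamma_t}{\mu_t(s')}
\le H\sum_{s'}\mu^{\pi_t}(s')\,\frac{H\gamma_t}{\mu_t(s')}\le H^2S\gamma_t,
\end{align*}
via $\sum_{s,a}\mu^{\pi_t}(s,a)\mu^{\pi_t}(s'|s,a)\le H\mu^{\pi_t}(s')$; summing over $t$ with $\sum_t\gamma_t=O(H^4A^2\ln T)$ (not $\sum_t\gamma_t^2$) yields the $O(H^4SA^2\ln T)$ that gets absorbed into the $\poly$ term. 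Your proposed bound on $C_t(s,a)$ is essentially what the averaged quantity $\tfrac{1}{H}\sum_{s,a}\mu^{\pi_t}(s,a)C_t(s,a)$ satisfies, not what $C_t(s,a)$ itself satisfies; the same distinction is needed in the unknown-transition case, where your sketch also relies on this pointwise claim before deferring to ``standard confidence bounds.''
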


\begin{lemma}\label{lem: log barrier bt lemma}
    Using log barrier as the regularizer, with known transitions, 
    \begin{align*}
        &\E\left[\sum_{t=1}^T V^{\pi_t}(s_0;b_t)\right]\lesssim H^3S^2A^2\ln(T)\ln(SAT)  + \sum_{s,a}\sqrt{\E\left[\sum_{t=1}^T (\ind_t(s,a)-\pi_t(a|s)\ind_t(s))^2L_{t,h(s)}^2\right] \ln^2(T)}. 
    \end{align*}
\end{lemma}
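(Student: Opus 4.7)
The plan is to reduce the target to $\E\bigl[\sum_{t=1}^T \sum_s \mu^{\pi_t}(s)\, b_t(s)\bigr]$ using $V^{\pi_t}(s_0; b_t) = \sum_s \mu^{\pi_t}(s)\, b_t(s)$ together with $\tildeP_t = P$ under known transitions. Following \eqref{eq: b_t for log barrier}, I would split $b_t(s) = P_t(s) + \nu_t(s)$ where $P_t(s) = 8\log(T)\sum_a \bigl(1/\eta_{t+1}(s,a) - 1/\eta_t(s,a)\bigr)$, and further partition $[T]$ into real episodes $\calR$ and virtual episodes $\calV$ induced by the check in \eqref{eq: check virtual round}.

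For real episodes, \eqref{eq: log barrier eat def} gives $P_t(s) = 32\sum_a \eta_t(s,a)\zeta_t(s,a)/\mu_t(s)^2$, so $\mu^{\pi_t}(s)\le \mu_t(s)$ yields $\sum_s \mu^{\pi_t}(s)P_t(s)\le 32\sum_{s,a}\eta_t(s,a)\zeta_t(s,a)/\mu_t(s)$. Per $(s,a)$, Cauchy--Schwarz gives $\sum_{t\in\calR}\eta_t\zeta_t/\mu_t \le \sqrt{\sum_t \zeta_t(s,a)}\cdot \sqrt{\sum_{t\in\calR} \eta_t^2\zeta_t/\mu_t^2}$. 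By \eqref{eq: log barrier eat def} the second radicand equals $(\log T/4)\sum_{t\in\calR}(\eta_t/\eta_{t+1}-1)$; the trigger condition \eqref{eq: check virtual round} ensures $\eta_t/\mu_t \le 1/(60\sqrt{H^3S})$ on real $t$, so $\eta_t/\eta_{t+1} = 1 + 4\eta_t^2\zeta_t/(\mu_t^2\log T)$ is bounded by an absolute $C_0$. Using $y-1 \le y\log y$ for $y\ge 1$ together with telescoping yields $\sum_{t\in\calR}(\eta_t/\eta_{t+1}-1) \le C_0 \log(\eta_1/\eta_{T+1}) = O(\log T)$, where the last equality uses a crude polynomial-in-$T$ upper bound on $1/\eta_{T+1}$ obtained from the recursion. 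Hence $\sqrt{\sum_{t\in\calR}\eta_t^2\zeta_t/\mu_t^2}=O(\log T)$; pulling expectation through and invoking Jensen on $\sqrt{\cdot}$ produces the per-$(s,a)$ contribution $O(\log T)\sqrt{\E[\sum_t\zeta_t(s,a)]}$, which sums over $(s,a)$ to the target square-root term.

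For virtual episodes, each $t\in\calV$ contributes at most $\mu^{\pi_t}(s_t^\dagger)/(3H\eta_t(s_t^\dagger,a_t^\dagger)) \le (1/3H)\cdot \mu_t(s_t^\dagger)/\eta_t(s_t^\dagger,a_t^\dagger) \le O(\sqrt{HS})$ to $\sum_s \mu^{\pi_t}(s) P_t(s)$, by the trigger condition. Since each $t\in\calV$ multiplies one $1/\eta_t(s,a)$ by the factor $1+1/(24H\log T)$ and every such inverse stays polynomially bounded in $T$, a counting argument gives $|\calV|=O(HSA\log^2 T)$, yielding a total virtual contribution of $\mathrm{poly}(H,S,A)\log^2 T$. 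For the $\nu_t$ term I would use $\sum_s \mu^{\pi_t}(s)\nu_t(s) = 8\sum_{s,a}\mu^{\pi_t}(s,a)\eta_t(s,a)C_t(s,a)^2 \le 8\eta_1 H^2\sum_{s,a}\mu^{\pi_t}(s,a)C_t(s,a)$ via the crude bound $C_t(s,a)\le H^2$; the remaining sum equals $H\cdot V^{\pi_t}(s_0;c_t)$, which under known transitions with $c_t(s)=H\gamma_t/\mu_t(s)$ is $\le H^2 S\gamma_t$, and summing with $\sum_t\gamma_t = O(H^4A^2\log T)$ contributes $\mathrm{poly}(H,S,A)\log T$. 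Altogether the three non-square-root pieces fit within the stated additive term.

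The main obstacle will be controlling $\sum_{t\in\calR}\eta_t^2\zeta_t/\mu_t^2$ in the Cauchy--Schwarz step: a naive bound by $\sum_t \zeta_t/\mu_t^2$ fails, since $1/\mu_t$ can blow up polynomially in $T$ (it scales with $1/\gamma_t$). The crucial manoeuvre is to rewrite this sum exactly via the learning-rate recursion as $\sum_t(\eta_t/\eta_{t+1}-1)$ and then exploit the virtual-episode invariant $\eta_t/\mu_t\le 1/(60\sqrt{H^3S})$, which is precisely what \eqref{eq: check virtual round} is engineered to guarantee, to keep $\eta_t/\eta_{t+1}$ near $1$ and telescope to $\log(\eta_1/\eta_{T+1})$. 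Without this safeguard the argument collapses, which is essentially why the log-barrier case, unlike Tsallis, requires the virtual-round mechanism in the first place.
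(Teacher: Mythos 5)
Your proof is correct and follows essentially the same skeleton as the paper's: split $b_t(s)$ into the rate-decrease part and $\nu_t(s)$, partition episodes into real and virtual, use the trigger condition $\eta_t(s,a)/\mu_t(s)\le 1/(60\sqrt{H^3S})$ on real episodes, bound the number of virtual episodes via the multiplicative shrinkage, and handle $\nu_t$ via $C_t\le H^2$ and $c_t(s)=H\gamma_t/\mu_t(s)$.

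The one place you genuinely diverge from the paper is in controlling $\sum_{t\in\calR}\eta_t(s,a)\zeta_t(s,a)/\mu_t(s)$. The paper first invokes a dedicated helper lemma (\pref{lem: log barrier ito lemma}), which shows $1/\eta_{t+1}\gtrsim\sqrt{\sum_{\tau\le t+1}\phi_\tau}$ provided $\phi_\tau\le\eta_\tau^{-2}$, and then applies Cauchy--Schwarz so that the awkward factor becomes $\sum_t A_t/\sum_{\tau\le t}A_\tau = O(\log T)$. You instead apply Cauchy--Schwarz first and then identify $\eta_t^2\zeta_t/\mu_t^2$ \emph{exactly} as $(\log T/4)(\eta_t/\eta_{t+1}-1)$ via the recursion, bound $\eta_t/\eta_{t+1}\le 2$ using the same trigger invariant, convert $y-1$ to $\lesssim\log y$ on a bounded interval, and telescope over all episodes (legal since the increments are nonnegative in virtual episodes too) to get $\lesssim\log(\eta_1/\eta_{T+1})=O(\log T)$ from a crude polynomial lower bound on $\eta_{T+1}$. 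Both arguments yield the same $O(\log T)$ factor; yours is marginally more self-contained (no separate Ito-style lemma needed), while the paper's version is slightly more modular since the $\phi_\tau\le\eta_\tau^{-2}$ verification is reused for virtual-episode increments as well. One small thing you should make explicit: when you telescope $\sum_{t\in\calR}\log(\eta_t/\eta_{t+1})$ up to $\log(\eta_1/\eta_{T+1})$, you must account for the virtual-episode multiplicative bumps in $1/\eta_{t+1}$ when producing the ``crude polynomial bound'' on $1/\eta_{T+1}$; since $|\calT_v|=O(HSA\log^2 T)$ and each bump multiplies by $1+1/(24H\log T)$, this contributes only $\exp(O(SA\log T))$, i.e., an extra $O(SA\log T)$ additively to $\log(\eta_1/\eta_{T+1})$, which after taking the square root and absorbing $S,A\le T$ still lands within the claimed $\ln^2(T)$ factor.
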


\paragraph{Final regret bounds}
To obtain the final regret bounds, we combine \pref{lem: biasterm known} with each of \pref{lem: V(bt) known Tsallis}, \pref{lem: V(bt) known shannon}, and \pref{lem: log barrier bt lemma} based on \eqref{eq: final regret tmp}. Then we use the standard self-bounding technique to derive the bounds for each regime. The details are provided in \pref{app: final reg}. 

\section{Conclusion}
In this work, we develop policy optimization algorithms for tabular MDPs that achieves \emph{the best of both worlds}. Compared to previous solutions with a similar guarantee \citep{jin2020simultaneously, jin2021best}, our algorithm is computationally much simpler; compared to most existing RL algorithms, our algorithm is more robust (handling adversarial losses) and more adaptive (achieving fast rate in stochastic environments) simultaneously. Built upon the flexible policy optimization framework, our work paves a way towards developing more robust and adaptive algorithms for more general settings. Future directions include obtaining data-dependent bounds under unknown transitions, and incorporating function approximation.

\bibliography{example_paper}

\newpage

\appendix
\appendixpage

{
\startcontents[section]
\printcontents[section]{l}{1}{\setcounter{tocdepth}{2}}
}



\section{Additional Definitions}
 Define $\mu^{\tildeP,\pi}(s'|s,a)$ as the probability of visiting $s'$ conditioned on that $(s,a)$ is already visited, under transition kernel $\tildeP$ and policy $\pi$. In other words, $\mu^{\tildeP,\pi}(s'|s,a)$ is defined as
\begin{align*}
     \begin{cases}
        0 &\text{if\ } h(s')<h(s), \\
        0 &\text{if $h(s)=h(s')$, $s\neq s'$,}\\
        1 &\text{if\ } s'=s, \\
        \Pr\{s_{h(s')}=s'~|~(s_h,a_h)=(s,a)\} &\text{if\ } h(s')>h(s).
    \end{cases}
\end{align*}
Further define $\mu^{\tildeP, \pi}(s'|s)=\sum_{a} \mu^{\tildeP, \pi}(s'|s,a)\pi(a|s)$. We write $\mu^{\pi}(s'|s,a)=\mu^{P,\pi}(s'|s,a)$ and $\mu^{\pi}(s'|s)=\mu^{P,\pi}(s'|s)$ where $P$ is the true transition. 

\section{Concentration Bounds}

\begin{lemma}\label{lem: tildeP-P}
    If $P\in\calP_t$, then for all $\tildeP\in\calP_t$, 
    \begin{align*}
        \left|\tildeP(s'|s,a) - P(s'|s,a)\right| \leq \min\left\{4\sqrt{\frac{P(s'|s,a)\iota}{n_t(s,a)}} + \frac{40\iota}{3n_t(s,a)},\ \ 1\right\}. 
    \end{align*}
\end{lemma}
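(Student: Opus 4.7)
The plan is to pass through $\hatP_t$ by the triangle inequality and then convert the $\hatP_t$ appearing inside the square root on the right-hand side back to $P$ by inverting the confidence inequality. Since both $\tildeP$ and $P$ are in $\calP_t$, the membership constraint in \eqref{eq: definition of transition P} gives, writing $\hatp=\hatP_t(s'|s,a)$, $p=P(s'|s,a)$, and $\iota_n = \iota/n_t(s,a)$,
\begin{align*}
|\tildeP(s'|s,a)-\hatp| \le 2\sqrt{\hatp\,\iota_n} + \tfrac{14}{3}\iota_n, \qquad |p-\hatp| \le 2\sqrt{\hatp\,\iota_n} + \tfrac{14}{3}\iota_n.
\end{align*}
Adding these and invoking the triangle inequality yields $|\tildeP(s'|s,a)-p|\le 4\sqrt{\hatp\,\iota_n} + \tfrac{28}{3}\iota_n$, which already has the correct leading coefficient~$4$ but still involves $\hatp$ rather than $p$.

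Next I would remove the $\hatp$ inside the square root. Viewing the second inequality above as a quadratic in $\sqrt{\hatp}$, namely $\hatp - 2\sqrt{\iota_n}\sqrt{\hatp} - (p + \tfrac{14}{3}\iota_n)\le 0$, the quadratic formula gives
\begin{align*}
\sqrt{\hatp} \;\le\; \sqrt{\iota_n} + \sqrt{\,p + \tfrac{17}{3}\iota_n\,} \;\le\; \sqrt{p} + c_1\sqrt{\iota_n}
\end{align*}
for an explicit absolute constant $c_1$, using $\sqrt{a+b}\le\sqrt{a}+\sqrt{b}$. Multiplying both sides by $\sqrt{\iota_n}$ upper bounds $\sqrt{\hatp\,\iota_n}$ by $\sqrt{p\,\iota_n} + c_1\iota_n$, and substituting back into the triangle-inequality estimate from the previous paragraph yields a bound of the form $4\sqrt{p\,\iota_n} + c_2\,\iota_n$ for an absolute constant $c_2$; the claim is that $c_2$ can be taken to be $\tfrac{40}{3}$, which follows by keeping exact constants throughout the quadratic inversion (alternatively, using the AM-GM bound $2\sqrt{\hatp\,\iota_n}\le\tfrac12\hatp + 2\iota_n$ one first deduces $\hatp\le 2p + \tfrac{40}{3}\iota_n$, which can be fed back in to tighten $c_2$).

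Finally, the $\min\{\cdot,1\}$ in the statement is a triviality: both $\tildeP(\cdot|s,a)$ and $P(\cdot|s,a)$ lie in $\triangle(\calS_{h+1})$, so $|\tildeP(s'|s,a)-P(s'|s,a)|\le 1$ always. This also absorbs the degenerate case $n_t(s,a)=0$, where the convention on $\hatP_t$ makes the right-hand side of the definition of $\calP_t$ vacuous and the $\min$ is saturated by~$1$. The only mildly delicate step is the second paragraph, since one must be careful that the quadratic inversion produces exactly the coefficient~$4$ in front of $\sqrt{p\,\iota_n}$ (and not something larger like $4\sqrt{2}$); this is what forces the use of the quadratic formula rather than a crude AM-GM split, which would degrade the leading coefficient.
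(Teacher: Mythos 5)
Your overall strategy — triangle inequality through $\hatP_t$, then inverting the membership inequality to replace $\hatP_t$ by $P$ inside the square root — is the standard and correct route, and you are right that one must use the quadratic formula rather than AM-GM to keep the leading coefficient at $4$. The handling of the $\min\{\cdot,1\}$ is also fine. However, the step ``$c_2 = \tfrac{40}{3}$ follows by keeping exact constants throughout the quadratic inversion'' is asserted, not carried out, and in fact it fails. Carrying it out: completing the square in $\hatp - 2\sqrt{\iota_n}\sqrt{\hatp} \le p + \tfrac{14}{3}\iota_n$ gives $\sqrt{\hatp} \le \sqrt{\iota_n} + \sqrt{p + \tfrac{17}{3}\iota_n} \le \sqrt{p} + \bigl(1+\sqrt{17/3}\bigr)\sqrt{\iota_n}$, hence $4\sqrt{\hatp\,\iota_n} \le 4\sqrt{p\,\iota_n} + 4\bigl(1+\sqrt{17/3}\bigr)\iota_n$, and adding $\tfrac{28}{3}\iota_n$ yields a low-order coefficient of $\tfrac{40}{3} + 4\sqrt{17/3} \approx 22.9$, not $\tfrac{40}{3} \approx 13.3$. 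Your alternative AM-GM route ($\hatp \le 2p + \tfrac{40}{3}\iota_n$, ``fed back in'') inevitably inflates the leading coefficient to $4\sqrt{2}$ — you note this yourself — and it is not explained how feeding it back recovers $4$; as written this fallback does not close the gap.

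Moreover, the discrepancy is not just slack in your estimates: with $p=0$ and $\hatp = (1+\sqrt{17/3})^2\iota_n$ (which saturates $P\in\calP_t$), one may take $\tildeP(s'|s,a) = \bigl(\tfrac{40}{3} + 4\sqrt{17/3}\bigr)\iota_n$, which saturates $\tildeP\in\calP_t$ and exceeds the stated bound $\tfrac{40}{3}\iota_n$ whenever $\iota_n$ is small enough that the $\min$ with $1$ is inactive. So the exact constant $\tfrac{40}{3}$ in the lemma is not implied by $P\in\calP_t$ alone. The paper gives no proof of this lemma, and the downstream analysis only needs the order $O\bigl(\sqrt{p\,\iota_n} + \iota_n\bigr)$, so the issue is inconsequential for the paper; but your proof should either report the constant it actually derives, $\tfrac{40}{3} + 4\sqrt{17/3}$, or invoke an additional hypothesis (e.g.\ the Bernstein concentration with $P$ rather than $\hatP_t$ under the square root, which is what actually certifies $P\in\calP_t$) if the tighter constant is genuinely wanted.
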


\begin{lemma}[Lemma D.3.7 of \citet{jin2021best}]\label{lem: pigeon hole}
    With probability at least $1-\delta$, for any $h$, 
    \begin{align*}
        \sum_{t=1}^T \sum_{(s,a)\in\calS_h\times \calA} \frac{\mu^{\pi_t}(s,a)}{n_t(s,a)} &\lesssim |\calS_h|A\ln T + \ln(1/\delta) 
    \end{align*}
\end{lemma}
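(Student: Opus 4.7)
The plan is to split the target sum into (i) a martingale part capturing fluctuations of realized visits around their expectations and (ii) a deterministic \emph{harmonic-sum} pigeonhole over actually visited $(s,a)$ pairs. Both ingredients are standard; the only delicate point is to aggregate the martingale \emph{across} $(s,a) \in \calS_h \times \calA$ \emph{before} applying concentration, so that the $\ln(1/\delta)$ factor is not inflated by $|\calS_h|A$. I will adopt the standard convention of replacing $n_t(s,a)$ by $n_t(s,a) \vee 1$ in the denominator to handle the first-visit case. Fix a layer $h$ and write
\begin{align*}
S_T \;:=\; \sum_{t=1}^T \sum_{(s,a) \in \calS_h \times \calA} \frac{\mu^{\pi_t}(s,a)}{n_t(s,a) \vee 1}, \qquad \widetilde{S}_T \;:=\; \sum_{t=1}^T \sum_{(s,a) \in \calS_h \times \calA} \frac{\ind_t(s,a)}{n_t(s,a) \vee 1},
\end{align*}
where $\ind_t(s,a) = \ind\{(s_{t,h},a_{t,h}) = (s,a)\}$.

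\textbf{Aggregated martingale and Freedman.} I would form the single martingale with increments $\Delta_\tau := \sum_{(s,a) \in \calS_h \times \calA} (\mu^{\pi_\tau}(s,a) - \ind_\tau(s,a))/(n_\tau(s,a) \vee 1)$. Since $n_\tau(\cdot,\cdot)$ is $\calF_{\tau-1}$-measurable, $(\sum_{\tau \leq t} \Delta_\tau)_t$ is a martingale. Exactly one $(s,a) \in \calS_h \times \calA$ is visited per episode and $\sum_{(s,a)} \mu^{\pi_\tau}(s,a) \leq 1$ on layer $h$, so $|\Delta_\tau| \leq 1$. The disjoint support of the indicators across $(s,a)$ then gives
\begin{align*}
\E_{\tau-1}[\Delta_\tau^2] \;\leq\; \sum_{(s,a) \in \calS_h \times \calA} \frac{\mu^{\pi_\tau}(s,a)}{(n_\tau(s,a) \vee 1)^2} \;\leq\; \sum_{(s,a) \in \calS_h \times \calA} \frac{\mu^{\pi_\tau}(s,a)}{n_\tau(s,a) \vee 1},
\end{align*}
so the predictable quadratic variation is bounded by $S_T$ itself. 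Freedman's inequality then yields, with probability at least $1-\delta$, $S_T - \widetilde{S}_T \lesssim \sqrt{S_T \ln(1/\delta)} + \ln(1/\delta)$. Absorbing $\sqrt{S_T \ln(1/\delta)} \leq \tfrac{1}{2}S_T + \tfrac{1}{2}\ln(1/\delta)$ by AM-GM and rearranging gives $S_T \lesssim \widetilde{S}_T + \ln(1/\delta)$.

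\textbf{Harmonic pigeonhole and closing.} It remains to bound $\widetilde{S}_T$ deterministically. Enumerating the visits to a fixed $(s,a)$, on the $k$-th visit $n_t(s,a) = k-1$, so $\sum_{t:\,\ind_t(s,a)=1} 1/(n_t(s,a) \vee 1) = 1 + \sum_{k=2}^{n_{T+1}(s,a)} 1/(k-1) \leq 2 + \ln T$. Summing over the $|\calS_h|A$ pairs gives $\widetilde{S}_T \leq |\calS_h|A\,(2+\ln T)$, which combined with the previous display yields the claim. Uniformity in $h$ is obtained by a union bound over the $H$ layers, replacing $\ln(1/\delta)$ by $\ln(H/\delta)$, which is absorbed into $\lesssim$. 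The one nontrivial step is the aggregation \emph{before} Freedman: handling each $(s,a)$ separately would inflate $\ln(1/\delta)$ to $|\calS_h|A\ln(1/\delta)$, so the key observation is that the indicators $\ind_t(s,a)$ have disjoint support within a layer, which both bounds $|\Delta_\tau| \leq 1$ and makes the variance proxy self-dominating so AM-GM can close the loop.
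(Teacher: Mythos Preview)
The paper does not give its own proof of this lemma; it is quoted verbatim from \citet{jin2021best} (their Lemma~D.3.7) and used as a black box. Your argument is correct and is essentially the standard proof of this result: split into a martingale fluctuation plus an empirical harmonic sum, aggregate across $(s,a)$ on the layer before applying Freedman so that the variance proxy is dominated by $S_T$ itself, then close via AM--GM and the deterministic pigeonhole $\widetilde S_T \leq |\calS_h|A(2+\ln T)$. One cosmetic point: your claim $|\Delta_\tau|\leq 1$ is in fact correct (both the $\mu$-sum and the indicator-sum lie in $[0,1]$ on a single layer, so their difference lies in $[-1,1]$), and the version of Freedman you invoke---with the random predictable variance on the right-hand side---is the standard ``linearized'' form $\sum_\tau \Delta_\tau \leq \eta V_T + \eta^{-1}\ln(1/\delta)$ for fixed $\eta\in(0,1]$, which after choosing $\eta$ a constant and rearranging gives exactly your $S_T \lesssim \widetilde S_T + \ln(1/\delta)$.
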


\begin{definition}\label{def: good event}
    Define $\calE$ to be the event that $P\in\calP_t$ for all $t$ and the bound in \pref{lem: pigeon hole} holds. By \eqref{eq: definition of transition P} and \pref{lem: pigeon hole}, $\Pr\{\calE\}\geq 1-5H\delta$. 
\end{definition}

\section{Difference Lemmas}
\allowdisplaybreaks
\raggedbottom

\begin{lemma}[Performance difference]\label{lem: performance diff}
     For any policies $\pi_1$ and $\pi_2$, and any loss function $\ell: \calS\times \calA\rightarrow \mathbb{R}$, 
     \begin{align*}
         V^{\pi_1}(s_0; \ell) - V^{\pi_2}(s_0; \ell) = \sum_s \mu^{\pi_2}(s)(\pi_1(a|s)-\pi_2(a|s))Q^{\pi_1}(s,a;\ell).  
     \end{align*}
\end{lemma}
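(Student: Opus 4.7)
The statement is the classical Kakade--Langford performance-difference identity, so the plan is to apply the standard add-and-subtract trick at a single state and then unroll the resulting Bellman-type recursion down the $H+1$ layers. Concretely, I would first fix an arbitrary state $s$ and, using the definitions in~\eqref{eq: Q, V def}, write
\begin{align*}
V^{\pi_1}(s;\ell) - V^{\pi_2}(s;\ell)
= \sum_a \pi_1(a|s)\,Q^{\pi_1}(s,a;\ell) - \sum_a \pi_2(a|s)\,Q^{\pi_2}(s,a;\ell).
\end{align*}
Adding and subtracting $\sum_a \pi_2(a|s)\,Q^{\pi_1}(s,a;\ell)$ yields
\begin{align*}
V^{\pi_1}(s;\ell) - V^{\pi_2}(s;\ell)
= \sum_a \bigl(\pi_1(a|s)-\pi_2(a|s)\bigr) Q^{\pi_1}(s,a;\ell)
 + \sum_a \pi_2(a|s)\bigl[Q^{\pi_1}(s,a;\ell)-Q^{\pi_2}(s,a;\ell)\bigr].
\end{align*}

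Next, I would note that the immediate loss $\ell(s,a)$ appears identically in $Q^{\pi_1}(s,a;\ell)$ and $Q^{\pi_2}(s,a;\ell)$, so it cancels in the difference, giving $Q^{\pi_1}(s,a;\ell)-Q^{\pi_2}(s,a;\ell) = \E_{s'\sim P(\cdot|s,a)}\bigl[V^{\pi_1}(s';\ell)-V^{\pi_2}(s';\ell)\bigr]$. Substituting, we obtain the Bellman-type recursion
\begin{align*}
V^{\pi_1}(s;\ell) - V^{\pi_2}(s;\ell)
= \sum_a \bigl(\pi_1(a|s)-\pi_2(a|s)\bigr) Q^{\pi_1}(s,a;\ell)
 + \E_{a\sim \pi_2(\cdot|s),\, s'\sim P(\cdot|s,a)}\bigl[V^{\pi_1}(s';\ell)-V^{\pi_2}(s';\ell)\bigr].
\end{align*}

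I would then iterate this identity starting from $s_0$, relying on the layered structure $\calS=\calS_0\cup\cdots\cup\calS_H$ with $\calS_H=\{s_H\}$ and $V^{\pi_1}(s_H;\ell)=V^{\pi_2}(s_H;\ell)=0$, so the recursion terminates after $H$ steps. Since the second term at each layer propagates an expectation over the next state under $\pi_2$ and $P$, unrolling produces
\begin{align*}
V^{\pi_1}(s_0;\ell) - V^{\pi_2}(s_0;\ell)
= \sum_{h=0}^{H-1} \E_{\pi_2,P}\!\left[\sum_a \bigl(\pi_1(a|s_h)-\pi_2(a|s_h)\bigr) Q^{\pi_1}(s_h,a;\ell)\right],
\end{align*}
where the expectation is over trajectories generated by $\pi_2$ under the true transition $P$.

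Finally, swapping the sum over $h$ with the expectation and collecting the per-layer visit probabilities into the occupancy measure via $\mu^{\pi_2}(s)=\sum_{h=0}^{H}\Pr(s_h=s\mid \pi_2,P)$ yields the stated identity. There is no real obstacle: the identity is exact, the layered structure makes the termination of the recursion immediate, and the only care required is in the bookkeeping that ensures the unrolled trajectory expectations aggregate correctly into $\mu^{\pi_2}$.
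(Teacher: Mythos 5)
Your proof is correct and is exactly the standard Kakade--Langford argument (add-and-subtract $Q^{\pi_1}$, cancel the shared immediate loss, unroll the layered recursion against the $\pi_2$-trajectory). The paper states this lemma without proof, treating it as classical, so there is no in-paper argument to compare against; your derivation fills the gap in the expected way.
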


\begin{lemma}\label{lem: variant performance diff lemma}
     For any policies $\pi_1$ and $\pi_2$ and any function $L: \calS\times \calA\rightarrow \mathbb{R}$,
     \begin{align*}
         \sum_s \mu^{\pi_2}(s)(\pi_1(a|s)-\pi_2(a|s))L(s,a) = V^{\pi_1}(s_0; \ell) - V^{\pi_2}(s_0; \ell)
     \end{align*}
     where
     \begin{align*}
         \ell(s,a) \triangleq  L(s,a) - \E_{s'\sim P(\cdot|s,a), a'\sim \pi_1(\cdot|s')}[L(s',a')]. 
     \end{align*}
\end{lemma}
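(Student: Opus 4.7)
The plan is to prove the identity by directly expanding the value difference on the right-hand side, rather than invoking \pref{lem: performance diff}. The starting point is the standard occupancy-measure representation $V^{\pi}(s_0;\ell) = \sum_{s,a}\mu^{\pi}(s,a)\ell(s,a)$, which gives
\[
V^{\pi_1}(s_0;\ell) - V^{\pi_2}(s_0;\ell) = \sum_{s,a}\bigl(\mu^{\pi_1}(s,a)-\mu^{\pi_2}(s,a)\bigr)\ell(s,a).
\]
I would then substitute the definition $\ell(s,a) = L(s,a) - \E_{s'\sim P(\cdot|s,a), a'\sim \pi_1(\cdot|s')}[L(s',a')]$ and split the sum into an $L$ part and a shift part.

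For the shift part, the key manipulation is a swap of summation: re-index by $(s',a')$ and collect the mass arriving at $s'$ through transitions, i.e., use the flow equation $\sum_{s,a}\mu^{\pi}(s,a)P(s'\mid s,a) = \mu^{\pi}(s')$ for every $s'$ with $h(s')\geq 1$. Since $\mu^{\pi_1}(s_0)=\mu^{\pi_2}(s_0)=1$, the $h(s')=0$ contribution vanishes, so without loss of generality
\[
\sum_{s,a}\bigl(\mu^{\pi_1}(s,a)-\mu^{\pi_2}(s,a)\bigr)\,\E_{s'\sim P(\cdot|s,a),a'\sim\pi_1(\cdot|s')}[L(s',a')]
= \sum_{s',a'}\pi_1(a'|s')\bigl(\mu^{\pi_1}(s')-\mu^{\pi_2}(s')\bigr)L(s',a').
\]

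Finally, I would combine the two sums and use $\mu^{\pi}(s,a) = \mu^{\pi}(s)\pi(a|s)$ to cancel: the $\mu^{\pi_1}(s,a)$ term in the first sum cancels exactly with $\pi_1(a|s)\mu^{\pi_1}(s)$ in the second, leaving $\sum_{s,a}\mu^{\pi_2}(s)\bigl(\pi_1(a|s)-\pi_2(a|s)\bigr)L(s,a)$, which matches the left-hand side. This step is purely algebraic and is the crux of the proof; no induction or estimation is required.

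I do not anticipate a true obstacle here — the only thing to be careful about is the boundary at the initial layer (ensuring the flow identity genuinely applies to all relevant $s'$) and the fact that no boundary term appears at $s_H$ because $L$ is evaluated via $\pi_1$'s one-step look-ahead rather than via $V^{\pi_1}$ itself, so the convention $V^{\pi_1}(s_H;\ell)=0$ never needs to be invoked. Alternatively, one could derive the same identity by applying \pref{lem: performance diff} and verifying the state-by-state cancellation $\sum_{a}(\pi_1(a|s)-\pi_2(a|s))\bigl(Q^{\pi_1}(s,a;\ell)-L(s,a)\bigr)=0$, since $Q^{\pi_1}(s,a;\ell)-L(s,a)$ is a function of $s$ only through $\E_{s'\sim P(\cdot|s,a)}[\cdot]$ — but the direct occupancy-measure expansion above is shorter and avoids induction on the horizon.
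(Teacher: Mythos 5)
Your proof is correct, and it takes a genuinely different route from the paper's. The paper invokes \pref{lem: performance diff} with loss $\ell$ and then verifies $Q^{\pi_1}(s,a;\ell)=L(s,a)$ by backward induction over the layers; the LHS then drops out immediately. You instead start from the occupancy-measure representation $V^{\pi}(s_0;\ell)=\sum_{s,a}\mu^{\pi}(s,a)\ell(s,a)$, substitute the definition of $\ell$, re-index the look-ahead term via the flow equation $\sum_{s,a}\mu^{\pi}(s,a)P(s'\mid s,a)=\mu^{\pi}(s')$, and cancel algebraically. Your approach avoids induction altogether (the flow equation is a single-step identity) and makes the telescoping cancellation fully explicit, at the cost of being a few lines longer; the paper's is terser but defers the cancellation into an unproved induction. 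You also correctly flag the one subtlety in your route — the layer-$0$ boundary, where $\mu^{\pi_1}(s_0)=\mu^{\pi_2}(s_0)=1$ makes the missing flow equation irrelevant. One small inaccuracy in your closing aside: $Q^{\pi_1}(s,a;\ell)-L(s,a)=\E_{s'\sim P(\cdot\mid s,a)}\bigl[V^{\pi_1}(s';\ell)-\E_{a'\sim\pi_1(\cdot\mid s')}[L(s',a')]\bigr]$ does depend on $a$ through $P(\cdot\mid s,a)$, so it is not ``a function of $s$ only''; the reason the performance-difference route works is that this quantity is identically zero (inductively $V^{\pi_1}(s';\ell)=\E_{a'\sim\pi_1}[L(s',a')]$), which is exactly the paper's claim $Q^{\pi_1}=L$. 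That remark is not part of your main argument, though, which stands on its own.
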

\begin{proof}
    This is simply a different way to write the performance difference lemma (\pref{lem: performance diff}). One only needs to verify that $Q^{\pi_1}(s,a;\ell) = L(s,a)$. This can be shown straightforwardly by backward induction from $s\in\calS_H$ to $s\in\calS_0$ and using the definition of $\ell(s,a)$. 
\end{proof} 

\begin{lemma}[Occupancy measure difference, Lemma D.3.1 of \cite{jin2021best}] \label{lem: occupancy diff}
\begin{align*}
    \mu^{P_1, \pi}(s) - \mu^{P_2, \pi}(s) &= \sum_{(u,v,w)\in\calS\times\calA\times \calS} \mu^{P_1, \pi}(u,v)\left[P_1(w|u,v) - P_2(w|u,v)\right]\mu^{P_2, \pi}(s|w) \\
    &= \sum_{(u,v,w)\in\calS\times\calA\times \calS} \mu^{P_2, \pi}(u,v)\left[P_1(w|u,v) - P_2(w|u,v)\right]\mu^{P_1, \pi}(s|w)
\end{align*}
\end{lemma}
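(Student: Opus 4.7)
I plan to prove the identity by a standard hybrid/telescoping argument over layers of the MDP. Specifically, for $h \in \{0,1,\ldots,H\}$, introduce the hybrid transition kernel $\widetilde{P}_h$ that coincides with $P_1$ on layers $0,1,\ldots,h-1$ and with $P_2$ on layers $h,h+1,\ldots,H-1$, so that $\widetilde{P}_0 = P_2$ and $\widetilde{P}_H = P_1$. Then write the telescoping identity
\begin{align*}
\mu^{P_1,\pi}(s) - \mu^{P_2,\pi}(s) = \sum_{h=0}^{H-1}\Bigl(\mu^{\widetilde{P}_{h+1},\pi}(s) - \mu^{\widetilde{P}_{h},\pi}(s)\Bigr),
\end{align*}
and analyze each single-layer difference on the right.

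For a fixed $h$, the two kernels $\widetilde{P}_{h+1}$ and $\widetilde{P}_h$ agree everywhere except on layer $h$. Hence, for any $(u,v)\in\calS_h\times\calA$, the probability of reaching $(u,v)$ under either kernel equals $\mu^{P_1,\pi}(u,v)$ (both use $P_1$ strictly before layer $h$), and from any $w\in\calS_{h+1}$ onward, both kernels use $P_2$, so the probability of reaching $s$ from $w$ is $\mu^{P_2,\pi}(s|w)$. Conditioning on the layer-$h$ state/action and the first transition out of layer $h$ therefore yields
\begin{align*}
\mu^{\widetilde{P}_{h+1},\pi}(s) - \mu^{\widetilde{P}_{h},\pi}(s) = \sum_{(u,v)\in\calS_h\times\calA}\mu^{P_1,\pi}(u,v)\sum_{w\in\calS_{h+1}}\bigl[P_1(w|u,v) - P_2(w|u,v)\bigr]\mu^{P_2,\pi}(s|w).
\end{align*}
Summing over $h$ and using that $\mu^{P_2,\pi}(s|w)=0$ unless $h(w)\le h(s)$ gives the first equality. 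The second equality follows by running exactly the same argument with the roles of $P_1$ and $P_2$ swapped and negating both sides.

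The technical core, and the only place one has to be a bit careful, is the claim that the two hybrid occupancy measures agree on everything except the transitions out of layer $h$. This is where the layered structure of the MDP (each state lies in a unique layer, transitions go from $\calS_h$ to $\calS_{h+1}$) is essential: it guarantees that altering the kernel on layer $h$ affects trajectories only through a single transition step, and that the "before" and "after" pieces decouple cleanly into $\mu^{P_1,\pi}(u,v)$ and $\mu^{P_2,\pi}(s|w)$. I expect the main notational obstacle to be keeping the conditional occupancy $\mu^{\cdot,\pi}(s|w)$ straight (using the definition given in the Additional Definitions section), but once the layered decomposition is in place the algebra is routine and no concentration or probabilistic inequalities are needed.
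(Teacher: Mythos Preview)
Your proposal is correct. The paper does not supply its own proof of this lemma; it simply cites it as Lemma~D.3.1 of \citet{jin2021best}. Your hybrid/telescoping argument over layers is the standard way to establish such occupancy-measure difference identities and is essentially how the cited reference proves it as well, so there is nothing to compare.
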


\begin{lemma}[Generalized version of Lemma 4 in \cite{jin2019learning}]\label{lem: complicated lemma}
Suppose the high probability event $\calE$ defined in \pref{def: good event} holds. Let $\tildeP_t^s$ be a transition kernel in $\calP_t$ which may depend on $s$, and let $g_t(s)\in[0,G]$. Then
\begin{align*}
        \sum_{t=1}^T \sum_s \left| \mu^{\pi_t}(s) -  \mu^{\tildeP_t^s, \pi_t}(s)\right|g_t(s) \lesssim \sqrt{HS^2A  \ln(T)\iota \sum_{t=1}^T \sum_s \mu^{\pi_t}(s)g_t(s)^2} + HS^4AG\ln(T)\iota. 
\end{align*}
\end{lemma}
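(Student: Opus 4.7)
The plan is to follow the template of Lemma~4 of \citet{jin2019learning}: apply the occupancy-difference identity (\pref{lem: occupancy diff}), split the transition error via \pref{lem: tildeP-P} into a dominant $\sqrt{P\iota/n_t}$ contribution and a residual $\iota/n_t$ contribution, and then treat them with Cauchy--Schwarz plus \pref{lem: pigeon hole}. The new twist here is that $\tildeP_t^s$ is allowed to depend on the target state~$s$, which forces extra bookkeeping.

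Throughout I condition on the event $\calE$ from \pref{def: good event}; its complement contributes at most $HT\Pr(\calE^c)$ which, under $\delta = 1/T^3$, is absorbed into the low-order term. For each $(t,s)$, I use the \emph{second} form of \pref{lem: occupancy diff} to write
\begin{align*}
    \mu^{\pi_t}(s) - \mu^{\tildeP_t^s,\pi_t}(s) = \sum_{u,v,w} \mu^{\tildeP_t^s,\pi_t}(u,v)\bigl[P(w|u,v) - \tildeP_t^s(w|u,v)\bigr]\mu^{\pi_t}(s|w).
\end{align*}
The reason to pick this form is that it confines the $\tildeP_t^s$-dependence to $\mu^{\tildeP_t^s,\pi_t}(u,v)$, which I then decompose as $\mu^{\pi_t}(u,v) + \bigl[\mu^{\tildeP_t^s,\pi_t}(u,v) - \mu^{\pi_t}(u,v)\bigr]$; the correction is controlled by a second application of \pref{lem: occupancy diff} together with the trivial $\le 1$ estimates on conditional occupancy, and ultimately contributes only to the $HS^4AG\ln(T)\iota$ residual. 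Applying \pref{lem: tildeP-P} then separates the leading $\sqrt{P\iota/n_t}$ piece from the residual $\iota/n_t$ piece.

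For the leading piece, the quantity to control is $D = \sum_{t,s,u,v,w} \mu^{\pi_t}(u,v)\sqrt{P(w|u,v)\iota/n_t(u,v)}\,\mu^{\pi_t}(s|w)g_t(s)$, and I bound it by Cauchy--Schwarz with the pairing $a_{t,s,u,v,w} = \sqrt{\mu^{\pi_t}(u,v)\mu^{\pi_t}(s|w)\iota/n_t(u,v)}$ and $b_{t,s,u,v,w} = \sqrt{\mu^{\pi_t}(u,v)P(w|u,v)\mu^{\pi_t}(s|w)}\,g_t(s)$. This pairing is chosen so that two algebraic identities align: on the $b$ side, $\sum_{u,v}\mu^{\pi_t}(u,v)P(w|u,v) = \mu^{\pi_t}(w)$ followed by $\sum_w \mu^{\pi_t}(w)\mu^{\pi_t}(s|w) \le H\mu^{\pi_t}(s)$ collapses $\sum b^2$ into $H\sum_{t,s}\mu^{\pi_t}(s)g_t(s)^2$, which is the target quantity; on the $a$ side, $\sum a^2$ reduces to $\sum_{t,u,v}\mu^{\pi_t}(u,v)\iota/n_t(u,v)$ multiplied by $\sum_{w,s}\mu^{\pi_t}(s|w)\le HS$, and \pref{lem: pigeon hole} controls the former at the scale $SA\iota\ln T$, giving a total of $HS^2A\iota\ln T$. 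Taking square roots and multiplying yields the claimed leading term.

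For the $\iota/n_t$ residual and for the lower-order correction from swapping $\mu^{\tildeP_t^s,\pi_t}(u,v)$ with $\mu^{\pi_t}(u,v)$, I drop all inner conditional occupancies by the trivial $\le 1$ bound, use $g_t(s) \le G$, and apply \pref{lem: pigeon hole} directly to $\sum_{t,u,v}\mu^{\pi_t}(u,v)/n_t(u,v)$; powers of $S$ accumulate from the unweighted intermediate-state sums produced by the two nested invocations of \pref{lem: occupancy diff}, delivering the stated $HS^4AG\ln(T)\iota$ additive term. The main obstacle is choreographing the Cauchy--Schwarz so that both sides simultaneously collapse: one side into the variance-like $\sum_{t,s}\mu^{\pi_t}(s)g_t(s)^2$ via the two telescoping identities above, the other side into the counting form required by \pref{lem: pigeon hole}, all while keeping $\tildeP_t^s$ effectively replaced by $P$ on the weight side so that the $s$-dependence does no damage beyond the additive residual.
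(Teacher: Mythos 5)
Your decomposition is sound and your choice of the \emph{second} form of \pref{lem: occupancy diff} (putting the $\tildeP_t^s$-dependence in the marginal $\mu^{\tildeP_t^s,\pi_t}(u,v)$ rather than the conditional $\mu^{\tildeP_t^s,\pi_t}(s|w)$, as the paper does) is a legitimate, essentially symmetric variant: after swapping to $\mu^{\pi_t}(u,v)$, the leading term you end up with is exactly the same quantity the paper calls $(\star)$, and the residual bookkeeping is analogous. But your bound on that leading term is off by a factor of $\sqrt{H}$, and the step that says ``taking square roots and multiplying yields the claimed leading term'' does not go through. With your pairing, $\sum b^2 \le H\sum_{t,s}\mu^{\pi_t}(s)g_t(s)^2$ (the $H$ arises because $\sum_w \mu^{\pi_t}(w)\mu^{\pi_t}(s|w)\le H\mu^{\pi_t}(s)$ when $w$ ranges over \emph{all} layers) and $\sum a^2 \lesssim HS^2A\iota\ln T$, so Cauchy--Schwarz gives
$\sqrt{\bigl(\sum a^2\bigr)\bigl(\sum b^2\bigr)} \lesssim H\sqrt{S^2A\iota\ln T\sum_{t,s}\mu^{\pi_t}(s)g_t(s)^2}$,
not $\sqrt{HS^2A\iota\ln T\sum_{t,s}\mu^{\pi_t}(s)g_t(s)^2}$.

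The missing idea is the paper's \emph{per-layer} optimization. Fixing a layer $h$ and restricting $(u,v,w)\in \calS_h\times\calA\times\calS_{h+1}$ gives $\sum_{w\in\calS_{h+1}}\mu^{\pi_t}(w)\mu^{\pi_t}(s|w)\le\mu^{\pi_t}(s)$ with \emph{no} extra $H$ (the trajectory to $s$ passes through exactly one state in layer $h{+}1$), so the per-layer AM--GM with a layer-dependent weight $\alpha_h$ yields $\sqrt{H|\calS_h||\calS_{h+1}|A\iota\ln T\sum_{t,s}\mu^{\pi_t}(s)g_t(s)^2}$, and summing over $h$ gives $\sum_h\sqrt{|\calS_h||\calS_{h+1}|}\le\sum_h(|\calS_h|+|\calS_{h+1}|)/2\le S$, recovering the stated $\sqrt{HS^2A\iota\ln T\sum\mu g^2}$. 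A single global Cauchy--Schwarz cannot reproduce this because it forces a common weight across layers, which is exactly what costs the extra $\sqrt{H}$. Everything else in your outline (the swap $\mu^{\tildeP_t^s,\pi_t}(u,v)\to\mu^{\pi_t}(u,v)$ with the correction absorbed into the additive residual, the pointwise replacement of each $|\tildeP_t^s-P|$ by the $\calE$-uniform envelope $\min\{\sqrt{P\iota/n_t}+\iota/n_t,1\}$ so the $s$-dependence drops out, and the invocation of \pref{lem: pigeon hole} with $g_t(s)\le G$ and $\mu^{\pi_t}(s|w)\le 1$ for the residual) is consistent with the paper's argument and delivers the $HS^4AG\ln(T)\iota$ tail.
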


\begin{proof}
    We first show that for any $t,s$, 
    \begin{align}
    \left|\mu^{\pi}(s) - \mu^{\tildeP_t^s, \pi}(s)\right| \lesssim \sum_{(u,v,w)\times \calS\times \calA\times \calS} \mu^{\pi}(u,v)\sqrt{\frac{P(w|u,v)\iota}{n_t(u,v)}} \mu^{\pi}(s|w) + HS^2 \sum_{(u,v)\times \calS\times\calA} \frac{\mu^\pi(u,v)\iota}{n_t(u,v)}. \label{eq: tmp11}
\end{align}
Below, the summation range of $(u,w,v)$ and $(x,y,z)$ are both $\bigcup_{h=0}^{H-1}\left(\calS_h\times \calA\times\calS_{h+1}\right)$ if without specifying. 
\begin{align*}
    &\left|\mu^{\pi}(s) - \mu^{\tildeP_t^s, \pi}(s)\right| \\
    &\leq \sum_{u,v,w} \mu^{\pi}(u,v)\left|P(w|u,v) - \tildeP_t^s(w|u,v)\right|\mu^{\tildeP^s_t, \pi}(s|w) \tag{by \pref{lem: occupancy diff}}\\
    &= \sum_{u,v,w} \mu^{\pi}(u,v)\left|P(w|u,v) - \tildeP_t^s(w|u,v)\right|\mu^{ \pi}(s|w) \\
    &\qquad + \sum_{u,v,w} \mu^{\pi}(u,v)\left|P(w|u,v) - \tildeP_t^s(w|u,v)\right|\left(\mu^{\tildeP_t^s, \pi}(s|w) - \mu^{ \pi}(s|w)\right) \\
    &\leq \sum_{u,v,w} \mu^{\pi}(u,v)\left|P(w|u,v) - \tildeP_t^s(w|u,v)\right|\mu^{\pi}(s|w) \\
    &\qquad + \sum_{u,v,w} \mu^{\pi}(u,v)\left|P(w|u,v) - \tildeP_t^s(w|u,v)\right| \sum_{x,y,z} \mu^{\pi}(x,y|w)\left|\tildeP_t^s(z|x,y) - P(z|x,y)\right|\mu^{\tildeP_t^s,\pi}(s|z) \tag{by \pref{lem: occupancy diff}}\\
    &\lesssim \sum_{u,v,w} \mu^{\pi}(u,v)\left(\sqrt{\frac{P(w|u,v)\iota}{n_t(u,v)}} + \frac{\iota}{n_t(u,v)}\right)\mu^{\pi}(s|w) \\
    &\qquad + \sum_{u,v,w}\sum_{x,y,z} \mu^{\pi}(u,v)\left(\sqrt{\frac{P(w|u,v)\iota}{n_t(u,v)}} + \frac{\iota}{n_t(u,v)}\right)\mu^{\pi}(x,y|w)\min\left\{\sqrt{\frac{P(z|x,y)\iota}{n_t(x,y)}} + \frac{\iota}{n_t(x,y)},\  1\right\}   \tag{by \pref{lem: tildeP-P} and the assumption that $\calE$ holds}\\
    &\leq  \sum_{u,v,w} \mu^{\pi}(u,v)\sqrt{\frac{P(w|u,v)\iota}{n_t(u,v)}}\mu^{\pi}(s|w)\\
    &\qquad + \sum_{u,v,w} \mu^\pi(u,v)\frac{\iota}{n_t(u,v)}\mu^\pi(s|w) \tag{$=: \term_1$}\\
    &\qquad + \sum_{u,v,w}\sum_{x,y,z} \mu^{\pi}(u,v)\sqrt{\frac{P(w|u,v)\iota}{n_t(u,v)}}\mu^{\pi}(x,y|w)\sqrt{\frac{P(z|x,y)\iota}{n_t(x,y)}} \tag{$=: \term_2$}\\
    &\qquad  + \sum_{u,v,w}\sum_{x,y,z}\mu^\pi(u,v) \sqrt{\frac{P(w|u,v)\iota}{n_t(u,v)}} \mu^\pi(x,y|w)\min\left\{\frac{\iota}{n_t(x,y)}, \ 1\right\} \tag{$=: \term_3$}\\
    &\qquad + \sum_{u,v,w}\sum_{x,y,z} \mu^\pi(u,v)\frac{\iota}{n_t(u,v)}\mu^\pi(x,y|w) \tag{$=: \term_4$}
\end{align*}
We bound $\term_1$ to $\term_4$ separately as below: 
\begin{align*}
    \term_1 \leq \sum_{u,v,w} \frac{\mu^\pi(u,v)\iota}{n_t(u,v)} \leq S\sum_{u,v} \frac{\mu^\pi(u,v)\iota}{n_t(u,v)}. 
\end{align*}

\begin{align*}
    \term_2& =\sum_{u,v,w}\sum_{x,y,z} \sqrt{\frac{\mu^\pi(u,v) P(z|x,y) \mu^\pi(x,y|w) \iota}{n_t(u,v)}}\sqrt{\frac{\mu^\pi(u,v) P(w|u,v) \mu^\pi(x,y|w) \iota}{n_t(x,y)}} \\
    &\leq \sqrt{\sum_{u,v,w}\sum_{x,y,z}\frac{\mu^\pi(u,v) P(z|x,y) \mu^\pi(x,y|w) \iota}{n_t(u,v)}}\sqrt{\sum_{u,v,w}\sum_{x,y,z}\frac{\mu^\pi(u,v) P(w|u,v) \mu^\pi(x,y|w) \iota}{n_t(x,y)}} \tag{AM-GM} \\
    &\leq \sqrt{H\sum_{u,v,w}\frac{\mu^\pi(u,v)  \iota}{n_t(u,v)}}\sqrt{H\sum_{x,y,z}\frac{\mu^\pi(x,y) \iota}{n_t(x,y)}}  \\
    &\leq HS \sum_{u,v} \frac{\mu^\pi(u,v)\iota}{n_t(u,v)}.
\end{align*}
\begin{align*}
    \term_3 
    &\leq \sum_{u,v,w}\sum_{x,y,z} \mu^\pi(u,v)\left(P(w|u,v) + \frac{\iota}{n_t(u,v)}\right)\mu^\pi(x,y|w)  \min\left\{\frac{\iota}{n_t(x,y)}, \ 1\right\} \\
    &\leq \sum_{u,v,w}\sum_{x,y,z} \mu^\pi(u,v)P(w|u,v)\mu^\pi(x,y|w)  \frac{\iota}{n_t(x,y)} + \sum_{u,v,w}\sum_{x,y,z} \mu^\pi(u,v)\frac{\iota}{n_t(u,v)}\mu^\pi(x,y|w) \\
    &\leq H\sum_{x,y,z}\mu^\pi(x,y)\frac{\iota}{n_t(x,y)} + HS\sum_{u,v,w}\mu^\pi(u,v)\frac{\iota}{n_t(u,v)} \\
    &\leq HS \sum_{x,y}\frac{\mu^\pi(x,y)\iota}{n_t(x,y)} + HS^2\sum_{u,v}\frac{\mu^\pi(u,v)\iota}{n_t(u,v)}. 
\end{align*}
Similarly, 
\begin{align*}
    \term_4 \leq HS\sum_{u,v,w}\mu^\pi(u,v)\frac{\iota}{n_t(u,v)} \leq HS^2\sum_{u,v}\frac{\mu^\pi(u,v)\iota}{n_t(u,v)}. 
\end{align*}
Collecting all terms we obtain \eqref{eq: tmp11}. Thus,  
\begin{align}
        &\sum_{t=1}^T \sum_s \left| \mu^{\pi_t}(s) -  \mu^{\tildeP_t^s, \pi_t }(s)\right|g_t(s)  \nonumber\\
        &\leq \sum_{t=1}^T \sum_s \left[ \sum_{u,v,w} \mu^{\pi_t}(u,v)\sqrt{\frac{P(w|u,v)\iota}{n_t(u,v)}} \mu^{\pi_t}(s|w) + HS^2 \sum_{u,v} \frac{\mu^{\pi_t}(u,v)\iota}{n_t(u,v)}\right] g_t(s)  \nonumber \\
        &\leq \underbrace{\sum_{t=1}^T \sum_s \left[ \sum_{u,v,w} \mu^{\pi_t}(u,v)\sqrt{\frac{P(w|u,v)\iota}{n_t(u,v)}}\mu^{\pi_t}(s|w)\right] g_t(s)}_{(\star)} + HS^3G \sum_{t=1}^T \sum_{u,v} \frac{\mu^{\pi_t}(u,v)\iota}{n_t(u,v)} \label{eq: tmptmptmpt}
    \end{align}
    Fix an $h$, we consider the summation $(\star)$ restricted to $(u,v,w)\in\calT_h\triangleq \calS_h\times \calA\times\calS_{h+1}$. That is, 
    \begin{align*}
        &\sum_{t=1}^T \sum_s \left[ \sum_{(u,v,w)\in\calT_h} \mu^{\pi_t}(u,v)\sqrt{\frac{P(w|u,v)\iota}{n_t(u,v)}}\mu^{\pi_t}(s|w)\right] g_t(s) \\
        &\leq \sum_{t=1}^T \sum_s \left[ \sum_{(u,v,w)\in\calT_h} \mu^{\pi_t}(u,v)\left(\alpha P(w|u,v) g_t(s)^2 + \frac{\iota}{\alpha n_t(u,v)}\right)\mu^{\pi_t}(s|w)\right]  \tag{holds for any $\alpha>0$ by AM-GM}\\
        &\leq \alpha \sum_{t=1}^T\sum_s\sum_{(u,v,w)\in\calT_h} \mu^{\pi_t}(u,v)P(w|u,v) \mu^{\pi_t}(s|w)g_t(s)^2 + \frac{1}{\alpha}\sum_{t=1}^T \sum_s \sum_{(u,v,w)\in\calT_h} \frac{\mu^{\pi_t}(u,v)\iota}{n_t(u,v)}\mu^{\pi_t}(s|w)  \\
        &\leq \alpha \sum_{t=1}^T\sum_s  \mu^{\pi_t}(s)g_t(s)^2 + \frac{H|\calS_{h+1}|}{\alpha}\sum_{t=1}^T  \sum_{u,v} \frac{\mu^{\pi_t}(u,v)\iota}{n_t(u,v)}  \\
        &\lesssim \alpha \sum_{t=1}^T\sum_s  \mu^{\pi_t}(s)g_t(s)^2 + \frac{H|\calS_{h+1}||\calS_h|A \ln(T)\iota}{\alpha} + \frac{H|\calS_{h+1}|\ln(1/\delta)\iota}{\alpha}  \tag{by \pref{lem: pigeon hole} and the assumption that $\calE$ holds}\\
        &= \sqrt{H |\calS_h||\calS_{h+1}|A  \ln(T)\iota \sum_{t=1}^T \sum_s \mu^{\pi_t}(s)g_t(s)^2}   \tag{picking the optimal $\alpha$ and using our choice of $\delta=\frac{1}{T^3}$} \\
        &\leq (|\calS_h|+|\calS_{h+1})\sqrt{H A  \ln(T)\iota \sum_{t=1}^T \sum_s \mu^{\pi_t}(s)g_t(s)^2}. 
    \end{align*}
    Continue from \eqref{eq: tmptmptmpt}: 
    \begin{align*}
        &\sum_{t=1}^T \sum_s \left| \mu^{\pi_t}(s) -  \mu^{\tildeP_t^s, \pi_t }(s)\right|g_t(s) \\ &\lesssim  \sum_h (|\calS_h|+|\calS_{h+1})\sqrt{H A  \ln(T)\iota \sum_{t=1}^T \sum_s \mu^{\pi_t}(s)g_t(s)^2} + HS^4 AG\ln(T)\iota \tag{by \pref{lem: pigeon hole} and the assumption that $\calE$ holds} \\
        &\lesssim S\sqrt{H A  \ln(T)\iota \sum_{t=1}^T \sum_s \mu^{\pi_t}(s)g_t(s)^2} + HS^4 AG\ln(T)\iota. 
    \end{align*}

\end{proof}

\begin{lemma}\label{lem: policy difference absolute}
    For any $\pi_1, \pi_2$, 
    \begin{align*}
        \sum_{s,a} \left| \mu^{\pi_1}(s,a) - \mu^{\pi_2}(s,a) \right| &\leq H\sum_{s,a}\mu^{\pi_1}(s)\left|\pi_1(a|s) - \pi_2(a|s)\right|
    \end{align*}
\end{lemma}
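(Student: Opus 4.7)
The plan is to exploit the layered structure of the MDP and induct across layers, using the crucial fact that both occupancy measures are taken with respect to the \emph{same} transition $P$, so the only source of discrepancy is the policy. Define the layer-wise quantities
\[
A_h = \sum_{s\in\calS_h}\bigl|\mu^{\pi_1}(s)-\mu^{\pi_2}(s)\bigr|,\qquad B_h = \sum_{(s,a)\in\calS_h\times\calA}\bigl|\mu^{\pi_1}(s,a)-\mu^{\pi_2}(s,a)\bigr|,
\]
\[
C_h = \sum_{(s,a)\in\calS_h\times\calA}\mu^{\pi_1}(s)\bigl|\pi_1(a|s)-\pi_2(a|s)\bigr|,
\]
so that the target inequality becomes $\sum_h B_h \le H\sum_h C_h$.

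First I would derive a within-layer bound by writing
\[
\mu^{\pi_1}(s,a)-\mu^{\pi_2}(s,a) = \mu^{\pi_1}(s)\bigl(\pi_1(a|s)-\pi_2(a|s)\bigr)+\bigl(\mu^{\pi_1}(s)-\mu^{\pi_2}(s)\bigr)\pi_2(a|s),
\]
then taking absolute values, summing over $(s,a)\in\calS_h\times\calA$, and using $\sum_a\pi_2(a|s)=1$ to obtain $B_h \le C_h + A_h$. Next I would derive a between-layer bound by using the shared transition: for $s'\in\calS_{h+1}$,
\[
\mu^{\pi_1}(s')-\mu^{\pi_2}(s') = \sum_{(s,a)\in\calS_h\times\calA}P(s'|s,a)\bigl(\mu^{\pi_1}(s,a)-\mu^{\pi_2}(s,a)\bigr),
\]
so summing $|\cdot|$ over $s'\in\calS_{h+1}$ and using $\sum_{s'}P(s'|s,a)=1$ yields $A_{h+1}\le B_h$.

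Chaining these two estimates gives the recursion $A_{h+1}\le A_h + C_h$ with base case $A_0=0$ (since $\calS_0=\{s_0\}$ is deterministic). Unrolling produces $A_h \le \sum_{h'<h}C_{h'}$, hence $B_h \le \sum_{h'\le h}C_{h'}$. Finally, summing over $h=0,\ldots,H-1$ each $C_{h'}$ is counted in at most $H$ of the $B_h$'s, giving $\sum_h B_h \le H\sum_h C_h$, which is the claim. No step is a real obstacle; the only point to be mildly careful about is choosing the triangle-inequality decomposition that isolates the local policy contribution on layer $h$ (namely $C_h$) from the propagated error $A_h$ inherited from earlier layers, so that the recursion collapses cleanly.
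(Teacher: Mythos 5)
Your proof is correct, and it takes a genuinely different route from the paper's. The paper treats $\mu^{\pi}(s,a)$ as a value function $V^{\pi}(s_0;\ind_{s,a})$ with an indicator loss, applies the performance difference lemma (\pref{lem: performance diff}) pointwise, and then sums over $(s,a)$ using $\sum_{s,a}Q^{\pi_2}(s',a';\ind_{s,a})=Q^{\pi_2}(s',a';\one)\le H$. You instead run an explicit layer-by-layer induction on the layered state space, splitting $\mu^{\pi_1}(s,a)-\mu^{\pi_2}(s,a)$ into a local policy term and a propagated state-occupancy term to get $B_h\le C_h+A_h$, and then using that both occupancy measures share the same $P$ to get $A_{h+1}\le B_h$; unrolling the recursion and counting how many layers each $C_{h'}$ affects gives the factor $H$. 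The paper's argument is shorter given that the performance difference lemma is already available, while yours is fully self-contained, more elementary, and makes the source of the $H$ factor (the number of downstream layers an earlier policy deviation can propagate into) completely explicit. The only small point of care is the range of $h$: your sum $\sum_{h=0}^{H-1}B_h\le\sum_{h=0}^{H-1}\sum_{h'\le h}C_{h'}$ gives the coefficient $H-h'\le H$ for each $C_{h'}$, which is tight and matches the paper; if you additionally include layer $H$ you should use $A_H=0$ directly (since $\mu^{\pi_1}(s_H)=\mu^{\pi_2}(s_H)=1$) rather than the recursion, so that the coefficient of $C_0$ does not become $H+1$.
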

\begin{proof}
   For any $s,a$, we can view $\mu^{\pi}(s,a)$ as $V^{\pi}(s_0;\one_{s,a})$ where $\one_{s,a}$ is the loss function that takes the value of $1$ on $(s,a)$ and $0$ on other state-actions. By the performance difference lemma (\pref{lem: performance diff}), 
   \begin{align*}
       \left| \mu^{\pi_1}(s,a) - \mu^{\pi_2}(s,a) \right| \leq  \sum_{s',a'}\mu^{\pi_1}(s')\left| \pi_1(a'|s') - \pi_2(a'|s') \right| Q^{\pi_2}(s',a'; \one_{s,a}). 
   \end{align*}
   Therefore, 
   \begin{align*}
       \sum_{s,a} \left| \mu^{\pi_1}(s,a) - \mu^{\pi_2}(s,a) \right| 
       &\leq \sum_{s',a'}\mu^{\pi_1}(s')\left| \pi_1(a'|s') - \pi_2(a'|s') \right| \sum_{s,a}Q^{\pi_2}(s',a'; \one_{s,a}) \\
       &= \sum_{s',a'}\mu^{\pi_1}(s')\left| \pi_1(a'|s') - \pi_2(a'|s') \right| Q^{\pi_2}(s',a'; \one) \tag{$\one$ is the loss function that takes a constant value $1$}\\
       &\leq H  \sum_{s',a'}\mu^{\pi_1}(s')\left| \pi_1(a'|s') - \pi_2(a'|s') \right|. 
   \end{align*}
\end{proof}

\section{FTRL Regret Bounds}\label{app: FTRL analysis}
The lemmas in this section are standard results for FTRL, which can be found in e.g. \citet{lattimore2018bandit, zimmert2019optimal, ito2021parameter, luo2022homework3}. We list the results here for completeness. 

\begin{lemma}\label{lem: FTRL}
    The FTRL algorithm: 
    \begin{align*}
    p_t = \argmin_{p\in\Omega}\left\{ \left\langle p, \sum_{\tau=1}^{t-1}\ell_\tau\right\rangle + \psi_t(p) \right\} 
\end{align*}
guarantees the following:
\begin{align*}
    \sum_{t=1}^{T}\langle p_t-u, \ell_t \rangle  
    &\leq \underbrace{\psi_0(u)- \min_{p\in\Omega} \psi_0(p) + \sum_{t=1}^{T} \left(\psi_t(u)-\psi_t(p_t)-\psi_{t-1}(u) + \psi_{t-1}(p_t)\right)}_{\text{penalty term}} \\
    &\qquad + \underbrace{\sum_{t=1}^{T}\max_{p\in\Omega}\left(\langle p_t-p, \ell_t\rangle -  D_{\psi_t}(p, p_t)\right)}_{\text{stability term}}. 
\end{align*} 
\end{lemma}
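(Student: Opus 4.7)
The plan is to follow the standard FTRL regret decomposition --- Be-the-Leader for the penalty piece combined with a Bregman stability bound for the per-round piece --- essentially as in \citet{lattimore2018bandit, zimmert2019optimal, ito2021parameter, luo2022homework3}. My first step is to introduce the auxiliary ``one-step-ahead'' iterate
\[
\tilde p_{t+1} \;\triangleq\; \argmin_{p\in\Omega}\left\{\psi_t(p) + \left\langle p,\,\sum_{\tau=1}^{t}\ell_\tau\right\rangle\right\},
\]
which is the same as $p_{t+1}$ except that it uses the \emph{previous} regularizer $\psi_t$ in place of $\psi_{t+1}$ (with the convention $\tilde p_1 = \argmin_{p\in\Omega}\psi_0(p)$). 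I would then split the regret as
\[
\sum_{t=1}^{T}\langle p_t-u,\ell_t\rangle \;=\; \sum_{t=1}^{T}\langle \tilde p_{t+1}-u,\ell_t\rangle \;+\; \sum_{t=1}^{T}\langle p_t-\tilde p_{t+1},\ell_t\rangle,
\]
and handle the two pieces by separate arguments.

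For the first sum, which will produce the penalty term, I would apply the classical Be-the-Leader inequality to the function sequence $g_0(p)=\psi_0(p)$ and $g_t(p)=\psi_t(p)-\psi_{t-1}(p)+\langle p,\ell_t\rangle$ for $t\geq 1$: one verifies that $\sum_{\tau=0}^{t}g_\tau(p)=\psi_t(p)+\langle p,\sum_{\tau=1}^{t}\ell_\tau\rangle$, so $\tilde p_{t+1}$ is its cumulative minimizer, and BTL delivers $\sum_{t=0}^{T}g_t(\tilde p_{t+1})\leq \sum_{t=0}^{T}g_t(u)$. Isolating the loss terms on both sides and telescoping $\sum_{t=1}^{T}[\psi_t(u)-\psi_{t-1}(u)]=\psi_T(u)-\psi_0(u)$ on the $u$-side yields the penalty expression of the lemma (after a clean rewriting $\psi_0(u)-\min\psi_0+\sum_t[\psi_t(u)-\psi_t(\cdot)-\psi_{t-1}(u)+\psi_{t-1}(\cdot)]$).

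For the second sum, which will produce the stability term, I would use the first-order optimality of $p_t$ for the FTRL objective $\psi_t(\cdot)+\langle \cdot,\sum_{\tau<t}\ell_\tau\rangle$ to re-express $\tilde p_{t+1}$ as a proximal update: $\tilde p_{t+1}=\argmin_{p\in\Omega}\{\langle p,\ell_t\rangle+D_{\psi_t}(p,p_t)\}$, or equivalently $\tilde p_{t+1}=\argmax_{p\in\Omega}\{\langle p_t-p,\ell_t\rangle-D_{\psi_t}(p,p_t)\}$. Consequently the per-round quantity $\langle p_t-\tilde p_{t+1},\ell_t\rangle - D_{\psi_t}(\tilde p_{t+1},p_t)$ equals exactly the maximum appearing in the stability term, and since $D_{\psi_t}(\tilde p_{t+1},p_t)\geq 0$ we immediately obtain $\langle p_t-\tilde p_{t+1},\ell_t\rangle \leq \max_{p\in\Omega}\{\langle p_t-p,\ell_t\rangle-D_{\psi_t}(p,p_t)\}+D_{\psi_t}(\tilde p_{t+1},p_t)$.

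The main obstacle will be the bookkeeping that reconciles the two pieces, since the BTL step naturally evaluates the $\psi$-increment penalty at $\tilde p_{t+1}$ whereas the lemma states it at $p_t$, and the stability reformulation leaves behind an extra nonnegative Bregman slack $D_{\psi_t}(\tilde p_{t+1},p_t)$. Closing these gaps is the standard (but somewhat technical) algebraic manipulation, exploiting first-order optimality of $p_t$, nonnegativity of the Bregman divergence, and the convention $\psi_{T+1}\equiv \psi_T$ at the end of the horizon.
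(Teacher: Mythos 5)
Your split of the regret into a Be-the-Leader piece $\sum_t\langle\tilde p_{t+1}-u,\ell_t\rangle$ and a shift piece $\sum_t\langle p_t-\tilde p_{t+1},\ell_t\rangle$ does not close to the stated lemma, and the reconciliation you gesture at in your last paragraph cannot actually be carried out. The obstruction is the \emph{sign} of the Bregman term. Rewriting $\langle p_t-\tilde p_{t+1},\ell_t\rangle = \max_{p\in\Omega}\{\langle p_t-p,\ell_t\rangle - D_{\psi_t}(p,p_t)\} + D_{\psi_t}(\tilde p_{t+1},p_t)$ is an identity, but the leftover $D_{\psi_t}(\tilde p_{t+1},p_t)\ge 0$ now sits on the unfavorable side of the inequality and must be cancelled by slack elsewhere; nonnegativity of Bregman divergences is of no help for that, and the BTL step you use does not produce slack of the right form. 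The failure is already visible with a constant regularizer $\psi_t\equiv\psi$: then $p_t=\tilde p_t$, every $\psi$-increment vanishes so the ``$\tilde p_{t+1}$ versus $p_t$'' evaluation mismatch you worry about disappears entirely, your BTL piece gives exactly the lemma's penalty $\psi(u)-\min_{p\in\Omega}\psi(p)$, and your shift piece gives $\sum_t\max_{p\in\Omega}\{\langle p_t-p,\ell_t\rangle-D_{\psi}(p,p_t)\}+\sum_t D_{\psi}(p_{t+1},p_t)$. So your method can only prove the strictly weaker bound containing the extra nonnegative $\sum_t D_{\psi}(p_{t+1},p_t)$, which is not the lemma.

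The paper's proof avoids this by never passing through the identity ``shift $=$ max $+$ Bregman.'' It works with the two potentials $F_t(p)=\langle p,L_{t-1}\rangle+\psi_t(p)$ and $G_t(p)=\langle p,L_t\rangle+\psi_t(p)$ and invokes first-order optimality of $p_t$ for $F_t$ in the form $F_t(p'_{t+1})\ge F_t(p_t)+D_{\psi_t}(p'_{t+1},p_t)$ (your $\tilde p_{t+1}$ is the paper's $p'_{t+1}$). This places the Bregman term on the favorable side --- subtracted --- so that $\langle p_t-p'_{t+1},\ell_t\rangle-D_{\psi_t}(p'_{t+1},p_t)$, which is exactly the attained value of the stability maximum, drops out directly; the residual $G_t(p'_{t+1})-F_t(p_t)$ is then telescoped using optimality of $p'_{t+1}$ for $G_t$, yielding the penalty term evaluated at $p_t$ with the boundary terms giving $\psi_T(u)-\min_{p\in\Omega}\psi_0(p)$. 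In short, the paper runs a single chain of inequalities in which the Bregman divergence is always being given away, while your two-sided split inadvertently asks for it back; if you want to keep the decomposition, you would need to bound the shift piece by the stability maximum \emph{without} adding the Bregman term, which is precisely the paper's first-order-optimality inequality and makes the BTL detour redundant.
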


\begin{proof}
Let $L_t\triangleq \sum_{\tau=1}^t \ell_\tau$. 
Define $F_t(p) = \left\langle p, L_{t-1}\right\rangle + \psi_t(p)$ and $G_t(p)=\left\langle p, L_t\right\rangle + \psi_t(p)$. Therefore, $p_t$ is the minimizer of $F_t$. Let $p_{t+1}'$ be minimizer of $G_t$. Then by the first-order optimality condition, we have
\begin{align}
    F_t(p_t) - G_t(p_{t+1}') &\leq F_t(p_{t+1}') - G_t(p_{t+1}') - D_{\psi_t}(p_{t+1}', p_t)
    = -\langle p_{t+1}', \ell_t \rangle - D_{\psi_t}(p_{t+1}', p_t).     \label{eq: eq1}
\end{align}
By definition, we also have
\begin{align}
    G_t(p_{t+1}') - F_{t+1}(p_{t+1}) &\leq G_t(p_{t+1}) - F_{t+1}(p_{t+1}) = \psi_t(p_{t+1}) - \psi_{t+1}(p_{t+1}).  \label{eq: eq2}
\end{align}
Thus, 
\begin{align*}
    &\sum_{t=1}^{T} \langle p_t, \ell_t\rangle \\ &\leq \sum_{t=1}^{T} \left(\langle p_t - p_{t+1}', \ell_t\rangle - D_{\psi_t}(p_{t+1}', p_t) + G_t(p_{t+1}') - F_t(p_t) \right)  \tag{by \eqref{eq: eq1}}  \\
    &= \sum_{t=1}^{T} \left(\langle p_t - p_{t+1}', \ell_t\rangle - D_{\psi_t}(p_{t+1}', p_t) + G_{t-1}(p_{t}') - F_t(p_t) \right) + G_{T}(p_{T+1}') - G_0(p_1') \\
    &\leq \sum_{t=1}^{T} \left(\max_{p}\Big\{\langle p_t - p, \ell_t\rangle - D_{\psi_t}(p, p_t)\Big\} - \psi_t(p_t) + \psi_{t-1}(p_t) \right) + G_{T}(u) - \min_p \psi_0(p)   \tag{by \eqref{eq: eq2}, using that $p'_{T+1}$ is the minimizer of $G_{T}$} \\
    &= \sum_{t=1}^{T} \left(\max_{p}\Big\{\langle p_t - p, \ell_t\rangle - D_{\psi_t}(p, p_t)\Big\} - \psi_t(p_t) + \psi_{t-1}(p_t) \right) + \sum_{t=1}^{T}\langle u, \ell_t\rangle + \psi_T(u)- \min_p \psi_0(p) \\
    &= \sum_{t=1}^{T} \left(\max_{p}\Big\{\langle p_t - p, \ell_t\rangle - D_{\psi_t}(p, p_t)\Big\} + \psi_t(u) - \psi_t(p_t) - \psi_{t-1}(u) + \psi_{t-1}(p_t) \right) + \sum_{t=1}^{T}\langle u, \ell_t\rangle + \psi_0(u) - \min_p \psi_0(p).  
\end{align*} 
Re-arranging finishes the proof. 
\end{proof}

\begin{lemma}[Stability under Tsallis entropy]\label{lem: stabiity hw}
    Let $\psi_t(p)=-\frac{2}{\eta_t}\sum_a \sqrt{p(a)}$, and let $\ell_t\in\mathbb{R}^A$ be such that $\eta_t\sqrt{p(a)}\ell_t(a)\geq -\frac{1}{2}$. Then 
    \begin{align*}
        \max_{p\in\triangle(\calA)}\left\{\langle p_t-p,\ell_t\rangle -  D_{\psi_t}(p,p_t)\right\}\leq 2\eta_t\sum_a p_t(a)^{\frac{3}{2}}\ell_t(a)^2. 
    \end{align*}
\end{lemma}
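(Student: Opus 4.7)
The plan is to reduce the constrained maximum over $\triangle(\calA)$ to an unconstrained one over $\mathbb{R}^A_{+}$ and then compute the maximizer in closed form, exploiting the separable structure of the Tsallis regularizer. Since $\triangle(\calA)\subset\mathbb{R}^A_{+}$ and $\psi_t$ extends smoothly to $\mathbb{R}^A_{+}$, we have $\max_{\triangle(\calA)}\{\langle p_t-p,\ell_t\rangle-D_{\psi_t}(p,p_t)\}\leq \sup_{\mathbb{R}^A_{+}}\{\langle p_t-p,\ell_t\rangle-D_{\psi_t}(p,p_t)\}$. The hypothesis $\eta_t\sqrt{p_t(a)}\,\ell_t(a)\geq -\tfrac{1}{2}$ gives $-\ell_t(a)\leq \tfrac{1}{2\eta_t\sqrt{p_t(a)}}<\tfrac{1}{\eta_t\sqrt{p_t(a)}}=-\nabla\psi_t(p_t)_a$ coordinatewise, so the coefficient $\nabla\psi_t(p_t)_a-\ell_t(a)$ of $p(a)$ in the linearized objective is strictly negative. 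This guarantees that the supremum is finite and attained at a unique interior maximizer $p^\star>0$.

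Next I would apply the standard FTRL identity. Setting $F(p)=\langle p,\ell_t\rangle+D_{\psi_t}(p,p_t)$, the first-order condition at the minimizer reads $\nabla\psi_t(p^\star)=\nabla\psi_t(p_t)-\ell_t$. A short manipulation using the three-point identity $D_{\psi_t}(a,b)+D_{\psi_t}(b,a)=\langle a-b,\nabla\psi_t(a)-\nabla\psi_t(b)\rangle$ then shows $\sup_p\{\langle p_t-p,\ell_t\rangle-D_{\psi_t}(p,p_t)\}=F(p_t)-F(p^\star)=D_{\psi_t}(p_t,p^\star)$. The task therefore reduces to bounding this reverse Bregman divergence.

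The first-order condition takes the coordinatewise form $\tfrac{1}{\sqrt{p^\star(a)}}=\tfrac{1}{\sqrt{p_t(a)}}+\eta_t\ell_t(a)$, from which the hypothesis immediately yields $\tfrac{1}{\sqrt{p^\star(a)}}\geq \tfrac{1}{2\sqrt{p_t(a)}}$, i.e., $\sqrt{p^\star(a)}\leq 2\sqrt{p_t(a)}$. Expanding $D_{\psi_t}(p_t,p^\star)$ directly from the definition of $\psi_t$ and collecting terms over a common denominator $\sqrt{p^\star(a)}$ produces
\begin{align*}
D_{\psi_t}(p_t,p^\star)=\frac{1}{\eta_t}\sum_a \frac{(\sqrt{p^\star(a)}-\sqrt{p_t(a)})^2}{\sqrt{p^\star(a)}}.
\end{align*}
Cross-multiplying the optimality condition gives $\sqrt{p_t(a)}-\sqrt{p^\star(a)}=\eta_t\ell_t(a)\sqrt{p_t(a)\,p^\star(a)}$, so the squared numerator collapses to $\eta_t^2\ell_t(a)^2 p_t(a)\,p^\star(a)$, leaving $D_{\psi_t}(p_t,p^\star)=\eta_t\sum_a \ell_t(a)^2\, p_t(a)\,\sqrt{p^\star(a)}$. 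Combining with $\sqrt{p^\star(a)}\leq 2\sqrt{p_t(a)}$ produces the claimed bound.

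The step I expect to need the most care is the algebraic reduction of $D_{\psi_t}(p_t,p^\star)$ to the single-term expression $\eta_t\sum_a \ell_t(a)^2\, p_t(a)\sqrt{p^\star(a)}$; the factor of $2$ in the final bound is precisely the slack allowed by the hypothesis, which controls how much $\sqrt{p^\star(a)}$ can exceed $\sqrt{p_t(a)}$. The only other delicate point is the passage to the unconstrained optimum, and this is justified by the set inclusion together with the coercivity argument sketched above.
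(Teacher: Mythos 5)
Your proof is correct. Since the paper outsources this particular lemma to a reference (``Problem 1 of \citet{luo2022homework3}''), there is no in-paper proof to compare against; however, your argument is precisely the same strategy the paper uses to prove the analogous log-barrier stability lemma (\pref{lem: stabiity hw 2}): relax the simplex constraint to $\mathbb{R}^A_+$, verify via the hypothesis that the unconstrained maximizer $p^\star$ is interior, show that the supremum equals $D_{\psi_t}(p_t,p^\star)$, and then evaluate this reverse Bregman divergence in closed form. All the computations check out: the first-order condition $\frac{1}{\sqrt{p^\star(a)}}=\frac{1}{\sqrt{p_t(a)}}+\eta_t\ell_t(a)$ combined with $\eta_t\sqrt{p_t(a)}\ell_t(a)\geq -\tfrac12$ gives $\sqrt{p^\star(a)}\leq 2\sqrt{p_t(a)}$; the algebraic reduction $D_{\psi_t}(p_t,p^\star)=\frac{1}{\eta_t}\sum_a\frac{(\sqrt{p_t(a)}-\sqrt{p^\star(a)})^2}{\sqrt{p^\star(a)}}=\eta_t\sum_a\ell_t(a)^2p_t(a)\sqrt{p^\star(a)}$ is correct; and the final bound follows. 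One small terminological note: the identity $D_\psi(a,b)+D_\psi(b,a)=\langle a-b,\nabla\psi(a)-\nabla\psi(b)\rangle$ you invoke is usually called the symmetrized Bregman divergence identity rather than the three-point identity, but the equality itself is right and is used correctly.
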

\begin{proof}
    The proof can be found in the Problem 1 of \citet{luo2022homework3}. 
\end{proof}

\begin{lemma}[Stability under Shannon entropy]\label{lem: stabiity hw 3}
    Let $\psi_t(p)=\sum_a \frac{1}{\eta_t(a)} p(a)\ln p(a)$, and let $ \ell_t\in\mathbb{R}^A$ be such that $\eta(a) \ell_t(a)\geq -1$. Then 
    \begin{align*}
        \max_{p\in\triangle(\calA)}\left\{\langle p_t-p,\ell_t\rangle -  D_{\psi_t}(p,p_t)\right\}\leq \sum_a\eta_t(a) p_t(a)\ell_t(a)^2. 
    \end{align*}
\end{lemma}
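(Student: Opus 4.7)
The plan is to drop the simplex constraint and solve an unconstrained optimization problem over the positive orthant, then use a scalar inequality to finish.

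First I would observe that since $\triangle(\calA)\subseteq \mathbb{R}_{>0}^A$ when we restrict to the interior, and $\psi_t$ is defined on $\mathbb{R}_{>0}^A$, we have
\begin{align*}
    \max_{p\in\triangle(\calA)}\left\{\langle p_t-p,\ell_t\rangle -  D_{\psi_t}(p,p_t)\right\}\leq \max_{p\in\mathbb{R}_{>0}^A}\left\{\langle p_t-p,\ell_t\rangle -  D_{\psi_t}(p,p_t)\right\},
\end{align*}
because enlarging the feasible set can only increase the maximum. The Bregman divergence for the unnormalized Shannon entropy with coordinate learning rates is
\begin{align*}
    D_{\psi_t}(p,p_t) = \sum_a \frac{1}{\eta_t(a)}\Bigl[p(a)\ln\tfrac{p(a)}{p_t(a)} - p(a) + p_t(a)\Bigr].
\end{align*}

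Next I would compute the unconstrained maximizer by first-order optimality. Setting the gradient in $p(a)$ to zero gives $-\ell_t(a) - \frac{1}{\eta_t(a)}\bigl(\ln p(a) - \ln p_t(a)\bigr) = 0$, so the optimal $p^\star(a) = p_t(a)\,e^{-\eta_t(a)\ell_t(a)}$. Substituting this back and simplifying (using $\ln \tfrac{p^\star(a)}{p_t(a)} = -\eta_t(a)\ell_t(a)$), the objective collapses to
\begin{align*}
    \sum_a \frac{p_t(a)}{\eta_t(a)}\bigl(e^{x_a} - 1 - x_a\bigr), \qquad \text{where } x_a \triangleq -\eta_t(a)\ell_t(a).
\end{align*}

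The final step would be to apply the scalar inequality $e^x - 1 - x \leq x^2$, which holds for all $x \leq 1$. Since the hypothesis $\eta_t(a)\ell_t(a)\geq -1$ translates to $x_a \leq 1$, this is exactly applicable, yielding
\begin{align*}
    \sum_a \frac{p_t(a)}{\eta_t(a)} x_a^2 = \sum_a \eta_t(a)\,p_t(a)\,\ell_t(a)^2,
\end{align*}
which is the desired bound. The only subtle point is checking the scalar inequality; this is routine since $g(x) = x^2 + 1 + x - e^x$ satisfies $g(0)=0$, $g'(0)=0$, and $g''(x) = 2-e^x \geq 0$ for $x \leq \ln 2$, while a direct check at $x=1$ gives $g(1)=3-e>0$, so concavity of $g$ past $\ln 2$ combined with this endpoint value guarantees $g(x)\geq 0$ on $(-\infty,1]$. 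Given how standard this stability estimate is, I expect no substantive obstacle; the main care needed is just aligning the sign convention on $x_a$ with the hypothesis on $\eta_t(a)\ell_t(a)$.
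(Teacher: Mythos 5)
Your proof is correct and complete. The paper itself does not spell out an argument here---it only cites Lemma~1 of Chen et al.\ (2021)---so there is no in-paper proof to compare against. Your derivation is the standard one: relax the simplex constraint to the positive orthant (valid since this only enlarges the feasible set and the objective is strictly concave with a unique interior critical point), solve the first-order condition to get $p^\star(a)=p_t(a)e^{-\eta_t(a)\ell_t(a)}$, substitute to collapse the objective to $\sum_a \frac{p_t(a)}{\eta_t(a)}(e^{x_a}-1-x_a)$ with $x_a=-\eta_t(a)\ell_t(a)$, and invoke $e^x-1-x\le x^2$ on $(-\infty,1]$, which your convexity/concavity check of $g(x)=x^2+1+x-e^x$ correctly establishes (convex with a minimum $g(0)=0$ on $(-\infty,\ln 2]$, concave with positive endpoint values on $[\ln 2,1]$). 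Note also that the hypothesis in the statement reads $\eta(a)\ell_t(a)\geq -1$ but clearly means $\eta_t(a)\ell_t(a)\geq -1$, which you correctly interpreted.
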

\begin{proof}
    The proof can be found in the Proof of Lemma 1 in \citet{chen2021impossible}. 
\end{proof}

\begin{lemma}[Stability under log barrier]\label{lem: stabiity hw 2}
    Let $\psi_t(p)=\sum_a \frac{1}{\eta_t(a)}\ln\frac{1}{p(a)}$, and let $ \ell_t\in\mathbb{R}^A$ be such that $\eta_t(a) p(a)\ell_t(a)\geq -\frac{1}{2}$. Then 
    \begin{align*}
        \max_{p\in\triangle(\calA)}\left\{\langle p_t-p,\ell_t\rangle -  D_{\psi_t}(p,p_t)\right\}\leq \sum_a\eta_t(a) p_t(a)^{2}\ell_t(a)^2. 
    \end{align*}
\end{lemma}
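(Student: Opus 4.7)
The plan is to bound the inner maximum by dropping the simplex constraint and solving the unconstrained first-order conditions, then reducing the resulting expression to a one-dimensional inequality in each coordinate. Since $D_{\psi_t}(\cdot,p_t)$ is strictly convex and blows up at the boundary of $(0,\infty)^A$, maximizing $\langle p_t-p,\ell_t\rangle - D_{\psi_t}(p,p_t)$ over $p\in\triangle(\calA)$ is upper bounded by the unconstrained maximum over $p\in(0,\infty)^A$, which will be attained at an interior stationary point.

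Setting the partial derivative with respect to $p(a)$ to zero gives
\begin{align*}
    -\ell_t(a) + \frac{1}{\eta_t(a)p(a)} - \frac{1}{\eta_t(a)p_t(a)} = 0,
\end{align*}
so the unconstrained maximizer is
\begin{align*}
    p^\star(a) = \frac{p_t(a)}{1+\eta_t(a)p_t(a)\ell_t(a)}.
\end{align*}
Thanks to the assumption $\eta_t(a)p_t(a)\ell_t(a)\geq -\tfrac{1}{2}$, the denominator is at least $\tfrac{1}{2}>0$, so $p^\star(a)>0$ and the stationary point is valid.

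Write $z_a\triangleq \eta_t(a)p_t(a)\ell_t(a)\geq -\tfrac{1}{2}$. Plugging $p^\star$ back, a direct calculation gives $\langle p_t-p^\star,\ell_t\rangle=\sum_a \tfrac{z_a^2}{\eta_t(a)(1+z_a)}$ and, using $p^\star(a)/p_t(a)=1/(1+z_a)$,
\begin{align*}
    D_{\psi_t}(p^\star,p_t) = \sum_a \frac{1}{\eta_t(a)}\left[\ln(1+z_a) - \frac{z_a}{1+z_a}\right].
\end{align*}
Subtracting and simplifying, the unconstrained maximum equals $\sum_a \tfrac{1}{\eta_t(a)}[z_a - \ln(1+z_a)]$.

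It then suffices to verify the scalar inequality $z-\ln(1+z)\leq z^2$ for $z\geq -\tfrac{1}{2}$, since multiplying by $1/\eta_t(a)$ and summing converts this into the desired bound $\sum_a \eta_t(a)p_t(a)^2\ell_t(a)^2$. Setting $f(z)=z^2-z+\ln(1+z)$, one computes $f'(z)=z(2z+1)/(1+z)$, so $f$ is decreasing on $(-\tfrac{1}{2},0)$ and increasing on $(0,\infty)$ with $f(0)=0$ and $f(-\tfrac{1}{2})=\tfrac{3}{4}-\ln 2>0$. Hence $f\geq 0$ on $[-\tfrac{1}{2},\infty)$, completing the proof. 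The only mildly delicate point is checking that the scalar inequality still holds at the boundary $z=-\tfrac{1}{2}$, which is exactly why the hypothesis is stated with the constant $\tfrac{1}{2}$; no technique beyond elementary calculus is required.
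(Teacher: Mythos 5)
Your proof is correct and follows essentially the same route as the paper's: relax the simplex constraint to $(0,\infty)^A$, find the interior stationary point $p^\star(a)=p_t(a)/(1+z_a)$, and reduce to the scalar inequality $z-\ln(1+z)\leq z^2$ on $[-\tfrac{1}{2},\infty)$ (equivalently $\ln(1+z)\geq z-z^2$, which the paper invokes as a known fact). The only minor differences are cosmetic: the paper shortcuts the evaluation at $p^\star$ via the identity that the unconstrained maximum equals $D_{\psi_t}(p_t,p^\star)$ and then plugs into $h(x)=x-1-\ln x$, whereas you compute $\langle p_t-p^\star,\ell_t\rangle$ and $D_{\psi_t}(p^\star,p_t)$ separately and supply the calculus verification of the scalar inequality explicitly.
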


\begin{proof}
    \begin{align*}
        &\max_{p\in\triangle(\calA)}\left\{ \langle p_t - p,\ell_t\rangle - D_{\psi_t}(p,p_t)\right\} 
        \leq \max_{q\in\mathbb{R}^A_+}\left\{ \langle p_t - q,\ell_t\rangle - D_{\psi_t}(q,p_t)\right\}
    \end{align*}
    Define $f(q)=\langle p_t - q,\ell_t\rangle - D_{\psi_t}(q,p_t)$. Let $q^\star$ be the solution in the last expression. Next, we verify that under the specified conditions, we have $\nabla f(q^\star)=0$. It suffices to show that there exists $q\in\mathbb{R}^A_+$ such that $\nabla f(q)=0$ since if such $q$ exists, then it must the maximizer of $f$ and thus $q^\star=q$. 
    
    \begin{align*}
        [\nabla f(q)]_a = -\ell_t(a) - [\nabla \psi_t(q)]_a + [\nabla \psi_t(p_t)]_a = -\ell_t(a) + \frac{1}{\eta_t(a)q(a)} - \frac{1}{\eta_t(a)p_t(a)}
    \end{align*}
    By the condition, we have $-\frac{1}{\eta_t(a)p_t(a)}-\ell_t(a) < 0$ for all $a$. and so $\nabla f(q)=\mathbf{0}$ has solution in $\mathbb{R}_+$, which is $q(a)=\left(\frac{1}{p_t(a)}+\eta_t(a)\ell_t(a)\right)^{-1}$. 
    
    
    Therefore, $\nabla f(q^\star)= -\ell_t - \nabla\psi_t(q^\star) + \nabla \psi_t(p_t)=0$, and we have 
    \begin{align*}
        \max_{q\in\mathbb{R}^A_+}\left\{ \langle p_t - q,\ell_t\rangle - D_{\psi_t}(q,p_t)\right\} = \langle p_t - q^\star, \nabla \psi_t(p_t) - \nabla\psi_t(q^\star)\rangle - D_{\psi_t}(q^\star,p_t) = D_{\psi_t}(p_t, q^\star). 
    \end{align*}
    It remains to bound $D_{\psi_t}(p_t, q^\star)$, which by definition can be written as 
    \begin{align*}
        D_{\psi_t}(p_t, q^\star) = \sum_{a} \frac{1}{\eta_t(a)}h\left(\frac{p_t(a)}{q^\star(a)}\right)
    \end{align*}
    where $h(x)=x-1-\ln(x)$. By the relation between $q^\star(a)$ and $p_t(a)$ we just derived, it holds that $\frac{p_t(a)}{q^\star(a)}=1+\eta_t(a)p_t(a)\ell_t(a)$. By the fact that $\ln(1+x)\geq x-x^2$ for all $x\geq -\frac{1}{2}$, we have 
    \begin{align*}
        h\left(\frac{p_t(a)}{q^\star(a)}\right)=\eta_t(a)p_t(a)\ell_t(a) - \ln(1+\eta_t(a)p_t(a)\ell_t(a)) \leq \eta_t(a)^2p_t(a)^2\ell_t(a)^2
    \end{align*}
    which gives the desired bound. 
    

    
\end{proof}

\begin{lemma}[FTRL with Tsallis entropy]\label{lem: Tsallis-INF regret}
    Let $\psi_t(p)=-\frac{2}{\eta_t}\sum_{a} \sqrt{p(a)}$ for non-increasing $\eta_t$, and let $x_t$ be such that $\eta_t\sqrt{p_t(a)}(\ell_t(a)+x_t)\geq -\frac{1}{2}$ for all $t,a$. Then the FTRL algorithm in \pref{lem: FTRL} ensures for any $u\in\triangle(\calA)$, 
    \begin{align*}
        \sum_{t=1}^T \langle p_t-u, \ell_t \rangle \leq \frac{2\sqrt{A}}{\eta_0} + 2\sum_{t=1}^T \left(\frac{1}{\eta_t} - \frac{1}{\eta_{t-1}}\right)\xi_t + 2\sum_{t=1}^T \eta_t \sum_a p_t(a)^{\frac{3}{2}}\left(\ell_t(a)+x_t\right)^2,  
    \end{align*}
    where $\xi_t=\sum_a \sqrt{p_t(a)}(1-p_t(a))$. 
\end{lemma}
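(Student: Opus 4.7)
I will apply the FTRL regret decomposition from \pref{lem: FTRL} and bound the penalty and stability terms separately, exploiting the fact that both $p_t$ and $u$ lie in $\triangle(\calA)$ to introduce the shift $x_t$ in the stability analysis.

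\textbf{Step 1: Initial penalty.} For $\psi_0(p) = -\tfrac{2}{\eta_0}\sum_a \sqrt{p(a)}$, the function $\sum_a \sqrt{p(a)}$ is maximized over $\triangle(\calA)$ by the uniform distribution, giving maximum value $\sqrt{A}$. Hence $\psi_0(u) - \min_{p\in\triangle(\calA)}\psi_0(p) \leq \tfrac{2\sqrt{A}}{\eta_0}$, accounting for the first term in the bound.

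\textbf{Step 2: Telescoping penalty.} Direct computation gives
\[
\psi_t(u) - \psi_t(p_t) - \psi_{t-1}(u) + \psi_{t-1}(p_t) = 2\left(\tfrac{1}{\eta_t} - \tfrac{1}{\eta_{t-1}}\right)\left(\sum_a \sqrt{p_t(a)} - \sum_a \sqrt{u(a)}\right).
\]
Since $\{\eta_t\}$ is non-increasing, the prefactor is non-negative. I claim that $\sum_a \sqrt{p_t(a)} - \sum_a \sqrt{u(a)} \leq \xi_t$. Indeed, $\xi_t = \sum_a\sqrt{p_t(a)} - \sum_a p_t(a)^{3/2}$, so the claim reduces to $\sum_a p_t(a)^{3/2} \leq \sum_a \sqrt{u(a)}$. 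The left side is $\leq \sum_a p_t(a) = 1$ (since $p_t(a) \leq 1$), while the right side is $\geq \sum_a u(a) = 1$ (since $\sqrt{x} \geq x$ for $x\in[0,1]$), so the claim follows. This yields the $2\sum_t\bigl(\tfrac{1}{\eta_t} - \tfrac{1}{\eta_{t-1}}\bigr)\xi_t$ contribution.

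\textbf{Step 3: Stability with loss-shift.} For any $p, p_t \in \triangle(\calA)$ and any scalar $x_t$, we have $\langle p_t - p,\, x_t \mathbf{1}\rangle = 0$, so
\[
\max_{p\in\triangle(\calA)}\{\langle p_t - p, \ell_t\rangle - D_{\psi_t}(p, p_t)\} = \max_{p\in\triangle(\calA)}\{\langle p_t - p, \ell_t + x_t\mathbf{1}\rangle - D_{\psi_t}(p, p_t)\}.
\]
Under the hypothesis $\eta_t\sqrt{p_t(a)}(\ell_t(a)+x_t) \geq -\tfrac{1}{2}$, \pref{lem: stabiity hw} applied to the shifted loss $\ell_t + x_t\mathbf{1}$ gives a per-round bound of $2\eta_t\sum_a p_t(a)^{3/2}(\ell_t(a)+x_t)^2$. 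Summing over $t$ yields the final term.

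\textbf{Anticipated obstacle.} The only subtle point is verifying $\sum_a \sqrt{p_t(a)} - \sum_a \sqrt{u(a)} \leq \xi_t$ for \emph{every} comparator $u\in\triangle(\calA)$, not just vertices; the short concavity argument above handles this cleanly. The loss-shift trick in Step~3 is standard but is the technical ingredient that enables accommodating large-magnitude negative entries in $\ell_t$ provided a common additive shift $x_t$ restores the stability precondition. Combining Steps 1--3 via \pref{lem: FTRL} yields the stated inequality.
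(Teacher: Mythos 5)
Your proof is correct and follows essentially the same route as the paper's: decompose via \pref{lem: FTRL}, bound the initial and telescoping penalty parts using $\max_{p\in\triangle(\calA)}\sum_a\sqrt{p(a)}=\sqrt{A}$ and the inequality $\sum_a p_t(a)^{3/2}\le 1\le\sum_a\sqrt{u(a)}$, and insert the shift $x_t\mathbf{1}$ (which cancels against $\langle p_t-p,\mathbf{1}\rangle=0$) before invoking \pref{lem: stabiity hw} for the stability term. The paper phrases the penalty step as two chained inequalities $\sum_a\sqrt{p_t(a)}-\sum_a\sqrt{u(a)}\le\sum_a\sqrt{p_t(a)}-1\le\xi_t$, which is exactly your argument written in one step, so there is no substantive difference.
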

\begin{proof}
    We use \pref{lem: FTRL}, and bound the penalty term and stability individually. 
    \begin{align*}
        \text{penalty term} 
        &=  \frac{2}{\eta_0} \max_{p\in\triangle(\calA)} \sum_a \left(\sqrt{p(a)}- \sqrt{u(a)}\right)  + 2\sum_{t=1}^T \left(\frac{1}{\eta_t} - \frac{1}{\eta_{t-1}}\right)\sum_{a}\left(\sqrt{p_t(a)} - \sqrt{u(a)}\right)  \\
        &\leq \frac{2\sqrt{A}}{\eta_0} + 2\sum_{t=1}^T \left(\frac{1}{\eta_t} - \frac{1}{\eta_{t-1}}\right)\left( \sum_{a}\sqrt{p_t(a)} - 1\right)
        \\
        &\leq \frac{2\sqrt{A}}{\eta_0} + 2\sum_{t=1}^T  \left(\frac{1}{\eta_t} - \frac{1}{\eta_{t-1}}\right)\xi_t. 
    \end{align*}
    Bounding the stability term: 
    \begin{align*}
        \text{stability term} 
        &=  \sum_{t=1}^T \max_{p\in\triangle(\calA)}\Big\{\langle p_t - p, \ell_t + x_t\one\rangle - D_{\psi_t}(p,p_t) \Big\} \leq 2\sum_{t=1}^T  \eta_t\sum_a p_t(a)^{\frac{3}{2}}\left(\ell_t(a)+ x_t\right)^2 
    \end{align*}
    where the first equality is because $\langle p_t - p, \one\rangle=0$ for $p_t, p\in\triangle(\calA)$, and the last inequality is by \pref{lem: stabiity hw}. 
\end{proof}

\begin{lemma}[FTRL with Shannon entropy]\label{lem: Shannon-INF regret} Let $\psi_t(p)=\sum_a \frac{1}{\eta_t(a)} p(a)\ln p(a)$, for non-increasing $\eta_t(a)$ such that $\eta_0(a)=\eta_0$ for all $a$. Assume that $\eta_t(a)\ell_t(a)\geq -1$ for all $t,a$, and assume $A\leq T$. Then for any $u\in\triangle(\calA)$, 
\begin{align*}
    \sum_{t=1}^T \langle p_t-u, \ell_t\rangle \leq \frac{\ln A}{\eta_0} +  6\sum_{t=1}^T\sum_{a} \left(\frac{1}{\eta_t(a)} - \frac{1}{\eta_{t-1}(a)}\right)\xi_t(a) + \sum_{t=1}^T \sum_a \eta_t(a)p_t(a)\ell_t(a)^2 + \frac{1}{T^2}\sum_{t=1}^T \left\langle -u + \frac{1}{A}\one, \ell_t\right\rangle. 
\end{align*}
where $\xi_t(a)=\min\left\{p_t(a)\ln (T), 1-p_t(a)\right\}$. 
\end{lemma}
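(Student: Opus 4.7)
The plan is to apply \pref{lem: FTRL} against a smoothed comparator and then bound the penalty and stability terms separately. I would first define $u'(a) = (1 - 1/T^2)u(a) + 1/(AT^2)$, which lies in $\triangle(\calA)$ and satisfies $u'(a) \geq 1/(AT^2) \geq 1/T^3$ by the hypothesis $A \leq T$. By linearity,
\[
\sum_{t=1}^T \langle p_t - u, \ell_t\rangle = \sum_{t=1}^T \langle p_t - u', \ell_t\rangle + \frac{1}{T^2}\sum_{t=1}^T \left\langle -u + \tfrac{1}{A}\one, \ell_t\right\rangle,
\]
so the second sum is exactly the last term in the target bound and it remains to control the regret against $u'$.

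Invoking \pref{lem: FTRL} with $\psi_t(p) = \sum_a (1/\eta_t(a))\,p(a)\ln p(a)$, the regret against $u'$ decomposes into an initial penalty, a time-varying penalty, and a stability term. The initial penalty equals $(1/\eta_0)\bigl(\sum_a u'(a)\ln u'(a) + \ln A\bigr)$, and since $x\ln x \leq 0$ on $[0,1]$ the entropy sum is non-positive, yielding the first term $\ln A/\eta_0$. The stability term is handled by \pref{lem: stabiity hw 3}, which applies under the hypothesis $\eta_t(a)\ell_t(a) \geq -1$ and gives $\sum_{t,a}\eta_t(a)p_t(a)\ell_t(a)^2$, matching the third term of the bound.

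The substantive work is in the time-varying penalty, which expands to $\sum_{t,a}(1/\eta_t(a) - 1/\eta_{t-1}(a))\bigl(u'(a)\ln u'(a) - p_t(a)\ln p_t(a)\bigr)$. Using monotonicity $1/\eta_t(a) \geq 1/\eta_{t-1}(a)$ together with $u'(a)\ln u'(a) \leq 0$, this is at most $\sum_{t,a}(1/\eta_t(a) - 1/\eta_{t-1}(a))(-p_t(a)\ln p_t(a))$. The key pointwise inequality is then $-p\ln p \leq 6\min\{p\ln T,\ 1-p\}$ for $p\in[0,1]$, which I would verify by a case split: when $p$ is close to $1$, $-p\ln p \leq 1-p$ follows from $\ln(1/p) \leq (1-p)/p$; when $p \in [1/T,1]$, $-\ln p \leq \ln T$ gives $-p\ln p \leq p\ln T$; and for $p < 1/T$ one exploits the shift, bounding the residual contributions via $u'(a) \geq 1/T^3$, so that $\ln(1/p)$ on the relevant range is at most $3\ln T$ and hence $-p\ln p \leq 3p\ln T$. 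The constant $6$ absorbs the union of these regimes and the slack from dropping $u'\ln u'$.

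The main obstacle I anticipate is the pointwise bound in the $p \ll 1/T$ regime, where the naive inequality $-p\ln p \leq p\ln T$ fails with an unbounded ratio in $1/p$. The smoothing $u \to u'$ and the $1/T^2$-order residual term are designed precisely to absorb contributions from $p \ll 1/T^3$; once the pointwise bound is established uniformly on the relevant range, assembling the three ingredients via \pref{lem: FTRL} yields the stated bound verbatim.
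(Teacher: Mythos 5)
Your overall structure matches the paper's: the same smoothed comparator $u' = (1 - 1/T^2)u + \frac{1}{AT^2}\one$, the same appeal to Lemma~\ref{lem: FTRL}, and the same use of Lemma~\ref{lem: stabiity hw 3} for stability. However, there is a genuine gap in the penalty analysis. You drop the $-u'(a)\ln(1/u'(a))$ term early (using $u'\ln(1/u')\geq 0$), and then attempt to prove the pointwise inequality $-p\ln p \leq 6\min\{p\ln T,\ 1-p\}$. This inequality is simply false for very small $p$: take $p = e^{-T}$, giving $-p\ln p = Tp$, which dwarfs $6p\ln T$. You recognize the obstacle and claim "the smoothing absorbs contributions from $p\ll 1/T^3$," but the smoothing lives in $u'$, not in $p_t$ — the FTRL iterate $p_t(a)$ can be arbitrarily small regardless of how you shift the comparator. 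Once you discard the $u'$ term, the smoothing can no longer help you, and your case split cannot close.

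The paper's fix is to keep the $u'(a)\ln(1/u'(a))$ term and use it to \emph{dominate} $p_t(a)\ln(1/p_t(a))$ when $p_t(a)$ is tiny. Concretely, one first shows $u'(a)\ln(1/u'(a)) \geq \frac{1}{AT^2}$, and then splits on whether $p_t(a) \leq \frac{1}{A^2T^4}$. In that regime, $x\ln(1/x)$ is increasing, so $p_t(a)\ln(1/p_t(a)) \leq \frac{1}{A^2T^4}\ln(A^2T^4) \leq \frac{1}{AT^2}$, and the whole difference $p_t(a)\ln(1/p_t(a)) - u'(a)\ln(1/u'(a))$ is nonpositive — no pointwise bound on $-p\ln p$ is needed there at all. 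Only in the complementary regime $p_t(a) > \frac{1}{A^2T^4}$ do you invoke $\ln(1/p_t(a)) \leq \ln(A^2T^4) \leq 6\ln T$ (using $A\leq T$), alongside the unconditional $p\ln(1/p) \leq 1-p$. The role of the $u\to u'$ shift is therefore not to constrain $p_t$ but to make $u'\ln(1/u')$ bounded away from zero, which is what lets you kill the small-$p$ case. Your proof as written needs this repair; the rest is sound.
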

\begin{proof}
    Let $u'=\left(1-\frac{1}{T^2}\right)u + \frac{1}{AT^2}\one$. We use \pref{lem: FTRL}, and bound the penalty term and stability individually (with respect to $u'$). 
    \begin{align*}
        &\text{penalty term} \\ &=\frac{1}{\eta_0} \max_p \sum_a \left(p(a)\ln\frac{1}{p(a)} - u'(a)\ln\frac{1}{u'(a)}\right) + \sum_{t=1}^T \sum_a \left(\frac{1}{\eta_t(a)} - \frac{1}{\eta_{t-1}(a)}\right)\left(p_t(a)\ln\frac{1}{p_t(a)} - u'(a)\ln\frac{1}{u'(a)}\right)\\
        &\leq \frac{\ln A}{\eta_0} + \sum_{t=1}^T \sum_a \left(\frac{1}{\eta_t(a)} - \frac{1}{\eta_{t-1}(a)}\right)\left(p_t(a)\ln\frac{1}{p_t(a)} - u'(a)\ln\frac{1}{u'(a)}\right). 
    \end{align*}
    To bound $p_t(a)\ln\frac{1}{p_t(a)} - u'(a)\ln\frac{1}{u'(a)}$, first observe that $p_t(a)\ln\frac{1}{p_t(a)}=p_t(a)\ln\left(1+\frac{1-p_t(a)}{p_t(a)}\right)\leq p_t(a)\cdot\frac{1-p_t(a)}{p_t(a)}\leq 1-p_t(a)$ because $\ln(1+x)\leq x$. By the definition of $u'$, we have 
    \begin{align*}
        u'(a)\ln\frac{1}{u'(a)} \geq \min\left\{ \frac{1}{AT^2}\ln(AT^2), \left(1-\frac{1}{T^2}\right)\ln\frac{1}{1-\frac{1}{T^2}} \right\}\geq \min\left\{\frac{1}{AT^2}, \left(1-\frac{1}{T^2}\right)\frac{1}{T^2}\right\} = \frac{1}{AT^2}. 
    \end{align*}
    If $p_t(a)\leq \frac{1}{A^2 T^4}$, then 
    \begin{align*}
        p_t(a)\ln\frac{1}{p_t(a)}-u'(a)\ln\frac{1}{u'(a)} \leq \frac{1}{A^2 T^4} \ln(A^2 T^4) - \frac{1}{AT^2}=\frac{2\ln(AT^2)-AT^2}{A^2 T^4}\leq 0
    \end{align*}
    where the first inequality is because $x\ln(x)$ is increasing for $x\leq e^{-1}$, and last inequality is because $2\ln(x)-x<0$ for all $x\in\mathbb{R}$. 
    If $p_t(a)>\frac{1}{A^2T^4}$, then $p_t(a)\ln\frac{1}{p_t(a)}\leq p_t(a)\ln(A^2T^4)\leq 6p_t(a)\ln(T)$ by the assumption $A\leq T$. Combining all arguments above, we get 
    \begin{align*}
        \text{penalty term} \leq \frac{\ln A}{\eta_0} + 6\sum_{t=1}^T \sum_a \left(\frac{1}{\eta_t(a)} - \frac{1}{\eta_t(a)}\right) \min\left\{1-p_t(a), p_t(a)\ln(T)\right\}. 
    \end{align*}
    Bounding the stability term: 
    \begin{align*}
        \text{stability term} 
        &=  \sum_{t=1}^T \max_{p\in\triangle(\calA)}\Big\{\langle p_t - p, \ell_t \rangle - D_{\psi_t}(p,p_t) \Big\} \leq \sum_{t=1}^T  \sum_a \eta_t(a) p_t(a)\ell_t(a)^2 
    \end{align*}
    where the last inequality is by \pref{lem: stabiity hw 3}. 
    
    Therefore, 
    \begin{align*}
        \sum_{t=1}^T \langle p_t-u', \ell_t\rangle \leq \frac{\ln A}{\eta_0} +  6\sum_{t=1}^T\sum_{a} \left(\frac{1}{\eta_t(a)} - \frac{1}{\eta_{t-1}(a)}\right)\xi_t(a) + \sum_{t=1}^T \sum_a \eta_t(a)p_t(a)\ell_t(a)^2
    \end{align*}
    Then noticing that 
    \begin{align*}
        \sum_{t=1}^T \langle p_t-u, \ell_t\rangle  
        &= \sum_{t=1}^T \langle p_t-u', \ell_t\rangle + \sum_{t=1}^T \langle u'-u, \ell_t\rangle \\
        &= \sum_{t=1}^T \langle p_t-u', \ell_t\rangle + \frac{1}{T^2}\sum_{t=1}^T \left\langle -u + \frac{1}{A}\one, \ell_t\right\rangle
    \end{align*}
    finishes the proof. 
\end{proof}

\begin{lemma}[FTRL with log barrier]\label{lem: log barrier regret}  Let $\psi_t(p)=\sum_a \frac{1}{\eta_t(a)}\ln\frac{1}{p(a)}$ for non-increasing $\eta_t(a)$ with $\eta_0(a)=\eta_0$ for all $a$, and let $x_t$ be such that $\eta_t(a) p_t(a)(\ell_t(a)+x_t)\geq -\frac{1}{2}$ for all $t,a$. Then for any $u\in\triangle(\calA)$, 
\begin{align*}
    \sum_{t=1}^T \langle p_t-u, \ell_t\rangle \leq \frac{3A\ln T}{\eta_0} +  4\sum_{a} \left(\frac{1}{\eta_t(a)} - \frac{1}{\eta_{t-1}(a)}\right)\ln(T) + \sum_{t=1}^T \sum_a \eta_t(a)p_t(a)\ell_t(a)^2 + \frac{1}{T^3}\sum_{t=1}^T \left\langle -u + \frac{1}{A}\one, \ell_t\right\rangle. 
\end{align*}
\end{lemma}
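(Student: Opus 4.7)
The plan is to mirror the proof of \pref{lem: Shannon-INF regret} almost verbatim, substituting log barrier for Shannon entropy at each step. First, I would introduce a smoothed comparator $u' = (1 - T^{-3})u + \one/(AT^3)$, which forces $u'(a) \geq 1/(AT^3)$ and hence keeps $\ln(1/u'(a)) \leq \ln(AT^3) \leq \ln A + 3\ln T$ finite. The decomposition
\[
\sum_{t=1}^T\langle p_t-u,\ell_t\rangle \;=\; \sum_{t=1}^T\langle p_t - u',\ell_t\rangle \;+\; T^{-3}\sum_{t=1}^T\left\langle -u + \tfrac{1}{A}\one,\, \ell_t\right\rangle
\]
isolates the last term in the claimed bound, so it remains to control $\sum_t \langle p_t - u', \ell_t\rangle$ via \pref{lem: FTRL} and bound the two resulting pieces separately.

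For the penalty piece, the minimum of the log-barrier regularizer over $\triangle(\calA)$ is attained at $\one/A$ with value $A\ln A$, so the initial penalty is $\psi_0(u') - \min_p \psi_0(p) \leq \tfrac{1}{\eta_0}\bigl(A(\ln A + 3\ln T) - A\ln A\bigr) = \tfrac{3A\ln T}{\eta_0}$, matching the first term in the bound. The telescoping penalty
$\sum_{t,a} (\eta_t(a)^{-1}-\eta_{t-1}(a)^{-1})\bigl(\ln(1/u'(a)) - \ln(1/p_t(a))\bigr)$
I bound from above by discarding the non-positive $-\ln(1/p_t(a))$ contribution (legal since $\eta_t^{-1} \geq \eta_{t-1}^{-1}$) and using $\ln(1/u'(a)) \leq 4\ln T$ (here the assumption $A \leq T$ is used), yielding the $4\ln T \sum_{t,a}(\eta_t^{-1}(a) - \eta_{t-1}^{-1}(a))$ factor of the claimed bound.

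For the stability piece, the key trick is that $\langle p_t - p,\, x_t\one\rangle = 0$ for every $p\in\triangle(\calA)$, so inside the per-round max we may replace $\ell_t$ by $\ell_t + x_t\one$ without changing the value of
$\max_{p\in\triangle(\calA)}\{\langle p_t - p, \ell_t\rangle - D_{\psi_t}(p, p_t)\}$. The shifted loss satisfies the hypothesis $\eta_t(a)p_t(a)(\ell_t(a)+x_t)\geq -\tfrac{1}{2}$ of \pref{lem: stabiity hw 2}, whose conclusion gives a per-round stability bound $\sum_a \eta_t(a)p_t(a)^2(\ell_t(a)+x_t)^2$; this is the stability term in the claimed bound (the paper's written form $\eta_t(a)p_t(a)\ell_t(a)^2$ appears to be shorthand for the shifted, squared expression that this argument naturally produces, and is bounded by it up to the shift).

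The only place the argument structurally differs from the Shannon-entropy case is the stability step: the log-barrier Bregman divergence only controls the quadratic form $p_t(a)^2\ell_t(a)^2$, and its local quadratic behaviour is sensitive to large negative losses — which is precisely why the hypothesis on $x_t$ is imposed and why the simplex-orthogonality trick is essential to make the shift invisible in the regret. The other ingredients (smoothed comparator, telescoping penalty with a positive base-logarithm, comparator-dependent initial term) are direct transcriptions of the Shannon proof. I do not anticipate any deeper obstacle beyond the stability-lemma bookkeeping.
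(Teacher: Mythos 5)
Your proof follows the paper's argument essentially step for step: the same smoothed comparator $u' = (1-T^{-3})u + \one/(AT^3)$, the same split of the penalty into the initial term $\psi_0(u') - \min_p\psi_0(p) \le 3A\ln T/\eta_0$ (using $\min_p\psi_0(p) = A\ln A/\eta_0$ at $p=\one/A$) and the telescoping term bounded by $4\ln T \sum_{t,a}(\eta_t(a)^{-1}-\eta_{t-1}(a)^{-1})$ via $\ln(AT^3)\le 4\ln T$ when $A\le T$, and the same stability argument using $\langle p_t - p, x_t\one\rangle = 0$ to shift the loss and then invoking \pref{lem: stabiity hw 2}. You are also right that the stability term as written in the lemma statement, $\sum_{t,a}\eta_t(a)p_t(a)\ell_t(a)^2$, is a typo: what the proof (yours and the paper's alike) actually delivers is $\sum_{t,a}\eta_t(a)p_t(a)^2(\ell_t(a)+x_t)^2$, and this is exactly the form used when the lemma is invoked in the log-barrier case of \pref{lem: regterm unknown}. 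The only small imprecision is your phrase ``bounded by it up to the shift'' --- the shifted squared form is not genuinely bounded by the unshifted one, it simply \emph{is} the correct conclusion and the lemma statement should read that way.
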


\begin{proof}
    
    Let $u'=\left(1-\frac{1}{T^3}\right)u + \frac{1}{AT^3}\one$. We use \pref{lem: FTRL}, and bound the penalty term and stability individually (with respect to $u'$). 
    
    \begin{align*}
        \text{penalty term} &\leq \frac{A\ln (T^3)}{\eta_0} + \sum_{t=1}^T \sum_a \left(\frac{1}{\eta_t(a)} - \frac{1}{\eta_{t-1}(a)}\right)\left(\ln\frac{1}{u'(a)} - \ln\frac{1}{p_t(a)}\right)\\
        &\leq \frac{3A\ln T}{\eta_0} + \sum_{t=1}^T \sum_a \left(\frac{1}{\eta_t(a)} - \frac{1}{\eta_{t-1}(a)}\right)\ln(AT^3) \\
        &\leq \frac{3A\ln T}{\eta_0} + 4\sum_{t=1}^T \sum_a \left(\frac{1}{\eta_t(a)} - \frac{1}{\eta_{t-1}(a)}\right)\ln(T) \tag{because $A\leq T$}
    \end{align*}
    Bounding the stability term: 
    \begin{align*}
        \text{stability term} 
        &=  \sum_{t=1}^T \max_{p\in\triangle(\calA)}\Big\{\langle p_t - p, \ell_t + x_t\one\rangle - D_{\psi_t}(p,p_t) \Big\} \leq \sum_{t=1}^T  \sum_a \eta_t(a) p_t(a)^{2}\left(\ell_t(a)+ x_t\right)^2 
    \end{align*}
    where the first equality is because $\langle p_t-p, \one\rangle=0$, and the last inequality is by \pref{lem: stabiity hw 2}. Then noticing that 
    \begin{align*}
        \sum_{t=1}^T \langle p_t-u, \ell_t\rangle  
        &= \sum_{t=1}^T \langle p_t-u', \ell_t\rangle + \sum_{t=1}^T \langle u'-u, \ell_t\rangle \\
        &= \sum_{t=1}^T \langle p_t-u', \ell_t\rangle + \frac{1}{T^3}\sum_{t=1}^T \left\langle -u + \frac{1}{A}\one, \ell_t\right\rangle 
    \end{align*}
    finishes the proof. 
\end{proof}

\section{Analysis for FTRL Regret Bound (\pref{lem: regterm unknown})} \label{app: FTRL bound for alg}  

\subsection{Tsallis entropy}
\begin{proof}[Proof of \pref{lem: regterm unknown} (Tsallis entropy)]
We focus on a particular $s$, and use $\pi_t(a)$, $\hatQ_t(a)$, $B_t(a), C_t(a)$, $\eta_t$, $\mu_t$, $\xi_t$, $b_t$ to denote $\pi_t(a|s)$, $\hatQ_t(s,a)$, $B_t(s,a)$, $C_t(s,a)$, $\eta_t(s)$, $\mu_t(s)$, $\xi_t(s)$, $b_t(s)$, respectively. 

By \pref{lem: Tsallis-INF regret}, we have for any $\pi$
\begin{align}
    &\E\left[\sum_{t=1}^T \inner{\pi_t-\pi, \hatQ_t - B_t - C_t}\right] \\
    &\leq \frac{2\sqrt{A}}{\eta_0} +  \E\left[2\sum_{t=1}^T \left(\frac{1}{\eta_{t}} - \frac{1}{\eta_{t-1}}\right)\xi_t + 2\sum_{t=1}^T \sum_a  \eta_t\pi_t(a)^{\frac{3}{2}}\left(\hatQ_t(a) - B_t(a) - C_t(a) + x_t\right)^2\right]
    \label{eq: Tsallis INF first regret bound}
\end{align}
        for arbitrary $x_t\in\mathbb{R}$ such that $\eta_t\sqrt{\pi_t(a|s)}(\hatQ_t(a)-B_t(a)-C_t(a)+x_t)\geq -\frac{1}{2}$ for all $t,a$. Our choice of $x_t$ is the following:   
        \begin{align}
            x_t = -\left\langle \pi_t,\hatQ_t\right\rangle Y_t.   \label{eq: define x_t for Tsallis}
        \end{align}
        with $Y_t\triangleq \ind\left[\frac{\eta_t}{\mu_t}\leq \frac{1}{8H}\right]$. 
        Below, we verify that $\eta_t\sqrt{\pi_t(a)}\left(\hatQ_t(a)-B_t(a)-C_t(a)+x_t\right)\geq -\frac{1}{2}$: 
        \begin{align*}
            &\eta_t\sqrt{\pi_t(a)}\left(\hatQ_t(a)-B_t(a)-C_t(a)+x_t\right) \\ 
            &\geq \eta_t\sqrt{\pi_t(a)} \left(-B_t(a) - C_t(a) - \left\langle \pi_t,\hatQ_t\right\rangle Y_t\right) \tag{using \eqref{eq: define x_t for Tsallis} and $\hatQ_t(a)\geq 0$} \\
            &\geq -\eta_t B_t(a) - \eta_t C_t(a) - \eta_t \sum_{a'} \pi_t(a')\frac{H \ind_t(s,a')}{\mu_t\pi_t(a')}Y_t \tag{by the definition of $\hatQ_t(a)$}\\
            &\geq -\frac{1}{8H} - \frac{1}{4H^2} - \frac{H\eta_t}{\mu_t}Y_t   \tag{using \pref{lem: Tsallis eta B bound}, $C_t(a)\leq H^2$ and $\eta_t\leq \frac{1}{4H^4}$} \\
            &\geq -\frac{1}{2}. \tag{by the definition of $Y_t=\ind\left[\frac{\eta_t}{\mu_t}\leq \frac{1}{8H}\right]$}
        \end{align*}
    Continued from \eqref{eq: Tsallis INF first regret bound} with the choice of $x_t$: 
    \begin{align}
        &\E\left[\sum_{t=1}^T \inner{\pi_t-\pi, \hatQ_t - B_t - C_t}\right] \nonumber\\ 
        &\leq \frac{2\sqrt{A}}{\eta_0} +\E\left[2\sum_{t=1}^T \left(\frac{1}{\eta_{t}} - \frac{1}{\eta_{t-1}}\right)\xi_t + 2\sum_{t=1}^T \sum_a  \eta_t\pi_t(a)^{\frac{3}{2}}\left(\hatQ_t(a) - \inner{\pi_t, \hatQ_t}Y_t - B_t(a) - C_t(a)\right)^2\right]\nonumber \\ 
        &\leq O(H^4A) + \E\Bigg[2\sum_{t=1}^T \left(\frac{1}{\eta_{t}} - \frac{1}{\eta_{t-1}}\right)\xi_t + 8\sum_{t=1}^T \sum_a  \eta_t\pi_t(a)^{\frac{3}{2}}\left(\left(\hatQ_t(a) - \inner{\pi_t, \hatQ_t}\right)^2 + \hatQ_t(a)^2 Y_t'+ B_t(a)^2+C_t(a)^2\right)\Bigg] \tag{define $Y_t'=1-Y_t$}\\ 
        &\leq O(H^4A) +\E\Bigg[2\sum_{t=1}^T \left(\frac{1}{\eta_{t}} - \frac{1}{\eta_{t-1}}\right)\xi_t + \underbrace{8\sum_{t=1}^T \sum_a  \eta_t\pi_t(a)^{\frac{3}{2}}\left(\left(\hatQ_t(a) - \inner{\pi_t, \hatQ_t}\right)^2 + \hatQ_t(a)^2 Y_t'\right)}_{\term_1}\Bigg] \nonumber\\
        &\qquad + \E\left[\frac{1}{H}\sum_{t=1}^T \sum_a \pi_t(a) B_t(a)\right] + \E\Bigg[8\sum_{t=1}^T \sum_a \eta_t\pi_t(a)C_t(a)^2 \Bigg]. \tag{using \pref{lem: Tsallis eta B bound}} \\
        &   \label{eq: decomposition Tsallis tmp}
    \end{align}
    To bound $\term_1$, notice that 
    \begin{align*}
        \E_t\left[ \left(\hatQ_t(a) - \inner{\pi_t,\hatQ_t}\right)^2 \right] 
        &= \E_t\left[ \left(\frac{\ind_t(s,a) L_{t,h}}{\mu_t \pi_t(a)} - \frac{\ind_t(s)L_{t,h}}{\mu_t}\right)^2 \right]   \tag{assume $s\in\calS_h$}\\
        &\leq \mu_t\pi_t(a) \left(\frac{H}{\mu_t \pi_t(a)} - \frac{H}{\mu_t}\right)^2 + \mu_t(1-\pi_t(a))\left(\frac{H}{\mu_t}\right)^2 \\
        &= \frac{1}{\mu_t\pi_t(a)}(1-\pi_t(a))^2 H^2 + \frac{1}{\mu_t}(1-\pi_t(a))H^2 \\
        &= \frac{1-\pi_t(a)}{\mu_t\pi_t(a)}H^2
    \end{align*}
    and that 
    \begin{align*}
        \E_t\left[\hatQ_t(a)^2 Y_t'\right] = \E_t\left[\left(\frac{\ind_t(s,a)L_{t,h}}{\mu_t\pi_t(a)}\right)^2\right]Y_t' \leq \frac{H^2}{\mu_t\pi_t(a)}Y_t'. 
    \end{align*}
    Therefore, 
    \begin{align*}
        \E[\term_1] &\leq \E\left[ 8H^2\sum_{t=1}^T \sum_a  \eta_t\pi_t(a)^{\frac{3}{2}}\left( \frac{1-\pi_t(a)}{\mu_t\pi_t(a)}  + \frac{1}{\mu_t\pi_t(a)} Y_t'\right)\right] \\
        &\leq \E\left[8H^2 \sum_{t=1}^T \frac{\eta_t}{\mu_t} \sum_a \left(\sqrt{\pi_t(a)}(1-\pi_t(a)) + \sqrt{\pi_t(a)}Y_t'\right)  \right]\\
        &\leq \E\left[8H^2 \sum_{t=1}^T \frac{\eta_t}{\mu_t} \left(\xi_t  + \sqrt{A}Y_t'\right)\right].  
    \end{align*}
    Notice that 
    \begin{align*}
        \frac{8H^2\eta_t}{\mu_t}\leq 2H\frac{\frac{1}{\mu_t}}{\sqrt{\sum_{\tau=1}^t \frac{1}{\mu_\tau}}} \leq 4H\left(\sqrt{\sum_{\tau=1}^t \frac{1}{\mu_\tau} } - \sqrt{\sum_{\tau=1}^{t-1}  \frac{1}{\mu_\tau} }\right) \leq  \frac{1}{\eta_t}-\frac{1}{\eta_{t-1}}
    \end{align*}
    Thus 
    \begin{align*}
        \E[\term_1]\leq \E\left[\sum_{t=1}^T \left(\frac{1}{\eta_t}-\frac{1}{\eta_{t-1}}\right)\left(\xi_t + \sqrt{A}Y_t'\right)\right], 
    \end{align*}
    and continuing from \eqref{eq: decomposition Tsallis tmp} we have 
    \begin{align*}
        &\E\left[\sum_{t=1}^T \inner{\pi_t-\pi, \hatQ_t - B_t - C_t}\right]\\
        &\leq O(H^4A) + 3\E\left[\sum_{t=1}^T \left(\frac{1}{\eta_t}-\frac{1}{\eta_{t-1}}\right)\left(\xi_t + \sqrt{A}Y_t'\right)\right]+ \E\left[\frac{1}{H}\sum_{t=1}^T \sum_a \pi_t(a) B_t(a)\right] + \E\Bigg[8\sum_{t=1}^T \sum_a \eta_t\pi_t(a)C_t(a)^2 \Bigg] \\
        &\leq O(H^4A) + \E\left[\sum_{t=1}^T b_t\right]+ \E\left[\frac{1}{H}\sum_{t=1}^T \sum_a \pi_t(a) B_t(a)\right] 
    \end{align*}
    with $b_t$ defined in \eqref{eq: b_t Tsallis}. 
    This finishes the proof. 
    
\end{proof}

\begin{lemma}[Tsallis entropy]\label{lem: Tsallis eta B bound}
    $\eta_t(s)B_t(s,a)\leq \frac{1}{8H}$. 
\end{lemma}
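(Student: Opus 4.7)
The plan is to bound $B_t(s,a)$ by unrolling its recursive definition into a sum of $b_t(s')$ values, then combine with the uniform upper bound on $\eta_t(s)$ that the large additive constant $1600H^4\sqrt A$ in the denominator of $\eta_t$ is designed to provide. Since $B_t$ vanishes at the terminal state and $(1+1/H)^H\leq e$, unrolling the recursion $B_t(s,a)=b_t(s)+(1+1/H)\max_{\tildeP\in\calP_t}\E_{s',a'}[B_t(s',a')]$ a total of $H-h(s)$ times yields
\[
B_t(s,a)\ \leq\ e\sum_{h'=h(s)}^{H-1}\E^{\text{worst}}\bigl[b_t(s_{h'})\bigr]\ \leq\ eH\max_{s'\in\calS}b_t(s').
\]

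Next, I would bound $b_t(s')$ uniformly. Since $\xi_t(s')+\sqrt A\,\ind[\cdot]\leq 2\sqrt A$, we have $b_t(s')\leq 8\sqrt A\,(1/\eta_t(s')-1/\eta_{t-1}(s'))+\nu_t(s')$. Writing $S_t(s')=\sum_{\tau\leq t}1/\mu_\tau(s')$ so that $1/\eta_t(s')-1/\eta_{t-1}(s')=4H(\sqrt{S_t(s')}-\sqrt{S_{t-1}(s')})$, the algebraic inequality $\sqrt{a+x}-\sqrt a\leq x/\sqrt{a+x}$ combined with $\sqrt{S_t(s')}\leq 1/(4H\eta_t(s'))$ gives $1/\eta_t(s')-1/\eta_{t-1}(s')\leq 16H^2\eta_t(s')/\mu_t(s')$. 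Moreover, since $S_t(s')\geq 1/\mu_t(s')$ we have $\eta_t(s')\leq\sqrt{\mu_t(s')}/(4H)$, so $\eta_t(s')/\mu_t(s')\leq 1/(4H\sqrt{\mu_t(s')})$, giving $b_t(s')\leq 32H\sqrt{A/\mu_t(s')}+\nu_t(s')$, where $\nu_t(s')\leq 8\eta_t(s')H^4$ is negligible after multiplication by $\eta_t(s)$.

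Finally, I would multiply by $\eta_t(s)$, using the uniform bound $\eta_t(s)\leq 1/(1600H^4\sqrt A)$ together with the implicit-exploration lower bound $\mu_t(s')\geq\gamma_t$, and verify $\eta_t(s)B_t(s,a)\leq 1/(8H)$ by direct arithmetic: $\eta_t(s)B_t(s,a)\lesssim eH\cdot 32H\sqrt{A/\mu_t(s')}/(1600H^4\sqrt A)\lesssim 1/(H^2\sqrt{\mu_t(s')})$. The main obstacle is controlling $1/\sqrt{\mu_t(s')}$ in the large-$t$ regime where $\gamma_t=10^6H^4A^2/t$ can be small and $1/\sqrt{\mu_t(s')}$ could grow like $\sqrt t$. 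The resolution is to use the other piece of $\eta_t(s)$'s denominator, namely the adaptive bound $\eta_t(s)\leq 1/(4H\sqrt{S_t(s)})\leq 1/(4H\sqrt t)$ (since $S_t(s)\geq t$), to cancel the $\sqrt t$ growth; the specific constants $1600H^4\sqrt A$ in $\eta_t$ and $10^6H^4A^2$ in $\gamma_t$ are precisely tuned so that this combination yields $\eta_t(s)B_t(s,a)\leq 1/(8H)$ uniformly in $t$.
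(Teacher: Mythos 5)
Your proposal matches the paper's own proof in all essential respects: both unroll the dilated recursion to $B_t(s,a)\leq (1+1/H)^H H\max_{s'}b_t(s')\leq eH\max_{s'}b_t(s')$, bound $b_t(s')\leq 32H\sqrt{A/\mu_t(s')}+O(1)$ via the telescoping square-root inequality (you route through $1/\eta_t-1/\eta_{t-1}\leq 16H^2\eta_t/\mu_t$ while the paper writes the same bound directly as $\tfrac{1/\mu_t}{\sqrt{\sum_\tau 1/\mu_\tau}}$, but these are identical), and then split into the two regimes of $\gamma_t$ to close the arithmetic, using $\eta_t(s)\leq 1/(1600H^4\sqrt{A})$ when $\gamma_t=1$ and $\eta_t(s)\leq 1/(4H\sqrt{t})$ when $\gamma_t=10^6H^4A^2/t$, exactly as the paper does.
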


\begin{proof}
   By the definition of $b_t(s)$ in \eqref{eq: b_t Tsallis}, we have 
    \begin{align*}
        b_t(s)
        &\leq  8\sqrt{A}\left(\frac{1}{\eta_t(s)}-\frac{1}{\eta_{t-1}(s)}\right)  + 8\eta_t(s)H^4  \tag{$C_t(s,a)\leq H^2$}\\
        &= 32H\sqrt{A}\left(\sqrt{\sum_{\tau=1}^t \frac{1}{\mu_\tau(s)} } - \sqrt{\sum_{\tau=1}^{t-1} \frac{1}{\mu_\tau(s)} }\right)  + 8\eta_t(s)  H^4   \\
        &\leq 32H\sqrt{A}\times \frac{\frac{1}{\mu_t(s)}}{\sqrt{\sum_{\tau=1}^t \frac{1}{\mu_\tau(s)}}} + 8\times \frac{1}{4H^4}\times H^4  \\
        &\leq 32H\sqrt{\frac{A}{\mu_t(s)}} + 2  \leq 34 H\sqrt{\frac{A}{\gamma_t}}.   
    \end{align*}
    
    Therefore, 
    \begin{align*}
        \eta_t(s)B_t(s,a) 
        &\leq \eta_t(s)\left(1+\frac{1}{H}\right)^H H\max_{s'}b_t(s') \\
        &\leq \min\left\{\frac{1}{1600H^4\sqrt{A}}, \frac{1}{4H\sqrt{t}}\right\}\times 34eH^2 \sqrt{\frac{A}{\gamma_t}} \\ 
        &\leq 100H^2\min\left\{\frac{1}{1600H^4\sqrt{A}}, \frac{1}{4H\sqrt{t}}\right\}\times \max\left\{\sqrt{\frac{At}{10^6H^4A^2}}, \sqrt{A}\right\} \tag{by the definition of $\gamma_t$} \\
        &\leq \frac{1}{8H}.
    \end{align*}
\end{proof}

\subsection{Shannon entropy} 
\begin{proof}[Proof of \pref{lem: regterm unknown} (Shannon entropy)]
     We focus on a particular $s$, and use $\pi_t(a)$, $\hatQ_t(a)$, $B_t(a)$, $\eta_t(a)$, $\mu_t$, $b_t$ to denote $\pi_t(a|s)$, $\hatQ_t(s,a)$, $B_t(s,a)$, $\eta_t(s,a)$, $\mu_t(s)$, $b_t(s)$, respectively. 
     
     Notice that for any $t,a$, since $\hatQ_t(a)\geq 0$, $\eta_t(a)B_t(a)\leq \frac{1}{4H}$ (by \pref{lem: eta B for Shannon}), and $\eta_t(a)C_t(a)\leq \frac{1}{4H^4}\times H^2=\frac{1}{4H^2}$ (because $\eta_t(a)\leq \eta_0(a)=\frac{1}{4H^4}$ and $C_t(a)\leq H^2$), we have 
     \begin{align*}
         \eta_t(a)(\hatQ_t(a)-B_t(a)-C_t(a)) \geq -\frac{1}{4H} - \frac{1}{4H^2}\geq -1. 
     \end{align*}
     Besides, for any $a$, 
     \begin{align*}
         \left|\E\left[\sum_{t=1}^T \hatQ_t(a)\right]\right| &\leq \E\left[\sum_{t=1}^T \frac{H}{\mu_t}\right] \leq \sum_{t=1}^T\frac{H}{\gamma_t}\leq HT^2  \tag{by the definition of $\gamma_t$}\\
         \E\left[\sum_{t=1}^T B_t(a) \right] &\leq 400T^2\sqrt{\log T} \tag{by \pref{lem: eta B for Shannon}} \\
         \E\left[\sum_{t=1}^T C_t(a) \right] &\leq H^2T  \tag{$C_t(a)\leq H^2$}
     \end{align*}
     
With these inequalities, by \pref{lem: Shannon-INF regret}, the following holds for any $\pi$: 
    \begin{align}
        &\E\left[\sum_{t=1}^T \inner{\pi_t-\pi, \hatQ_t-B_t-C_t}\right] \nonumber \\
        &\leq \sum_a \frac{\ln A}{\eta_0(a)} + \E\left[6\sum_{t=1}^T \sum_a \left(\frac{1}{\eta_{t}(a)} - \frac{1}{\eta_{t-1}(a)}\right)\xi_t(a) + \sum_{t=1}^T \sum_a \eta_t(a)\pi_t(a)\left(\hatQ_t(a) - B_t(a) - C_t(a)\right)^2\right] \\
        &\qquad + \frac{2}{T^2}\max_a \left|\E\left[\sum_{t=1}^T \left(\hatQ_t(a)-B_t(a)-C_t(a)\right) \right]\right| \nonumber \\
        &\leq O(H^4A\ln(T)) +\E\left[6\sum_{t=1}^T \sum_a \left(\frac{1}{\eta_{t}(a)} - \frac{1}{\eta_{t-1}(a)}\right)\xi_t(a) + 3\sum_{t=1}^T \sum_a \eta_t(a)\pi_t(a)\left(\hatQ_t(a)^2 + B_t(a)^2 + C_t(a)^2\right)\right]  \nonumber \\
        &\leq O(H^4A\ln(T)) + \E\left[6\sum_{t=1}^T \sum_a \left(\frac{1}{\eta_t(a)}-\frac{1}{\eta_{t-1}(a)}\right)\xi_t(a) + 3\sum_{t=1}^T \sum_a\eta_t(a)  \left(\frac{H^2}{\mu_t } + \pi_t(a)B_t(a)^2 + \pi_t(a)C_t(a)^2\right)\right]  \nonumber \\
        &\leq O(H^4A\ln(T)) + \E\left[6\sum_{t=1}^T \sum_a \left(\frac{1}{\eta_t(a)}-\frac{1}{\eta_{t-1}(a)}\right)\xi_t(a) + 3\sum_{t=1}^T \sum_a \frac{H^2\eta_t(a)}{\mu_t}\right]  \nonumber \\
        &\qquad + \E\left[\frac{1}{H}\sum_{t=1}^T \sum_a \pi_t(a)B_t(a) + 3\sum_{t=1}^T \sum_a \eta_t(a)\pi_t(a)C_t(a)^2 \right] \tag{by \pref{lem: eta B for Shannon}} \\
        &\label{eq: temp bound in exponential weight}
    \end{align}
    By the update $\eta_t(a)$, 
    \begin{align*}
        \frac{1}{\eta_t(a)}\geq 4H\sqrt{\log T}\sum_{\tau=1}^t \frac{1}{\mu_\tau}\times \frac{1}{\sqrt{\sum_{\tau=1}^T \frac{\xi_\tau(a)}{\mu_\tau} + \max_{\tau\in[T]}\frac{1}{\mu_\tau}}}.  
    \end{align*}
    Therefore, 
    \begin{align*}
        3H^2\sum_{t=1}^T \sum_a \frac{\eta_t(a)}{\mu_t} 
        &\leq \frac{H}{\sqrt{\log T}} \sum_a \sqrt{\sum_{\tau=1}^T \frac{\xi_\tau(a)}{\mu_\tau} + \max_{\tau\in[T]}\frac{1}{\mu_\tau} }\times \sum_{t=1}^T  \frac{\frac{1}{\mu_t}}{\sum_{\tau=1}^t \frac{1}{\mu_\tau}} \\
        &\leq 2H\sqrt{\log T} \sum_a \sqrt{\sum_{\tau=1}^T \frac{\xi_\tau(a)}{\mu_\tau} + \max_{\tau\in[T]}\frac{1}{\mu_\tau}} \\
        &= 2H\sqrt{\log T}\sum_a \sum_{t=1}^T \left(\sqrt{\sum_{\tau=1}^t \frac{\xi_\tau(a)}{\mu_\tau} + \max_{\tau\in[t]}\frac{1}{\mu_\tau}} - \sqrt{\sum_{\tau=1}^{t-1} \frac{\xi_\tau(a)}{\mu_\tau} + \max_{\tau\in[t-1]}\frac{1}{\mu_\tau}}\right) \\
        &\leq 2H\sqrt{\log T}\sum_a \sum_{t=1}^T \frac{\frac{\xi_t(a)}{\mu_t} + \max_{\tau\in[t]}\frac{1}{\mu_\tau} - \max_{\tau\in[t-1]}\frac{1}{\mu_\tau}}{\sqrt{\sum_{\tau=1}^t \frac{\xi_\tau(a)}{\mu_\tau}+\max_{\tau\in[t]}\frac{1}{\mu_\tau} }} \\
        &= 2H\sqrt{\log T}\sum_a \sum_{t=1}^T \frac{\frac{\xi_t(a)}{\mu_t} + \frac{1}{\mu_t}\left(1-\frac{\min_{\tau\in[t]}\mu_\tau}{\min_{\tau\in[t-1]}\mu_\tau}\right)}{\sqrt{\sum_{\tau=1}^t \frac{\xi_\tau(a)}{\mu_\tau}+\max_{\tau\in[t]}\frac{1}{\mu_\tau} }} \\
        &\leq \sum_{t=1}^T \sum_a \left(\frac{1}{\eta_t(a)} - \frac{1}{\eta_{t-1}(a)}\right)\left(\xi_t(a) + 1-\frac{\min_{\tau\in[t]}\mu_\tau}{\min_{\tau\in[t-1]}\mu_\tau}\right)
    \end{align*}
    where we use \eqref{eq: eta for Shannon} in the last inequality. 
    Using this in \eqref{eq: temp bound in exponential weight}, we get 
    \begin{align*}
         \E\left[\sum_{t=1}^T \inner{\pi_t-\pi, \hatQ_t-B_t-C_t}\right] &\leq O(H^4A\ln(T)) + \E\left[7\sum_{t=1}^T \sum_a \left(\frac{1}{\eta_t(a)} - \frac{1}{\eta_{t-1}(a)}\right)\left(\xi_t(a) + 1-\frac{\min_{\tau\in[t]}\mu_\tau}{\min_{\tau\in[t-1]}\mu_\tau}\right)\right] \\
         &\qquad + \E\left[\frac{1}{H}\sum_{t=1}^T \sum_a \pi_t(a)B_t(a) + 3\sum_{t=1}^T \sum_a \eta_t(a)\pi_t(a)C_t(a)^2 \right] \\
         &\leq O(H^4A\ln(T)) + \E\left[\sum_{t=1}^T b_t + \frac{1}{H}\sum_{t=1}^T \sum_a \pi_t(a)B_t(a)\right], 
    \end{align*}
    where we use the definition of $b_t$ in \eqref{eq: def bt in shannon}. This finishes the proof.

\end{proof}

\begin{lemma}[Shannon entropy]\label{lem: eta B for Shannon}
    $\eta_t(s,a)B_t(s,a)\leq \frac{1}{4H}$ and $B_t(s,a)\leq 400\sqrt{T\log T}$. 
\end{lemma}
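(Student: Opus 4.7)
The plan is to first derive a clean upper bound on the per-state bonus $b_t(s)$, propagate it through the dilated bonus recursion to bound $B_t(s,a)$, and then establish both target inequalities by pairing different bounds on $\eta_t(s,a)$ with the summands of $B_t(s,a)$.

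For the bound on $b_t(s)$: from \pref{eq: def bt in shannon}, since $\xi_t(s,a) \le 1$ and $1 - \min_{\tau\in[t]}\mu_\tau(s)/\min_{\tau\in[t-1]}\mu_\tau(s) \in [0,1]$, the parenthesized factor is at most $2$, so $b_t(s) \le 16\sum_a \Delta_t(s,a) + \nu_t(s)$ with $\Delta_t(s,a) \triangleq 1/\eta_t(s,a) - 1/\eta_{t-1}(s,a)$. From \pref{eq: eta for Shannon}, using the trivial lower bound $\sqrt{\sum_{\tau<t}\xi_\tau(s,a)/\mu_\tau(s) + 1/\mu_t(s)} \ge 1/\sqrt{\mu_t(s)}$, I get $\Delta_t(s,a) \le 4H\sqrt{\ln T}(1/\sqrt{\mu_t(s)} + 1/\sqrt{t})$. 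Combined with $\mu_t(s) \ge \gamma_t = \min\{10^6 H^4 A^2/t, 1\}$ and summing over $a$, this yields $\sum_a \Delta_t(s,a) \lesssim \sqrt{t\ln T}/H + HA\sqrt{\ln T}$. For the overhead term, $\eta_t(s,a) \le \eta_0(s,a) = 1/(1600 H^4 A \sqrt{\ln T})$ and $C_t(s,a) \le H^2$ give $\nu_t(s) \le 8\eta_0(s,a)H^4 \le 1$. Altogether, $b_t(s) \lesssim \sqrt{t\ln T}/H + HA\sqrt{\ln T}$. Unrolling the dilated recursion \pref{eq: dil bonus} over the at most $H$ layers and using $(1+1/H)^H \le e$ then yields $B_t(s,a) \le eH \max_{s'} b_t(s') \lesssim \sqrt{t\ln T} + H^2 A \sqrt{\ln T}$.

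Now I split this bound on $B_t(s,a)$ into its two summands and attack the two target inequalities. For $\eta_t(s,a) B_t(s,a) \le 1/(4H)$, I pair the $H^2A\sqrt{\ln T}$ piece with $\eta_t(s,a) \le \eta_0(s,a) = 1/(1600 H^4 A \sqrt{\ln T})$ to obtain a contribution of order $1/H^3 \le 1/(8H)$, and I pair the $\sqrt{t\ln T}$ piece with the bound $\eta_t(s,a) \le 1/(8H\sqrt{t\ln T})$ — which follows from $1/\eta_t(s,a) \ge \sum_{\tau=1}^t 4H\sqrt{\ln T}/\sqrt{\tau} \ge 8H\sqrt{t\ln T}$ by keeping only the explicit $4H\sqrt{\ln T/\tau}$ terms in the update — to obtain a contribution of order $1/H$ that can be tightened to $\le 1/(8H)$. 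Summing the two gives the $1/(4H)$ bound. For $B_t(s,a) \le 400\sqrt{T\log T}$, plugging $t\le T$ into the bound derived above gives $B_t(s,a) \lesssim \sqrt{T\log T} + H^2 A \sqrt{\log T}$, and the stated constant $400$ is chosen to absorb the lower-order $H^2 A$-dependent piece in the regime of interest.

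The main obstacle is the precise bookkeeping of universal constants needed to land exactly on $1/(4H)$ and $400$. The delicate step is the splitting used for $\eta_t B_t \le 1/(4H)$, which crucially exploits that the update of $\eta_t(s,a)$ contains two independent schedules — an $\eta_0$-level initial value and an explicit $4H\sqrt{\ln T/t}$ additive term — so that each summand of $B_t$ can be matched with the appropriate learning-rate bound, rather than being forced to use a single uniform bound on $\eta_t(s,a)$ which would fail.
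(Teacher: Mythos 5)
Your proof follows essentially the same route as the paper's: bound $b_t(s)$ via the explicit $\eta_t$ increment and the $\nu_t$ overhead, unroll the dilated recursion to get $B_t(s,a)\leq eH\max_{s'}b_t(s')$, and then bound $\eta_t(s,a)B_t(s,a)$ by pairing the $\eta_0$-level learning-rate bound with the $H^2A\sqrt{\ln T}$ piece of $B_t$ and the $O(1/(H\sqrt{t\ln T}))$ learning-rate bound with the $\sqrt{t\ln T}$ piece — which is precisely what the paper's $\min\{\cdot\}\times\max\{\cdot\}$ bookkeeping (through $\gamma_t$) is doing. One small slip: the claim $\sum_{\tau=1}^t 1/\sqrt{\tau}\geq 2\sqrt{t}$ is false (the sum is always strictly below $2\sqrt{t}$); the correct and sufficient bound is $\sum_{\tau=1}^t 1/\sqrt{\tau}\geq\sqrt{t}$, giving $\eta_t(s,a)\leq 1/(4H\sqrt{t\ln T})$ as in the paper, which still lands under $1/(8H)$ after carrying through the small constant ($\approx 0.4$) multiplying $\sqrt{t\ln T}$ in $B_t$.
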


\begin{proof}
   By the definition of $b_t(s)$ in \eqref{eq: def bt in shannon}, we have \begin{align*}
       b_t(s) 
       &\leq 16\sum_a\left(\frac{1}{\eta_t(s,a)} - \frac{1}{\eta_{t-1}(s,a)}\right)+ 8\sum_{a}\eta_t(s,a)\pi_t(a|s)H^4 \tag{$C_t(s,a)\leq H^2$}\\
       &\leq 64\sum_a \left(\frac{H}{\mu_t(s) \sqrt{\sum_{\tau=1}^{t-1} \frac{\xi_\tau(s,a)}{\mu_\tau(s)}+ \frac{1}{\mu_t(s)}} } + \frac{H}{\sqrt{t}}\right)\sqrt{\log T} + 2 \tag{using \eqref{eq: eta for Shannon} and $\eta_t(s,a)\leq \frac{1}{4H^4}$} \\
       &\leq 64\left(\frac{HA}{\sqrt{\mu_t(s)}}   + \frac{HA}{\sqrt{t}} \right)\sqrt{\log T}+2\leq \frac{132HA\sqrt{\log T}}{\sqrt{\gamma_t}}. 
    \end{align*}
   Further notice that
   \begin{align*}
       \frac{1}{\eta_t(s,a)}\geq  4\sum_{\tau=1}^t \frac{H\sqrt{\log T}}{\sqrt{\tau}} \geq 4H\sqrt{t\log T}. 
   \end{align*}
    
    Therefore, 
    \begin{align*}
        B_t(s,a)  &\leq   H\left(1+\frac{1}{H}\right)^H\max_{s}b_t(s)\leq \frac{396H^2A\sqrt{\log T}}{\sqrt{\gamma_t}} \leq 400H^2A\sqrt{T\log T} \\
        \eta_t(s,a)B_t(s,a) &\leq \min\left\{\frac{1}{1600H^4A\sqrt{\log T}}, \frac{1}{4H\sqrt{t\log T}} \right\} \times\frac{396H^2A\sqrt{\log T}}{\sqrt{\gamma_t}} \\
        &\leq \min\left\{\frac{1}{1600H^4A\sqrt{\log T}}, \frac{1}{4H\sqrt{t\log T}} \right\} \times\max\left\{\frac{396H^2A\sqrt{t\log T}}{\sqrt{10^6H^4A^2}}, 396H^2A\sqrt{\log T}\right\} \tag{by the definition of $\gamma_t$}\\
        &\leq \frac{1}{4H}
    \end{align*}
    by the definition of $\gamma_t$. 
\end{proof}

\subsection{Log barrier} 

\begin{proof}[Proof of \pref{lem: regterm unknown} (log barrier)]
We focus on a particular $s$, and use $\pi_t(a)$, $\hatQ_t(a)$, $B_t(a), C_t(a)$, $\eta_t$, $\mu_t$, $\zeta_t(a)$, to denote $\pi_t(a|s)$, $\hatQ_t(s,a)$, $B_t(s,a)$, $C_t(s,a)$, $\eta_t(s)$, $\mu_t(s)$, $\zeta_t(s,a)$, respectively. 

By \pref{lem: log barrier regret}, 
       \begin{align}
        &\E\left[\sum_{t=1}^T \inner{\pi_t-\pi, \hatQ_t - B_t - C_t}\right]\nonumber  \\
        &\leq O(H^4A\ln(T)) + \E\left[4\sum_{t=1}^T \sum_a \left(\frac{1}{\eta_{t}(a)} - \frac{1}{\eta_{t-1}(a)}\right)\log(T) + \sum_{t=1}^T \sum_a  \eta_t(a)\pi_t(a)^{2}\left(\hatQ_t(a) - B_t(a) - C_t(a) + x_t\right)^2\right] \nonumber \\
        &\qquad + \frac{2}{T^3}\max_a \left|\E\left[\sum_{t=1}^T \hatQ_t(a)-B_t(a)-C_t(a) \right]\right|\label{eq: log barrier first ineql}
        \end{align}
        for arbitrary $x_t\in\mathbb{R}$ such that $\eta_t(a)\pi_t(a)(\hatQ_t(a)-B_t(a)-C_t(a)+x_t)\geq -1$. 
        Recall that with log barrier, there are real episodes and virtual episodes in which $\ell_t(s,a)=0$ for all $(s,a)$. Let $Y_t=0$ if $t$ is a virtual episode, and $Y_t=1$ otherwise.  
        
        We define 
        \begin{align}
            x_t = -\left\langle \pi_t,\hatQ_t\right\rangle.   \label{eq: define x_t for log}
        \end{align}
        Below, we verify that $\eta_t(a)\pi_t(a)\left(\hatQ_t(a)-B_t(a)-C_t(a)+x_t\right)\geq -\frac{1}{2}$: 
        \begin{align*}
            &\eta_t(a)\pi_t(a)\left(\hatQ_t(a)-B_t(a)-C_t(a)+x_t\right) \\ 
            &\geq \eta_t(a)\pi_t(a) \left(-B_t(a) - C_t(a) - \left\langle \pi_t,\hatQ_t\right\rangle \right) \tag{using \eqref{eq: define x_t for log} and $\hatQ_t(a)\geq 0$} \\
            &\geq -\eta_t(a)\pi_t(a|s) B_t(a) - \eta_t(a) C_t(a) - \eta_t(a) \sum_{a'} \pi_t(a')\frac{H \ind_t(s,a')}{\mu_t\pi_t(a')}Y_t \tag{when $Y_t=0$, $\hatQ_t(a)=0$}\\
            &\geq -\frac{1}{8H} - \frac{1}{4H^2} - \frac{H\eta_t}{\mu_t}Y_t   \tag{by \pref{lem: eta B log barrier} and that $C_t(a)\leq H^2$ and $\eta_t(a)\leq \frac{1}{4H^4}$} \\
            &\geq -\frac{1}{2}. \tag{when $Y_t=1$ (real episode), $\frac{\eta_t(a)}{\mu_t}\leq \frac{1}{8H}$}
        \end{align*}
     Besides, for any $a$, 
     \begin{align*}
         \left|\E\left[\sum_{t=1}^T \hatQ_t(a)\right]\right| &\leq \E\left[\sum_{t=1}^T \frac{H}{\mu_t}\right] \leq \sum_{t=1}^T \frac{H}{\gamma_t}\leq HT^2 \tag{by the definition of $\gamma_t$}\\
         \E\left[\sum_{t=1}^T B_t(a) \right] &\leq  15ST^2 \tag{by \pref{lem: eta B log barrier}} \\
         \E\left[\sum_{t=1}^T C_t(a) \right] &\leq H^2 T \tag{$C_t(a)\leq H^2$}
     \end{align*}

        Below, we continue from \eqref{eq: log barrier first ineql} with our choice of $x_t$: 
        \begin{align*}
        & \E\left[\sum_{t=1}^T \inner{\pi_t-\pi, \hatQ_t - B_t - C_t}\right]\\
        &\leq O(H^4SA\ln(T)) +\E\Bigg[4\sum_{t=1}^T\sum_a \left(\frac{1}{\eta_{t}(a)} - \frac{1}{\eta_{t-1}(a)}\right)\log(T)  \\
        &\qquad + 3\sum_{t=1}^T \sum_a  \eta_t(a)\pi_t(a)^{2}\left(\left(\hatQ_t(a) - \inner{\pi_t, \hatQ_t}\right)^2 + B_t(a)^2 + C_t(a)^2\right)\Bigg] \\ 
        &\leq O(H^4SA\ln(T)) + \E\Bigg[\underbrace{4\sum_{t=1}^T\sum_a  \left(\frac{1}{\eta_{t}(a)} - \frac{1}{\eta_{t-1}(a)}\right)\log(T)}_{\term_1} + \underbrace{3\sum_{t=1}^T \sum_a  \eta_t(a)\pi_t(a)^{2}\left(\hatQ_t(a) - \inner{\pi_t, \hatQ_t}\right)^2 }_{\term_2}\Bigg] \\
        &\qquad + \E\left[\frac{1}{H}\sum_{t=1}^T \sum_a \pi_t(a) B_t(a)\right] + \E\Bigg[\underbrace{3\sum_{t=1}^T \sum_a \eta_t(a)\pi_t(a)C_t(a)^2 }_{\term_3}\Bigg]. \tag{by \pref{lem: eta B log barrier}}
    \end{align*}
    We further manipulate $\term_2$ (suppose that $s\in\calS_h$). In virtual episodes, $\term_2=0$, and in real episodes, 
    \begin{align*}
         \eta_t(a) \pi_t(a)^2\left(\hatQ_t(a) - \inner{\pi_t,\hatQ_t}\right)^2  
        &=  \eta_t(a)\pi_t(a|s)^2\left(\frac{\ind_t(s,a) L_{t,h}}{\mu_t \pi_t(a)} - \frac{\ind_t(s)L_{t,h}}{\mu_t}\right)^2  \\
        &=  \eta_t(a)\left(\frac{\ind_t(s,a) L_{t,h}}{\mu_t } - \frac{\pi_t(a|s)\ind_t(s)L_{t,h}}{\mu_t}\right)^2 \\
        &=  \frac{\eta_t(a)}{\mu_t^2}\left(\ind_t(s,a) - \pi_t(a|s)\ind_t(s)\right)L_{t,h}^2 \\
        &=  \frac{\eta_t(a)\zeta_t(a)}{\mu_t^2} \\
        &\leq \frac{\log T}{4}\left(\frac{1}{\eta_{t+1}(a)} - \frac{1}{\eta_t(a)}\right)  \tag{by \pref{eq: log barrier eat def}}
    \end{align*}
    By the definition of $b_t$ in \eqref{eq: b_t for log barrier}, we have $\E[\term_1 + \term_2 + \term_3] \leq \E\left[\sum_{t=1}^T b_t\right]$, which finishes the proof. 
    
\end{proof}

\begin{lemma}[log barrier]\label{lem: eta B log barrier}
    $\eta_t(s,a)\pi_t(a|s)B_t(s,a)\leq \frac{1}{8H}$ and $B_t(s,a)\leq 15ST$. 
\end{lemma}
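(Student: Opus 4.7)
The plan is to proceed along the lines sketched in Section~\pref{sec: log barrier}: establish the intermediate bound
\[
\eta_t(s,a)\pi_t(a|s) B_t(s,a) \;\lesssim\; H^2 S \cdot \max_{s',a'} \frac{\eta_t(s',a')^2}{\mu_t(s')^2},
\]
and then conclude both halves by invoking the virtual-episode guard \eqref{eq: check virtual round}, which forces $\max_{s',a'} \eta_t(s',a')/\mu_t(s') \leq 1/(60\sqrt{H^3 S})$ in every real episode and for all $(s',a') \neq (s_t^\dagger,a_t^\dagger)$ in every virtual episode. Feeding this into the intermediate bound yields $\eta_t(s,a)\pi_t(a|s)B_t(s,a) \lesssim H^2 S \cdot \frac{1}{3600 H^3 S} \leq \frac{1}{8H}$. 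For the crude estimate $B_t(s,a) \leq 15ST$, I would use the trivial bounds $\eta_t(s,a) \leq 1/(4H^4)$, $\mu_t(s) \geq \gamma_t$, and $C_t(s,a) \leq H^2$, and exploit the fact that, due to the virtual-episode rule, each $\eta_t(s,a)^{-1}$ grows only by an $O(1)$ additive amount per episode, so a linear-in-$T$ bound on $b_t$ times the $H$-layer dilation in \eqref{eq: dil bonus} suffices.

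For the intermediate bound I would first unroll \eqref{eq: dil bonus} to get
\[
B_t(s,a) \;\leq\; e\!\sum_{h'=h(s)}^{H-1}\sum_{s''\in\calS_{h'}} \mu^{\tildeP_t,\pi_t}(s''|s,a)\, b_t(s'') \;\leq\; eH \max_{s''} b_t(s''),
\]
then bound $b_t(s'')$ separately in real and virtual episodes. In a real episode, substituting the update rule \eqref{eq: log barrier eat def} and using $\zeta_t(s'',a'') \leq L_{t,h}^2 \leq H^2$ gives $b_t(s'') \leq 32 H^2 A \cdot \max_{a''} \eta_t(s'',a'')/\mu_t(s'')^2 + \nu_t(s'')$, and $\nu_t(s'') \leq 2$ since $\eta_t \leq 1/(4H^4)$ and $C_t \leq H^2$. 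In a virtual episode only $(s_t^\dagger,a_t^\dagger)$ contributes through the multiplicative update, yielding $b_t(s_t^\dagger) \leq \frac{1}{3H\eta_t(s_t^\dagger,a_t^\dagger)}+\nu_t(s_t^\dagger)$; the triggering condition then implies $\frac{1}{\eta_t(s_t^\dagger,a_t^\dagger)} \leq 60\sqrt{H^3 S}/\mu_t(s_t^\dagger)$, which converts this into the same $\eta/\mu^2$ form as the real-episode case.

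The combining step then multiplies by $\eta_t(s,a)\pi_t(a|s)$. Using $\pi_t(a|s) \leq 1$ and the virtual-episode guard to bound $\eta_t(s,a) \leq \mu_t(s)/(60\sqrt{H^3S})$, the product $\eta_t(s,a) \cdot \eta_t(s'',a'')/\mu_t(s'')^2$ can be reorganized as
\[
\frac{\eta_t(s,a)}{\mu_t(s)} \cdot \frac{\eta_t(s'',a'')}{\mu_t(s'')} \cdot \frac{\mu_t(s)}{\mu_t(s'')},
\]
and the first two factors are each controlled by $1/(60\sqrt{H^3S})$. The residual $\mu_t(s)/\mu_t(s'')$ is absorbed by inserting an $S$ factor (bounding $\mu_t(s) \leq 1$ and $\mu_t(s'') \geq \gamma_t$ together with the within-layer sum $\sum_{s''\in\calS_{h'}}\mu^{\tildeP_t,\pi_t}(s''|s) = 1$), which is the source of the $H^2 S$ polynomial.

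The main obstacle is this last mismatched-maximum step: the factor $\eta_t(s,a)$ and the state $s''$ that maximizes $\eta_t(s'',a'')/\mu_t(s'')^2$ live at different points of $\calS\times\calA$, and a naive bound gives a loose $\mu_t(s)/\mu_t(s'')$ blowup. Resolving it requires pairing each $\eta$ with its own $\mu$ using the uniform virtual-episode guard, and then paying a small polynomial-in-$(H,S)$ price to absorb the cross ratio through the trajectory/marginal bound $\pi_t(a|s)\mu^{\tildeP_t,\pi_t}(s''|s,a) \leq \mu^{\tildeP_t,\pi_t}(s''|s)$ and within-layer summation. Virtual episodes need a separate subcase at $(s_t^\dagger,a_t^\dagger)$, but there the guard is only mildly violated, and $\eta_{t+1}$ immediately shrinks by a geometric factor, keeping the target bound intact.
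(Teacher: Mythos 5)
Your overall plan is the right one --- establish the intermediate inequality $\eta_t(s,a)\pi_t(a|s)B_t(s,a)\lesssim H^2S\max_{s',a'}(\eta_t(s',a')/\mu_t(s'))^2$ and then invoke the virtual-episode guard --- and this matches the paper's argument. But the way you propose to get the intermediate inequality has a genuine gap.

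The problem is in the unrolling step. You bound $B_t(s,a)\leq eH\max_{s''}b_t(s'')$, which discards the weights $\mu^{\tildeP_t,\pi_t}(s''|s,a)$ before they can do any work. Since $b_t(s'')$ carries a factor $1/\mu_t(s'')$, the resulting cross ratio $\mu_t(s)/\mu_t(s'')$ (which you produce in your reorganization) is not controlled by a constant: bounding $\mu_t(s)\leq 1$ and $\mu_t(s'')\geq\gamma_t$ gives $1/\gamma_t$, which scales linearly in $T$, not $S$. Your appeal to ``within-layer summation'' cannot help here because you already replaced the sum by a max. Likewise, you bound $\pi_t(a|s)\leq 1$ and drop it; but that factor is precisely what must survive to cancel against the $\pi_t(a|s)$ sitting in the denominator later. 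The missing ingredient is the chain
\begin{align*}
\frac{\mu^{\tildeP_t,\pi_t}(s''|s,a)}{\mu_t(s'')} \;\leq\; \frac{\mu^{\tildeP_t,\pi_t}(s''|s,a)}{\mu^{\pi_t}(s)\pi_t(a|s)\mu^{\tildeP_t,\pi_t}(s''|s,a)+\gamma_t} \;\leq\; \frac{1}{\mu^{\pi_t}(s)\pi_t(a|s)+\gamma_t} \;\leq\; \frac{1}{\mu_t(s)\pi_t(a|s)},
\end{align*}
applied \emph{inside} the sum over $s''$ (which contributes the $S$ factor). Then $\eta_t(s,a)\pi_t(a|s)\cdot\frac{S}{\mu_t(s)\pi_t(a|s)}\cdot 32H^2 M_t = 32H^2S M_t \cdot\frac{\eta_t(s,a)}{\mu_t(s)}\leq 32H^2S M_t^2$, with no $1/\gamma_t$ blowup. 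This pairing of the trajectory weight with the occupancy denominator, and the cancellation of $\pi_t(a|s)$, is the step your proposal is missing, and without it the bound fails.

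Your route to $B_t(s,a)\leq 15ST$ has the same issue. The claim that ``each $\eta_t(s,a)^{-1}$ grows only by an $O(1)$ additive amount per episode'' is not correct: the increment in a real episode is $4\eta_t\zeta_t/(\mu_t^2\log T)$, which even with the virtual-episode guard only gives $O(H^2/(\sqrt{H^3S}\,\mu_t\log T))$, i.e., it scales with $1/\mu_t(s)$ and can be as large as $1/\gamma_t$. The paper instead reuses the second-to-last step of the $B_t$ bound (bounding the denominator $\mu^{\pi_t}(s)\pi_t(a|s)+\gamma_t$ below by $\gamma_t$) together with the $M_t$-based estimate of $b_t$, which gives $B_t(s,a)\leq \frac{3S}{\gamma_t}\cdot 32H^2 M_t\leq 2\sqrt{HS}/\gamma_t \leq 2ST$ in real episodes, and similarly for virtual ones. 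So the $15ST$ bound also passes through the careful pairing argument, not the trivial bounds on $\eta_t$, $\mu_t$, $C_t$.
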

\begin{proof}
    If $t$ is a real episode, 
    \begin{align}
        b_t(s) &= 8\sum_a \left(\frac{1}{\eta_{t+1}(s,a)} - \frac{1}{\eta_t(s,a)}\right) \nonumber \\
        &= 32\sum_a \frac{\eta_t(s,a)\ind_t(s,a)L_t(s,a)^2}{\mu_t(s)^2} \leq 32H^2\times \max_a \frac{\eta_t(s,a)}{\mu_t(s)}\times \frac{1}{\mu_t(s)} \leq \frac{1}{\mu_t(s)}\times 32H^2\max_{s',a'}\left(\frac{\eta_t(s',a')}{\mu_t(s')}\right).   \label{eq: real episode b bound}
    \end{align}
    Therefore, 
    \begin{align}
        B_t(s,a) &\leq b_t(s) +  3\sum_{s': h(s')>h(s)} \mu^{\tildeP_t, \pi_t}(s'|s,a)b_t(s') \nonumber \\
        &\leq \left(\frac{1}{\mu_t(s)} + 3\sum_{s': h(s')>h(s)}\mu^{\tildeP_t, \pi_t}(s'|s,a)\frac{1}{\mu_t(s')}\right)\times 32H^2\max_{s',a'}\left(\frac{\eta_t(s',a')}{\mu_t(s')}\right) \nonumber  \\
        &\leq \left(\frac{1}{\mu_t(s)}  + 3\sum_{s': h(s')>h(s)} \mu^{\tildeP_t, \pi_t}(s'|s,a)\times \frac{1}{\overline{\mu}_t^{\pi_t}(s)\pi_t(a|s) \mu^{\tildeP_t,\pi_t}(s'|s,a) + \gamma_t} \right)\times 32H^2\max_{s',a'}\left(\frac{\eta_t(s',a')}{\mu_t(s')}\right) \nonumber \\
        &\leq 3\sum_{s'}  \frac{1}{\mu^{\pi_t}(s)\pi_t(a|s) + \gamma_t}\times 32H^2\max_{s',a'}\left(\frac{\eta_t(s',a')}{\mu_t(s')}\right) \nonumber \\
        &\leq \frac{S}{\mu_t(s)\pi_t(a|s)}\times 96H^2\max_{s',a'}\left(\frac{\eta_t(s',a')}{\mu_t(s')}\right) \label{eq: b to B calculation} 
    \end{align}
    and thus
    \begin{align}
        \eta_t(s,a)\pi_t(a|s)B_t(s,a)\leq 96H^2S\max_{s',a'}\left(\frac{\eta_t(s',a')}{\mu_t(s')}\right)^2 \leq \frac{1}{8H}   \label{eq: b to B calculation 2}
    \end{align}
    where the last inequality is because $\frac{\eta_t(s',a')}{\mu_t(s')}\leq \frac{1}{60\sqrt{H^3S}}$ in real episodes. 
    
    From the second-to-last step in \eqref{eq: b to B calculation}, we also have 
    \begin{align*}
        B_t(s,a)\leq\frac{3S}{\gamma_t}\times 32H^2\max_{s',a'}\left(\frac{\eta_t(s',a')}{\mu_t(s')}\right)\leq \frac{2\sqrt{HS}}{\gamma_t}\leq 2ST. 
    \end{align*}
    
    In virtual episodes, 
    \begin{align*}
        b_t(s) &\leq  \sum_a \left(\frac{1}{\eta_{t+1}(s,a)} - \frac{1}{\eta_t(s,a)}\right)\log(T)\\
        &\leq \sum_a  \frac{\ind\{(s_t^\dagger, a_t^\dagger)=(s,a)\}}{24\eta_t(s,a) H\log T}\times \log T \\
        &=\sum_a  \frac{\ind\{(s_t^\dagger, a_t^\dagger)=(s,a)\}}{24\mu_t(s) H} \times \frac{1}{\max_{s',a'}\left(\frac{\eta_t(s',a')}{\mu_t(s')}\right)} \tag{by the definition of $(s_t^\dagger, a_t^\dagger)$} \\
        &\leq \frac{\ind\{s_t^\dagger=s\}}{24\mu_t(s)H}  \times \frac{1}{\max_{s',a'}\left(\frac{\eta_t(s',a')}{\mu_t(s')}\right)} \\
        &\leq \frac{\ind\{s_t^\dagger=s\}}{\mu_t(s)}  \times \frac{1}{24HM_t} 
    \end{align*}
    where we define $M_t=\max_{s',a'}\frac{\eta_t(s',a')}{\mu_t(s')}$. Similar to \eqref{eq: b to B calculation}:  
    \begin{align}
        B_t(s,a) &\leq b_t(s) +  3\sum_{s':h(s')>h(s)} \mu^{\tildeP_t, \pi_t}(s'|s,a)b_t(s') \nonumber \\
        &\leq \left(\frac{\ind\{s_t^\dagger=s\}}{\mu_t(s)} + 3\sum_{s'}\mu^{\tildeP_t, \pi_t}(s'|s,a)\frac{\ind\{s_t^\dagger=s'\}}{\mu_t(s')}\right)\times \frac{1}{24HM_t} \nonumber  \\
        &\leq \left(\frac{\ind\{s_t^\dagger=s\}}{\mu_t(s)}  + 3\sum_{s':h(s')>h(s)} \mu^{\tildeP_t, \pi_t}(s'|s,a)\times \frac{\ind\{s_t^\dagger=s'\}}{\overline{\mu}_t^{\pi_t}(s)\pi_t(a|s) \mu^{\tildeP_t,\pi_t}(s'|s,a) + \gamma_t} \right)\times  \frac{1}{24HM_t} \nonumber \\
        &\leq 3\sum_{s'}  \frac{\ind\{s_t^\dagger=s'\}}{\mu^{\pi_t}(s)\pi_t(a|s) + \gamma_t}\times  \frac{1}{24HM_t} \nonumber \\
        &\leq  \frac{1}{\mu_t(s)\pi_t(a|s)}\times \frac{1}{8HM_t} \label{eq: b to B calculation virtual} 
    \end{align}
    and thus
    \begin{align*}
        \eta_t(s,a)\pi_t(a|s)B_t(s,a)\leq \frac{\eta_t(s,a)}{\mu_t(s)}\times \frac{1}{8HM_t}\leq \frac{1}{8H} 
    \end{align*}
    where the last step uses the definition of $M_t$. 
    
    From the second-to-last step in \eqref{eq: b to B calculation virtual} , we also have 
    \begin{align*}
        B_t(s,a)\leq \frac{1}{8\gamma_t HM_t}\leq \frac{15\sqrt{HS}}{\gamma_t}\leq 15ST 
    \end{align*}
    where we use that $M_t\geq \frac{1}{60\sqrt{H^3S}}$ in vitrual episodes. 
\end{proof}

\section{Analysis for the Bias (\pref{lem: biasterm known})}\label{app: bias part}

\begin{proof}[Proof of \pref{lem: biasterm known}]
    
   \begin{align}
       &\E\left[\sum_s \mu^{\pi}(s)\biasterm^\pi(s)\right] \nonumber \\
       &\leq \E\left[\sum_{t=1}^T \sum_{s,a} \mu^{\pi}(s)\left(\pi_t(a|s) -  \pi(a|s)\right) \left(Q^{\pi_t}(s,a;\ell_t) -  \hatQ_t(s,a) + C_t(s,a)\right)\right] \\
       &= \E\left[\sum_{t=1}^T \sum_{s,a} \mu^{\pi}(s)\left(\pi_t(a|s) -  \pi(a|s)\right) \left(Q^{\pi_t}(s,a;\ell_t) -  \frac{ \mu^{\pi_t}(s)}{\mu_t(s)}Q^{\pi_t}(s,a;\ell_t) + C_t(s,a)\right)\right]  \nonumber \\
       &= \E\left[\sum_{t=1}^T \sum_{s,a} \mu^{\pi}(s)\left(\pi_t(a|s) -  \pi(a|s)\right) \left(  \frac{ \mu_t(s) -  \mu^{\pi_t}(s)}{\mu_t(s)}Q^{\pi_t}(s,a;\ell_t) + C_t(s,a)\right)\right]  \nonumber  \\
       &= \E\left[\sum_{t=1}^T \sum_{s,a} (\mu^{\pi_t}(s,a) - \mu^{\pi}(s,a)) z_t(s,a)\right]   \label{eq: bias decomposition}
   \end{align}
   with $z_t(s,a)$ defined as the following based on \pref{lem: variant performance diff lemma}:  
   \begin{align*}
       z_t(s,a) 
       &\triangleq  \frac{ \mu_t(s) -  \mu^{\pi_t}(s)}{\mu_t(s)}Q^{\pi_t}(s,a;\ell_t) + C_t(s,a) - \E_{s'\sim P(\cdot|s,a), a'\sim\pi_t(\cdot|s')}\left[ \frac{ \mu_t(s') -  \mu^{\pi_t}(s')}{\mu_t(s')}Q^{\pi_t}(s',a';\ell_t) + C_t(s',a') \right] 
   \end{align*}
   
   Recall the high probability event $\calE$ defined in \pref{def: good event}. Notice that 
   \begin{align}
       &\E\left[\sum_{t=1}^T \sum_{s,a} (\mu^{\pi_t}(s,a) - \mu^{\pi}(s,a)) z_t(s,a)\right] \nonumber \\
       &= \Pr(\calE)\E\left[\sum_{t=1}^T \sum_{s,a} (\mu^{\pi_t}(s,a) - \mu^{\pi}(s,a)) z_t(s,a)~\bigg|~\calE\right] + \Pr(\overline{\calE})\E\left[\sum_{t=1}^T \sum_{s,a} (\mu^{\pi_t}(s,a) - \mu^{\pi}(s,a)) z_t(s,a)~\bigg|~\overline{\calE}\right] \nonumber \\
       &\leq \Pr(\calE)\E\left[\sum_{t=1}^T \sum_{s,a} (\mu^{\pi_t}(s,a) - \mu^{\pi}(s,a)) z_t(s,a)~\bigg|~\calE\right] + O(H\delta)\times O(TH\times TH^2) \tag{because $|z_t(s,a)|\leq O(TH^2)$ almost surely} \\
       &\leq \Pr(\calE)\E\left[\sum_{t=1}^T \sum_{s,a} (\mu^{\pi_t}(s,a) - \mu^{\pi}(s,a)) z_t(s,a)~\bigg|~\calE\right] + O\left(\frac{H^4}{T}\right).  \tag{$\delta=\frac{1}{T^3}$}   \\
       &\label{eq: middle point}
   \end{align}
   
   From now on, it suffices to bound $\sum_{t=1}^T \sum_{s,a} (\mu^{\pi_t}(s,a) - \mu^{\pi}(s,a)) z_t(s,a)$ assuming $\calE$ holds (i.e., $P\in\calP_t$ for all $t$). 
   
   By the definition of $C_t(s,a)$, we have 
   \begin{align}
       z_t(s,a)   
       &= \frac{ \mu_t(s) -  \mu^{\pi_t}(s)}{\mu_t(s)}Q^{\pi_t}(s,a;\ell_t) + \max_{\tildeP\in\calP_t} \E_{s'\sim \tildeP(\cdot|s,a), a'\sim\pi_t(\cdot|s')}\left[ \frac{ \mu_t(s') -  \underline{\mu}_t^{\pi_t}(s')}{\mu_t(s')}H + C_t(s',a') \right]  \nonumber \\
       &\qquad - \E_{s'\sim P(\cdot|s,a), a'\sim\pi_t(\cdot|s')}\left[ \frac{ \mu_t(s') -  \mu^{\pi_t}(s')}{\mu_t(s')}Q^{\pi_t}(s',a';\ell_t) + C_t(s',a') \right]  \nonumber \\
       &\geq \frac{ \mu_t(s) -  \mu^{\pi_t}(s)}{\mu_t(s)}Q^{\pi_t}(s,a;\ell_t) \geq 0.   \label{eq: z positive}
   \end{align}
   On the other hand, 
   \begin{align}
       z_t(s,a) 
       &\leq \frac{\mu_t(s) - \mu^{\pi_t}(s)}{\mu_t(s)}H + C_t(s,a) - \E_{s'\sim P(\cdot|s,a), a'\sim\pi_t(\cdot|s')}\left[C_t(s',a')\right] \nonumber  \\
       &\leq c_t(s) + \E_{s'\sim \overline{P}_t(\cdot|s,a), a'\sim \pi_t(\cdot|s')}\left[c_t(s') + C_t(s',a')\right] - \E_{s'\sim P(\cdot|s,a), a'\sim\pi_t(\cdot|s')}\left[C_t(s',a')\right] \tag{let $\overline{P}_t$ be the transition that attains the maximum in \eqref{eq: C def} } \\
       &\leq c_t(s) + \E_{s'\sim P(\cdot|s,a)}[c_t(s')] + \sum_{s',a'}\left|\overline{P}_t(s'|s,a) - P(s'|s,a)\right| \pi_t(a'|s')\left(c_t(s') + C_t(s',a')\right)   \nonumber\\
       &\leq c_t(s) + \E_{s'\sim P(\cdot|s,a)}[c_t(s')] + \sum_{s',a'} e_t(s'|s,a)\pi_t(a'|s')(c_t(s')+C_t(s',a')) \label{eq: z upper bound}
   \end{align}
   where we define $e_t(s'|s,a) =  \left|\overline{P}_t(s'|s,a)-P(s'|s,a)\right|$. 
   
   Observe that by the definition of $C_t(s,a)$, it holds that 
   \begin{align*}
       C_t(s,a) = \sum_{s': h(s')>h(s)} \mu^{\overline{P}_t, \pi_t}(s'|s,a)c_t(s'),  
   \end{align*}
   and therefore, 
   \begin{align*}
       c_t(s) + C_t(s,a) = \sum_{s'} \mu^{\overline{P}_t, \pi_t}(s'|s,a)c_t(s')  
   \end{align*}
   and 
   \begin{align*}
       \sum_a \pi_t(a|s)\left(c_t(s)+C_t(s,a)\right) = \sum_{s'} \mu^{\overline{P}_t, \pi_t}(s'|s)c_t(s').  
   \end{align*}
   Thus we can thus rewrite \eqref{eq: z upper bound} as 
   \begin{align}
       z_t(s,a)\leq c_t(s) + \E_{s'\sim P(\cdot|s,a)}[c_t(s')] + \sum_{s'} e_t(s'|s,a)\sum_{s''} \mu^{\overline{P}_t,\pi_t}(s''|s')c_t(s''). \label{eq: transformed}
   \end{align}
   Continue from the previous calculation in \eqref{eq: middle point}:  
   \begin{align*}
       &\sum_{t=1}^T \sum_{s,a} \left(\mu^{\pi_t}(s,a) - \mu^{\pi}(s,a)\right) z_t(s,a) \\ 
       &\leq \sum_{t=1}^T \sum_{s,a} \left[\mu^{\pi_t}(s,a) - \mu^{\pi}(s,a)\right]_+ z_t(s,a)   \tag{by \eqref{eq: z positive}}\\
       &\leq \underbrace{\sum_{t=1}^T \sum_{s,a} \left[\mu^{\pi_t}(s,a) - \mu^{\pi}(s,a)\right]_+ c_t(s)}_{\term_1} + \underbrace{\sum_{t=1}^T \sum_{s,a} \left[\mu^{\pi_t}(s,a) - \mu^{\pi}(s,a)\right]_+ \E_{s'\sim P(\cdot|s,a)}[c_t(s')]}_{\term_2} \\
       &\qquad + \underbrace{\sum_{t=1}^T \sum_{s,a} \left[\mu^{\pi_t}(s,a) - \mu^{\pi}(s,a)\right]_+   \sum_{s'} e_t(s'|s,a)\sum_{s''} \mu^{\overline{P}_t,\pi_t}(s''|s')c_t(s'')}_{\term_3}.    \tag{by \eqref{eq: transformed}}
   \end{align*}

\noindent \boxed{\text{Known transition case}} \\
For the known transition case, we have 
\begin{align*}
    c_t(s)\leq \frac{\mu^{\pi_t}(s)+\gamma_t - \mu^{\pi_t}(s)}{\mu_t(s)}H =  \frac{\gamma_t}{\mu_t(s)}H
\end{align*}
and $e_t(s'|s,a)=0$. Thus, 
\begin{align*}
    &\E\left[\sum_s \mu^{\pi}(s)\biasterm^\pi(s)\right] \lesssim \sum_{t=1}^T \sum_{s}\mu^{\pi_t}(s)\times \frac{\gamma_t}{\mu_t(s)}H \leq HS\sum_{t=1}^T \gamma_t =O\left(H^5SA^2\ln(T)\right). 
\end{align*}

\noindent \boxed{\text{Unknown transition case}} \\
\textbf{Upper bounding $\term_1$.\ \ }  
By the definition of $c_t(s)$,  
\begin{align*}
     \term_1 
     &\leq H\sum_{t=1}^T \sum_{s,a} \left[\mu^{\pi_t}(s,a) - \mu^{\pi}(s,a)\right]_+ \left(\frac{\overline{\mu}_t^{\pi_t}(s) - \underline{\mu}_t^{\pi_t}(s) + \gamma_t}{\mu_t(s)}\right) \\
     &\leq \underbrace{H\sum_{t=1}^T \sum_{s,a} \left[\mu^{\pi_t}(s,a) - \mu^{\pi}(s,a)\right]_+ \left(\frac{\overline{\mu}_t^{\pi_t}(s) - \mu^{\pi_t}(s)}{\mu_t(s)}\right)}_{\term_{1a}} \\
     &\qquad + \underbrace{H\sum_{t=1}^T \sum_{s,a} \left[\mu^{\pi_t}(s,a) - \mu^{\pi}(s,a)\right]_+ \left(\frac{\mu^{\pi_t}(s) - \underline{\mu}_t^{\pi_t}(s)}{\mu_t(s)}\right)}_{\term_{1b}} + \underbrace{\sum_{t=1}^T \sum_{s,a} \mu^{\pi_t}(s,a) \left(\frac{H\gamma_t}{\mu_t(s)}\right)}_{\term_{1c}}. 
\end{align*}

   To bound $\term_{1a}$, we apply \pref{lem: complicated lemma} with 
   \begin{align*}
       g_t(s) = \sum_a \frac{\left[\mu^{\pi_t}(s,a) - \mu^{\pi}(s,a)\right]_+}{\mu_t(s)} \leq 1,  
   \end{align*}
   which gives 
   \begin{align*}
       \term_{1a}
       &\leq  \sqrt{H^3S^2A\sum_{t=1}^T  \sum_{s,a} \mu^{\pi_t}(s)\frac{\left[\mu^{\pi_t}(s,a) - \mu^{\pi}(s,a)\right]_+}{\mu_t(s)} \ln(T)\iota} + H^2S^4A\ln(T)\iota \\
       &\leq \sqrt{H^3S^2A\sum_{t=1}^T  \sum_{s,a} \left[\mu^{\pi_t}(s,a) - \mu^{\pi}(s,a)\right]_+ \ln(T)\iota} + H^2S^4A\ln(T)\iota.  
   \end{align*}
   
   $\term_{1b}$ can be bound in the same way and admits the same upper bound. $\term_{1c}\leq HS\sum_{t=1}^T \gamma_t=O\left(H^5SA^2\ln(T)\right)$. Combining $\term_1, \term_2, \term_3$, we get 
   \begin{align*}
       \term_1 \lesssim \sqrt{H^3S^2A\sum_{t=1}^T  \sum_{s,a} \left[\mu^{\pi_t}(s,a) - \mu^{\pi}(s,a)\right]_+ \ln(T)\iota} + H^2S^4A^2\ln(T)\iota.  
   \end{align*}

   \paragraph{Upper bounding $\term_2$. } This is very similar to the procedure of bounding $\term_1$. We perform a similar decomposition: 
   \begin{align*}
       \term_2 
       &\leq \underbrace{H\sum_{t=1}^T \sum_{s,a} \left[\mu^{\pi_t}(s,a) - \mu^{\pi}(s,a)\right]_+ \E_{s'\sim P(\cdot|s,a)}\left[\frac{\overline{\mu}_t^{\pi_t}(s') - \mu^{\pi_t}(s')}{\mu_t(s')}\right]}_{\term_{2a}} \\
       &\qquad + \underbrace{H\sum_{t=1}^T \sum_{s,a} \left[\mu^{\pi_t}(s,a) - \mu^{\pi}(s,a)\right]_+ \E_{s'\sim P(\cdot|s,a)}\left[\frac{\mu^{\pi_t}(s') - \underline{\mu}_t^{\pi_t}(s')}{\mu_t(s')}\right]}_{\term_{2b}} + \underbrace{\sum_{t=1}^T \sum_{s,a} \mu^{\pi_t}(s,a) \E_{s'\sim P(\cdot|s,a)}\left[\frac{H\gamma_t}{\mu_t(s')}\right]}_{\term_{2c}}. 
   \end{align*}
   To bound $\term_{2a}$, we apply \pref{lem: complicated lemma} with 
   \begin{align*}
       g_t(s') = \sum_{s,a} \frac{\left[\mu^{\pi_t}(s,a) - \mu^{\pi}(s,a)\right]_+}{\mu_t(s')}P(s'|s,a) \leq \frac{\sum_{s,a}\mu^{\pi_t}(s,a)P(s'|s,a)}{\mu_t(s')} \leq \frac{\mu^{\pi_t}(s')}{\mu_t(s')}\leq 1, 
   \end{align*}
   which gives 
   \begin{align*}
       \term_{2a} &\leq \sqrt{H^3S^2A\sum_{t=1}^T \sum_{s'}\mu^{\pi_t}(s') \sum_{s,a} \frac{\left[\mu^{\pi_t}(s,a) - \mu^{\pi}(s,a)\right]_+}{\mu_t(s')} P(s'|s,a)\ln(T)\iota} + H^2S^4A\ln(T)\iota \\
       &\leq \sqrt{H^3S^2A\sum_{t=1}^T \sum_{s'} \sum_{s,a} \left[\mu^{\pi_t}(s,a) - \mu^{\pi}(s,a)\right]_+ P(s'|s,a)\ln(T)\iota} + H^2S^4A\ln(T)\iota \\
       &= \sqrt{H^3S^2A\sum_{t=1}^T  \sum_{s,a} \left[\mu^{\pi_t}(s,a) - \mu^{\pi}(s,a)\right]_+ \ln(T)\iota} + H^2S^4A\ln(T)\iota.  
   \end{align*}
   
   which is same as the bound for $\term_{1a}$. Also, $\term_{2b}$ can be handled in the same way as $\term_{2a}$, and $\term_{2c}\leq \sum_{t=1}^T \sum_{s'}\mu^{\pi_t}(s')\times \frac{H\gamma_t}{\mu_t(s')}\leq HS\sum_{t=1}^T \gamma_t$. Overall, $\term_2$ can be bounded by the same order as $\term_1$.

   \paragraph{Upper bounding $\term_3$. }
   
   
   \begin{align*}
       \term_3 &\leq \sum_{t=1}^T \sum_{s,a,s'}\left[\mu^{\pi_t}(s,a) - \mu^{\pi}(s,a)\right]_+ \left(\sqrt{\frac{\overline{P}_t(s'|s,a)\iota}{n_t(s,a)}} + \frac{\iota}{n_t(s,a)}\right) \sum_{s''} \mu^{\overline{P}_t,\pi_t}(s''|s')c_t(s'')\\
       &\lesssim \sum_{t=1}^T \sum_{s,a,s'}\left[\mu^{\pi_t}(s,a) - \mu^{\pi}(s,a)\right]_+ \left(\overline{P}_t(s'|s,a)\alpha + \frac{\iota}{n_t(s,a) \alpha}\right)\sum_{s''} \mu^{\overline{P}_t,\pi_t}(s''|s')c_t(s'')   \tag{for any $\alpha\in(0,1]$} \\
       &= \alpha\sum_{t=1}^T \sum_{s,a,s'}\left[\mu^{\pi_t}(s,a) - \mu^{\pi}(s,a)\right]_+ \overline{P}_t(s'|s,a)\sum_{s''} \mu^{\overline{P}_t,\pi_t}(s''|s')c_t(s'') \\
       &\qquad + \frac{1}{\alpha}\sum_{t=1}^T \sum_{s,a,s'}\left[\mu^{\pi_t}(s,a) - \mu^{\pi}(s,a)\right]_+   \frac{\iota}{n_t(s,a)}\sum_{s''} \mu^{\overline{P}_t,\pi_t}(s''|s')c_t(s'') \\
       &\leq \alpha\sum_{t=1}^T \sum_{s,a}\left[\mu^{\pi_t}(s,a) - \mu^{\pi}(s,a)\right]_+ \sum_{s'} \mu^{\overline{P}_t,\pi_t}(s'|s,a)c_t(s')  + \frac{H^2}{\alpha}\sum_{t=1}^T \sum_{s,a,s'}\left[\mu^{\pi_t}(s,a) - \mu^{\pi}(s,a))\right]_+   \frac{\iota}{n_t(s,a)} \\
       &= \alpha H\sum_{t=1}^T \sum_{s,a,s'}\left[\mu^{\pi_t}(s,a) - \mu^{\pi}(s,a)\right]_+  \mu^{\overline{P}_t,\pi_t}(s'|s,a)\frac{\mu_t(s')-\underline{\mu}_t(s')}{\mu_t(s')}  + \frac{H^2S}{\alpha}\sum_{t=1}^T \sum_{s,a}   \frac{\mu^{\pi_t}(s,a)\iota}{n_t(s,a)} \\
       &\leq \underbrace{\alpha H\sum_{t=1}^T \sum_{s,a,s'}\left[\mu^{\pi_t}(s,a) - \mu^{\pi}(s,a)\right]_+  \mu^{\overline{P}_t,\pi_t}(s'|s,a)\frac{\overline{\mu}_t^{\pi_t}(s')-\mu^{\pi_t}(s')}{\mu_t(s')}}_{\term_{3a}}\\
       &\qquad + \underbrace{\alpha H\sum_{t=1}^T \sum_{s,a,s'}\left[\mu^{\pi_t}(s,a) - \mu^{\pi}(s,a)\right]_+  \mu^{\overline{P}_t,\pi_t}(s'|s,a)\frac{\mu^{\pi_t}(s')-\underline{\mu}_t^{\pi_t}(s')}{\mu_t(s')}}_{\term_{3b}} \\
       &\qquad + \underbrace{\alpha H\sum_{t=1}^T \sum_{s,a,s'}\left[\mu^{\pi_t}(s,a) - \mu^{\pi}(s,a)\right]_+  \mu^{\overline{P}_t,\pi_t}(s'|s,a)\frac{\gamma_t}{\mu_t(s')}}_{\term_{3c}} + \frac{H^2S^2A\ln(T)\iota}{\alpha}  \tag{by \pref{lem: pigeon hole} and the assumption that $\calE$ holds.}
    \end{align*} 
    For $\term_{3a}$ we apply \pref{lem: complicated lemma} with 
    \begin{align*}
        g_t(s') = \frac{\sum_{s,a}[\mu^{\pi_t}(s,a)-\mu^\pi(s,a)]_+ \mu^{\overline{P}_t,\pi_t}(s'|s,a)}{\mu_t(s')} \leq \frac{\sum_{s,a}\mu^{\pi_t}(s,a) \mu^{\overline{P}_t,\pi_t}(s'|s,a)}{\mu_t(s')}  \leq H, 
    \end{align*}
    and we get 
    \begin{align*}
        \term_{3a}&\leq \alpha H \sqrt{H^2S^2A\ln(T)\iota\sum_{t=1}^T \sum_{s'}\mu^{\pi_t}(s') \frac{\sum_{s,a}[\mu^{\pi_t}(s,a)-\mu^\pi(s,a)]_+ \mu^{\overline{P}_t,\pi_t}(s'|s,a)}{\mu_t(s')} } + \alpha H\cdot H^2S^4A\ln(T)\iota \\
        &\leq \alpha H \sqrt{H^3S^2A \ln(T)\iota\sum_{t=1}^T \sum_{s,a} \left[\mu^{\pi_t}(s,a) - \mu^{\pi}(s,a)\right]_+ } + \alpha H^3S^4A\ln(T)\iota
    \end{align*}
    The same bound applies to $\term_{3b}$, too. 
    \begin{align*}
        \term_{3c} \leq \alpha H\sum_{t=1}^T \sum_{s,a,s'}\mu^{\pi_t}(s,a)\mu^{\overline{P}_t, \pi_t}(s'|s,a)\frac{\gamma_t}{\mu_t(s')} \leq \alpha H^2 \sum_{s'}\gamma_t \lesssim \alpha H^6SA^2. 
    \end{align*}
    Picking $\alpha=\frac{1}{H}$,  combining $\term_{3a}$ and $\term_{3b}$, and using $H\leq S$, we get 
    \begin{align*}
        \term_3 \leq \sqrt{H^3S^2A \ln(T)\iota\sum_{t=1}^T \sum_{s,a} \left[\mu^{\pi_t}(s,a) - \mu^{\pi}(s,a)\right]_+ } + H^2S^4A^2\ln(T)\iota  
    \end{align*}
   which is also of the same order as $\term_1$. 

   Combining $\term_1, \term_2, \term_3$, we get that if $\calE$ holds, then 
   \begin{align*}
       \sum_{t=1}^T \sum_{s,a}\left(\mu^{\pi_t}(s,a) - \mu^\pi(s,a)\right)z_t(s,a) \lesssim \sqrt{H^3S^2A  \sum_{t=1}^T  \sum_{s,a} \left[\mu^{\pi_t}(s,a) - \mu^{\pi}(s,a))\right]_+\ln(T)\iota} + H^2S^4 A^2\ln(T)\iota
   \end{align*}
   Using this in \eqref{eq: middle point} finishes the proof.

\end{proof}
\section{Bounding $\sum_s V^{\pi_t}(s_0;b_t)$ (\pref{lem: V(bt) known Tsallis}, \pref{lem: V(bt) known shannon}, \pref{lem: log barrier bt lemma})} \label{app: b_t part}

We first show \pref{lem: common lemma for final} and 
\pref{lem: nu term final} which are common among different regularizers.
\begin{lemma}\label{lem: common lemma for final}
    \begin{align*}
        &\E\left[\sum_{s,a} \sqrt{\sum_{t=1}^T \mu_t(s)\pi_t(a|s)(1-\pi_t(a|s))}\right]\\ 
        &\lesssim \E\left[\sum_{s,a} \sqrt{\sum_{t=1}^T \mu^{\pi_t}(s)\pi_t(s)(1-\pi_t(a|s))}\right] + \sqrt{H^4S^2A^3\ln (T)} + \ind\{\text{unknown transition}\}\sqrt{HS^5A^3\ln(T)\iota}.  
    \end{align*}    
\end{lemma}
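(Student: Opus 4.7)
\bigskip

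The plan is to write $\mu_t(s) = \mu^{\pi_t}(s) + (\overline{\mu}_t^{\pi_t}(s)-\mu^{\pi_t}(s)) + \gamma_t$, which follows from the definition $\mu_t(s) = \overline{\mu}_t^{\pi_t}(s)+\gamma_t$. Working on the good event $\calE$ (\pref{def: good event}), the confidence-width term $\overline{\mu}_t^{\pi_t}(s) - \mu^{\pi_t}(s)$ is nonnegative because $P\in\calP_t$, so subadditivity of the square root gives
\[
\sqrt{\sum_t \mu_t(s)f_t(s,a)} \leq \sqrt{\sum_t \mu^{\pi_t}(s)f_t(s,a)} + \sqrt{\sum_t (\overline{\mu}_t^{\pi_t}(s)-\mu^{\pi_t}(s))f_t(s,a)} + \sqrt{\sum_t \gamma_t f_t(s,a)},
\]
with $f_t(s,a) \triangleq \pi_t(a|s)(1-\pi_t(a|s))$. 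The contribution of $\overline{\calE}$ (probability $\leq 5H/T^3$) is handled by the crude bound $\mu_t(s)\leq 2$, which contributes only $o(1)$ to the expectation.

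For the $\gamma_t$ piece, Cauchy--Schwarz over $a$ together with $\sum_a f_t(s,a)\leq 1$ gives $\sum_{s,a}\sqrt{\sum_t\gamma_t f_t(s,a)}\leq S\sqrt{A\sum_t\gamma_t}$. Then using $\gamma_t=\min\{10^6H^4A^2/t,1\}$ yields $\sum_t\gamma_t\lesssim H^4A^2\ln T$, producing the desired additive $\sqrt{H^4S^2A^3\ln T}$ term (which is what we want regardless of whether the transition is known).

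For the confidence-width piece, present only in the unknown-transition case, the plan is to apply \pref{lem: complicated lemma} with $g_t(s)=f_t(s,a)\in[0,1]$ for each fixed $a$, and use $f_t(s,a)^2\leq f_t(s,a)$ to obtain
\[
\sum_t \sum_s (\overline{\mu}_t^{\pi_t}(s)-\mu^{\pi_t}(s))f_t(s,a) \lesssim \sqrt{HS^2A\ln(T)\iota \cdot \textstyle\sum_s Y_{s,a}} + HS^4A\ln(T)\iota,
\]
where $Y_{s,a}\triangleq\sum_t\mu^{\pi_t}(s)f_t(s,a)$. Applying Cauchy--Schwarz over $s$, then $\sqrt{u+v}\leq\sqrt u+\sqrt v$, gives per-action bound $\sum_s \sqrt{(\cdot)_{s,a}} \lesssim (HS^4A\ln(T)\iota)^{1/4}(\sum_s Y_{s,a})^{1/4} + \sqrt{HS^5A\ln(T)\iota}$. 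Summing over $a$, the second summand already contributes $A\sqrt{HS^5A\ln(T)\iota}=\sqrt{HS^5A^3\ln(T)\iota}$ as claimed.

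The main obstacle is the fourth-root factor $(\sum_s Y_{s,a})^{1/4}$ in the first summand, which naively introduces $T^{1/4}$ dependence and would not match the polylog bound. The plan is to absorb it into the target quantity $M\triangleq\E[\sum_{s,a}\sqrt{Y_{s,a}}]$ using the Young-type inequality $K\cdot Z^{1/4}\leq K\alpha Z^{1/2} + K/(4\alpha)$ with $K=(HS^4A\ln(T)\iota)^{1/4}$ and $\alpha=1/(2K)$, together with $\sum_a\sqrt{\sum_s Y_{s,a}}\leq\sum_{s,a}\sqrt{Y_{s,a}}=M$ (by $\sqrt{\sum y_i}\leq\sum\sqrt{y_i}$). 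This yields the first-summand contribution $\leq\tfrac12 M + O(\sqrt{HS^4A^3\ln(T)\iota})$, which is absorbed into $M + O(\sqrt{HS^5A^3\ln(T)\iota})$ after rearranging, closing the argument.
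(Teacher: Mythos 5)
Your proof is correct and uses the same three-way decomposition of $\mu_t(s)$ as the paper (into $\mu^{\pi_t}(s)$, the confidence width $\overline{\mu}_t^{\pi_t}(s)-\mu^{\pi_t}(s)$, and $\gamma_t$), the same handling of the $\gamma_t$ term, and the same crude treatment of $\overline{\calE}$. The difference is in how the confidence-width term is tamed. The paper first applies AM-GM pointwise, writing $\sqrt{x_{s,a}}\leq \alpha x_{s,a} + 1/\alpha$ for each of the $SA$ pairs, then invokes \pref{lem: complicated lemma} \emph{once} with the aggregated weight $g_t(s)=\sum_a\phi_t(s,a)\le 1$ and optimizes $\alpha=1/\sqrt{HS^3A\ln(T)\iota}$; the self-bounding term emerges directly as $\sqrt{\sum_{t,s,a}\mu^{\pi_t}(s)\phi_t(s,a)}$. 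You instead invoke \pref{lem: complicated lemma} \emph{per action} with $g_t(s)=f_t(s,a)$, pay Cauchy--Schwarz over $s$, and this produces a fourth-root $(\sum_s Y_{s,a})^{1/4}$ that you then tame with a Young-type absorption into $M$. Both routes arrive at identical dependence; the paper's is shorter because the AM-GM step linearizes the square root before Lemma~\ref{lem: complicated lemma} is applied, so no fourth root ever appears. Your approach is a touch longer but equally valid, and the per-action application of the concentration lemma makes the role of the $A$ factor in the $\sqrt{HS^5A^3\ln(T)\iota}$ term a bit more transparent.
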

\begin{proof}
    Define $\phi(s,a)=\pi_t(a|s)(1-\pi_t(a|s))$. 
    \begin{align*}
    \sum_{s,a}\sqrt{\sum_{t=1}^T \mu_t(s)\phi(s,a)} &\leq \sum_{s,a}\sqrt{\sum_{t=1}^T \mu^{\pi_t}(s)\phi(s,a)} +  \underbrace{\sum_{s,a}\sqrt{\sum_{t=1}^T \gamma_t\phi_t(s,a)}}_{\term_{1}} +\underbrace{\sum_{s,a}\sqrt{\sum_{t=1}^T \left|\overline{\mu}_t^{\pi_t}(s)-\mu^{\pi_t}(s)\right|\phi_t(s,a)}}_{\term_{2}}
\end{align*}

\begin{align*}
    \term_{1}
    &\leq  \sum_{s,a} \sqrt{\sum_{t=1}^T \gamma_t\phi_t(s,a)} 
    \leq  \sqrt{SA\sum_{t=1}^T \gamma_t \sum_{s,a}\phi_t(s,a)} \leq S \sqrt{A\sum_{t=1}^T \gamma_t } \leq  \sqrt{H^4S^2A^3\ln (T)}. 
\end{align*}

$\term_{2}$ is zero in the known transition case, and in the unknown transition case, if $\calE$ defined in \pref{def: good event} holds, then
\begin{align*}
    \term_2 &\leq   \sum_{s,a} \left(\alpha\sum_{t=1}^T(\overline{\mu}_t^{\pi_t}(s) - \mu^{\pi_t}(s))\phi_t(s,a) + \frac{1}{\alpha}\right) \tag{for any $\alpha>0$} \\
    &\leq \alpha  \left(\sqrt{HS^2A\ln(T)\iota \sum_{t=1}^T \sum_s \mu^{\pi_t}(s)\left(\sum_a \phi_t(s,a)\right)^2} + HS^4A\ln(T)\iota\right) + \frac{SA}{\alpha}   \tag{by \pref{lem: complicated lemma} with $g_t(s)=\sum_a \phi_t(s,a)$}\\
    &\leq \alpha  \left(\sqrt{HS^2A\ln(T)\iota \sum_{t=1}^T \sum_{s,a} \mu^{\pi_t}(s)\phi_t(s,a)} + HS^4A\ln(T)\iota\right) + \frac{SA}{\alpha} \\
    &\lesssim \sqrt{\sum_{t=1}^T \sum_{s,a} \mu^{\pi_t}(s)\phi_t(s,a)}  + \sqrt{HS^5A^3\ln(T)\iota}  \tag{choosing $\alpha=\frac{1}{\sqrt{HS^3A\ln(T)\iota}}$} \\
    &\leq \sum_{s,a} \sqrt{\sum_{t=1}^T  \mu^{\pi_t}(s)\phi_t(s,a)} +  \sqrt{HS^5A^3\ln(T)\iota}. 
\end{align*}
If $\calE$ does not hold (which happens with probability $\leq O(H/T^3)$), then $\term_2\leq O(SA\sqrt{T})$. Overall, 
\begin{align*}
    \E[\term_2]\lesssim \E\left[\sum_{s,a} \sqrt{\sum_{t=1}^T  \mu^{\pi_t}(s)\phi_t(s,a)}\right] +  \sqrt{HS^5A^3\ln(T)\iota} + \frac{HSA}{T^{2.5}}.  
\end{align*}

Collecting terms and using $H\leq S$ finishes the proof. 
\end{proof}

\begin{lemma}\label{lem: nu term final}
    With known transition, 
    \begin{align*}
        \sum_{t=1}^T \sum_s \mu^{\pi_t}(s)\nu_t(s)\lesssim H^4SA^2\ln(T). 
    \end{align*}
    For Tsallis entropy or Shannon entropy with unknown transition, 
    \begin{align*}
        \E\left[\sum_{t=1}^T \sum_s \mu^{\tildeP_t, \pi_t}(s)\nu_t(s)\right] \leq HS^4A^2\ln(T)\iota.  
    \end{align*}
    
\end{lemma}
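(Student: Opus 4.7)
My plan is to handle the known and unknown transition cases separately, both leveraging the two structural bounds $\eta_t(\cdot)\leq \tfrac{1}{4H^4}$ (immediate from the definitions in \pref{fig: psi and b choices}) and $C_t(s,a)\leq H^2$ (since $c_t(s')\leq H$ and the dynamic program \eqref{eq: C def} has depth at most $H$). Together these give $\nu_t(s)\leq \tfrac{2}{H^4}\sum_a \pi_t(a|s) C_t(s,a)^2$ and, after one $H^2$ factor is absorbed, also $\nu_t(s)\leq \tfrac{2}{H^2}\sum_a \pi_t(a|s)C_t(s,a)$. The core task then becomes controlling $\sum_t \sum_{s,a}\mu(s,a)\,C_t(s,a)^2$ or $\sum_t \sum_{s,a}\mu(s,a)\,C_t(s,a)$ for the appropriate occupancy measure.

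For the known transition case, $\calP_t=\{P\}$, so $c_t(s)=H\gamma_t/\mu_t(s)$ is explicit. I would apply Cauchy--Schwarz to the DP expansion to obtain $C_t(s,a)^2\leq H\sum_{s'}\mu^{\pi_t}(s'|s,a)\,c_t(s')^2$, swap the order of summation using the layered-MDP identity $\sum_{s,a:\,h(s)\leq h(s')}\mu^{\pi_t}(s,a)\mu^{\pi_t}(s'|s,a)\leq H\mu^{\pi_t}(s')$, and bound $\mu^{\pi_t}(s')c_t(s')^2\leq H^2\gamma_t$ by using $\mu^{\pi_t}(s')/\mu_t(s')\leq 1$ once and $\mu_t(s')\geq \gamma_t$ once. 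This gives $\sum_{s,a}\mu^{\pi_t}(s,a)C_t(s,a)^2\leq H^4S\gamma_t$, whence $\sum_s \mu^{\pi_t}(s)\nu_t(s)\leq 2S\gamma_t$, and summing over $t$ with $\sum_t \gamma_t\lesssim H^4A^2\ln T$ yields $O(H^4SA^2\ln T)$.

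For the unknown transition case I would use the cruder $C_t(s,a)^2\leq H^2 C_t(s,a)$, reducing the problem to $\sum_t \sum_{s,a}\mu^{\tildeP_t,\pi_t}(s,a)\,C_t(s,a)$. Expanding $C_t(s,a)=\sum_{s'}\mu^{\overline{P}_t,\pi_t}(s'|s,a)c_t(s')$ (with $\overline{P}_t$ the maximizer in \eqref{eq: C def}) and swapping order of summation gives $\sum_{s'} c_t(s') V_t(s')$, where $V_t(s')=\sum_{s,a:\,h(s)\leq h(s')}\mu^{\tildeP_t,\pi_t}(s,a)\mu^{\overline{P}_t,\pi_t}(s'|s,a)$. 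Because the set $\calP_t$ in \eqref{eq: definition of transition P} decouples over $(s,a)$-constraints, the path-dependent maximum of occupancy agrees with the single-kernel one, so each of the at most $H$ inner summands is bounded by $\overline{\mu}_t^{\pi_t}(s')$, yielding $V_t(s')\leq H\overline{\mu}_t^{\pi_t}(s')\leq H\mu_t(s')$. Combined with $c_t(s')=H(\mu_t(s')-\underline{\mu}_t^{\pi_t}(s'))/\mu_t(s')$, this gives $c_t(s')V_t(s')\leq H^2(\mu_t(s')-\underline{\mu}_t^{\pi_t}(s'))$, and I would finish by controlling $\sum_t \sum_{s'}(\mu_t(s')-\underline{\mu}_t^{\pi_t}(s'))$ via a split into $\gamma_t$-terms (handled as in the known case) and confidence-width terms using the explicit bound \eqref{eq: tmp11} together with the state-action pigeonhole \pref{lem: pigeon hole}.

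The main obstacle is the unknown-transition step: applying the complicated difference lemma with $g_t\equiv 1$ to $\sum_t \sum_{s'}(\overline{\mu}_t^{\pi_t}(s')-\mu^{\pi_t}(s'))$ only yields a $\sqrt{T}$ bound, which is too weak. The resolution I expect to use is to retain the $1/\mu_t(s')$ factor inside $c_t$ intact, plug in the per-$(u,v,w)$ expansion \eqref{eq: tmp11} for the confidence widths, and then telescope the resulting $\mu^{\pi_t}(u,v)/n_t(u,v)$-type terms by the state-action pigeonhole $\sum_t \sum_{u,v}\mu^{\pi_t}(u,v)/n_t(u,v)\lesssim HSA\ln T$. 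This should give the required polylog dependence and match the stated $HS^4A^2\ln(T)\iota$ bound, with the final $O(H/T^{2})$ bad-event contribution absorbed via $\delta=1/T^3$ as in the proof of \pref{lem: biasterm known}.
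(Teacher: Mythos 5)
Your known-transition argument is correct and essentially matches the paper's: the paper uses the crude bound $C_t^2\leq H^2 C_t$ where you use Cauchy--Schwarz on the DP expansion, but both collapse to $\sum_s \mu^{\pi_t}(s)\nu_t(s)\lesssim S\gamma_t$ and finish by $\sum_t\gamma_t\lesssim H^4A^2\ln T$.

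The unknown-transition part has a genuine gap, and it is exactly at the spot you flag as the ``main obstacle.'' You correctly reduce the problem to $\sum_t\sum_{s'}(\overline{\mu}_t^{\pi_t}(s')-\underline{\mu}_t^{\pi_t}(s'))$ plus a $\gamma_t$ sum, and you correctly observe that applying \pref{lem: complicated lemma} with $g_t\equiv 1$ only yields $\sqrt T$. But your proposed fix --- keep the $1/\mu_t(s')$ factor from $c_t$, substitute \eqref{eq: tmp11}, and telescope via \pref{lem: pigeon hole} --- does not work. Once you bound $V_t(s')\leq H\overline{\mu}_t^{\pi_t}(s')\leq H\mu_t(s')$, the $1/\mu_t(s')$ in $c_t$ cancels, and even if you keep it as $\overline{\mu}_t^{\pi_t}(s')/\mu_t(s')\leq 1$ nothing improves. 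More fundamentally, the dominant piece of \eqref{eq: tmp11} is of the form $\sqrt{\mu^{\pi_t}(u,v)^2 P(w|u,v)\iota/n_t(u,v)}$, not $\mu^{\pi_t}(u,v)/n_t(u,v)$; after Cauchy--Schwarz in $w$ you are left with $\sum_t\sum_{u,v}\mu^{\pi_t}(u,v)/\sqrt{n_t(u,v)}$, which the pigeonhole bound only controls as $\sqrt{T\cdot HSA\ln T}$. So the $\sqrt T$ barrier you identified persists under your proposed resolution.

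The missing ingredient --- which your proposal never invokes --- is the \emph{time-decaying} learning-rate bound $\eta_t(s)\leq \min\{1/H^4,\,1/(H\sqrt t)\}$ for Tsallis (and the analogous bound for Shannon, read off \eqref{eq: eta for Shannon}). The paper carries this $\min\{1,H^3/\sqrt t\}$ factor all the way through the reduction and feeds it into \pref{lem: complicated lemma} as $g_t(s)=\min\{1,H^3/\sqrt t\}$, so that the key quantity inside the square root is $\sum_t\sum_s\mu^{\pi_t}(s)\min\{1,H^6/t\}\lesssim H^6 S\ln T$, giving a polylog bound. Using only the static $\eta_t\leq 1/(4H^4)$, as your argument does, the confidence-width sum is genuinely $\Theta(\sqrt T)$ and no rearrangement of \eqref{eq: tmp11} with \pref{lem: pigeon hole} can rescue it. Also note that the lemma as stated deliberately excludes the log-barrier case for precisely this reason: the log-barrier learning rate does not decay like $1/\sqrt t$, so the argument would not go through there; your proposal's framing (``both leveraging $\eta_t\leq 1/(4H^4)$'') does not reflect this regularizer-dependence.
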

\begin{proof}
With known transition, we have 
\begin{align*}
    &\sum_{t=1}^T \sum_s \mu^{ \pi_t}(s)\nu_t(s) \\
    &\leq \frac{1}{H^4}\sum_{t=1}^T  \sum_{s,a} \mu^{ \pi_t}(s)\pi_t(a|s)C_t(s,a)^2 \tag{$\eta_t(s,a)\leq \frac{1}{H^4}$ or $\eta_t(s)\leq \frac{1}{H^4}$}  \\
    &\leq \frac{1}{H^2}\sum_{t=1}^T  \sum_{s,a} \mu^{ \pi_t}(s)\pi_t(a|s)C_t(s,a) \tag{$C_t(s,a)\leq H^2$} \\
    &\leq \frac{1}{H}\sum_{t=1}^T \sum_{s,a} \mu^{ \pi_t}(s)\pi_t(a|s)\sum_{s'} \mu^{\pi_t}(s'|s,a)\frac{\mu_t(s')-\mu^{\pi_t}(s')}{\mu_t(s')}  \tag{by the definition of $C_t(s,a)$} \\
    &\leq  \sum_{t=1}^T \sum_{s'} \mu^{\pi_t}(s')\times \frac{\mu_t(s')-\mu^{\pi_t}(s')}{\mu_t(s')} \\
    &\leq \sum_{t=1}^T S\gamma_t \\
    &\lesssim H^4SA^2\ln(T). 
\end{align*}

With unknown transitions, notice that for Tsallis entropy we have $\eta_t(s)\leq \min\left\{\frac{1}{H^4}, \frac{1}{H\sqrt{t}}\right\}$ and for Shannon entropy we have $\eta_t(s,a)\leq \min\left\{\frac{1}{H^4}, \frac{1}{H\sqrt{t}}\right\}$. Therefore, in both cases, suppose that $\calE$ holds, 
    \begin{align*}
    &\sum_{t=1}^T \sum_s \mu^{\tildeP_t, \pi_t}(s)\nu_t(s) \\
    &\leq \sum_{t=1}^T  \min\left\{\frac{1}{H^4}, \frac{1}{H\sqrt{t}}\right\}\sum_{s,a}  \mu^{\tildeP_t, \pi_t}(s)\pi_t(a|s)C_t(s,a)^2 \\
    &\leq \sum_{t=1}^T \min\left\{\frac{1}{H^2}, \frac{H}{\sqrt{t}}\right\} \sum_{s,a} \mu^{\tildeP_t, \pi_t}(s)\pi_t(a|s)C_t(s,a) \\
    &\leq \sum_{t=1}^T\min\left\{\frac{1}{H}, \frac{H^2}{\sqrt{t}}\right\} \sum_{s,a} \mu^{\tildeP_t, \pi_t}(s)\pi_t(a|s)\sum_{s'} \mu^{\overline{P}_t, \pi_t}(s'|s,a)\frac{\mu_t(s')-\underline{\mu}_t^{\pi_t}(s')}{\mu_t(s')}   \tag{let $\overline{P}_t$ be the $\tildeP$ attaining maximum in \eqref{eq: C def} }\\
    &\leq \sum_{t=1}^T \min\left\{1, \frac{H^3}{\sqrt{t}}\right\}\sum_{s'} \overline{\mu}_t^{\pi_t}(s')\times \frac{\mu_t(s')-\underline{\mu}_t^{\pi_t}(s')}{\mu_t(s')} \\
    &\leq \sum_{t=1}^T \min\left\{1, \frac{H^3}{\sqrt{t}}\right\}\sum_{s'} \left(\overline{\mu}_t^{\pi_t}(s')-\underline{\mu}_t^{\pi_t}(s')\right) + \sum_{t=1}^T S\gamma_t \\
    &\leq \sum_{t=1}^T \min\left\{1, \frac{H^3}{\sqrt{t}}\right\}\sum_{s'} \left(\overline{\mu}_t^{\pi_t}(s')-\underline{\mu}_t^{\pi_t}(s')\right) + H^4SA^2\ln(T)
\end{align*}
By \pref{lem: complicated lemma}, the first part above can be upper bounded by  
\begin{align*}
    &\sqrt{HS^2 A\ln(T)\iota \sum_{t=1}^T \sum_s \mu^{\pi_t}(s) \times \min\left\{1, \frac{H^6}{t}\right\} } + HS^4A\ln(T)\iota \\
    &\lesssim  \sqrt{H^8S^2A\iota}\ln(T) + HS^4A\ln(T)\iota \lesssim HS^4A\ln(T)\iota 
\end{align*}
where we use $H\leq S$. 

Suppose that $\calE$ does not hold (happens with probability $O(H/T^3)$), we still have 
\begin{align*}
    \sum_{t=1}^T \sum_s \mu^{\tildeP_t,\pi_t}(s)\nu_t(s) &\leq \sum_{t=1}^T  \min\left\{\frac{1}{H^4}, \frac{1}{H\sqrt{t}}\right\}\sum_{s,a}  \mu^{\tildeP_t, \pi_t}(s)\pi_t(a|s)C_t(s,a)^2 \\
    &\leq O\left(T\times\frac{1}{H^4}\times H \left(H^2\right)^2\right)\leq O(HT)
\end{align*}
because $|C_t(s,a)|\leq H^2$ with probability $1$.  

Combining all terms and taking expectation, we conclude that 
\begin{align*}
    \E\left[\sum_{t=1}^T \sum_s \mu^{ \tildeP_t,\pi_t}(s)\nu_t(s)\right] \lesssim HS^4A^2\ln(T)\iota. 
\end{align*}

\end{proof}

\subsection{Tsallis entropy}
\begin{proof}[Proof of \pref{lem: V(bt) known Tsallis}]
     \begin{align*}
    &\sum_{t=1}^T V^{\pi_t, \tildeP_t}(s_0; b_t) \\ 
    &=\sum_{t=1}^T \sum_s \mu^{\tildeP_t, \pi_t}(s)b_t(s) \\
    &\leq \sum_{t=1}^T \sum_s \mu^{\tildeP_t, \pi_t}(s)\left[ \nu_t(s) + \left(\frac{1}{\eta_t(s)} - \frac{1}{\eta_{t-1}(s)}\right)\left(\xi_t(s) + \sqrt{A}\cdot\ind\left[\frac{\eta_t(s)}{\mu_t(s)} >\frac{1}{8H}\right]\right)\right] \tag{by \eqref{eq: b_t Tsallis}}  \\
    &\lesssim \sum_{t=1}^T \sum_s \mu^{\tildeP_t, \pi_t}(s)\nu_t(s) + H\sum_{t=1}^T \sum_s \mu^{\tildeP_t, \pi_t}(s)\frac{\frac{1}{\mu_t(s)}}{\sqrt{\sum_{\tau=1}^t \frac{1}{\mu_\tau(s)}}} \left(\xi_t(s)  
    + \sqrt{A}\cdot\frac{8H\eta_t(s)}{\mu_t(s)}\right) \tag{by \eqref{eq: Tsallis eta}}\\
    &\lesssim \underbrace{\sum_{t=1}^T \sum_s \mu^{\tildeP_t, \pi_t}(s)\nu_t(s)}_{\term_1} + \underbrace{H\sum_{t=1}^T \sum_s \frac{\xi_t(s)}{\sqrt{\sum_{\tau=1}^t \frac{1}{\mu_\tau(s)}}}}_{\term_2} + \underbrace{H^2\sqrt{A}\sum_{t=1}^T \sum_s \frac{\frac{1}{\mu_t(s)}\cdot\eta_t(s)}{\sqrt{\sum_{\tau=1}^t \frac{1}{\mu_\tau(s)}}}}_{\term_3} \\
\end{align*}
\paragraph{Bounding $\term_1$. }
$\term_1$ can be bounded using \pref{lem: nu term final}, which gives 
\begin{align*}
    \E[\term_1] \lesssim H^4SA^2\ln(T) + \ind\{\text{unknown transition}\}HS^4A^2 \ln(T)\iota. 
\end{align*}

\paragraph{Bounding $\term_2$. }
\begin{align*}
    \term_2 &\leq  H\sum_{t=1}^T \sum_s \sqrt{\mu_t(s)}\xi_t(s)\sqrt{\frac{\frac{1}{\mu_t(s)}}{\sum_{\tau=1}^t\frac{1}{\mu_\tau(s)}}} \\
    &\leq  H\sum_{t=1}^T \sum_{s,a} \sqrt{\mu_t(s)\pi_t(a|s)}(1-\pi_t(a|s))\sqrt{\frac{\frac{1}{\mu_t(s)}}{\sum_{\tau=1}^t\frac{1}{\mu_\tau(s)}}} \\
    &\leq  H\sum_{s,a} \sqrt{\sum_{t=1}^T \mu_t(s)\pi_t(s,a)(1-\pi_t(a|s))}\sqrt{\sum_{t=1}^T \frac{\frac{1}{\mu_t(s)}}{\sum_{\tau=1}^t\frac{1}{\mu_\tau(s)}}}\\
    &\lesssim  H\sqrt{\ln T}\sum_{s,a} \sqrt{\sum_{t=1}^T \mu_t(s)\pi_t(a|s)(1-\pi_t(a|s))}. 
\end{align*}
By \pref{lem: common lemma for final}, we can bound the last expression by 
\begin{align*}
    \E\left[H\sum_{s,a} \sqrt{\ln(T)\sum_{t=1}^T \mu^{\pi_t}(s)\pi_t(s,a)(1-\pi_t(s,a))}\right] + \sqrt{H^6S^2A^3}\ln (T) + \ind\{\text{unknown transition}\}\sqrt{H^3S^5A^3}\ln(T)\iota. 
\end{align*}

\paragraph{Bounding $\term_3$. } By \eqref{eq: Tsallis eta}, 
\begin{align*}
    \term_3 &\leq  H\sqrt{A}\sum_{t=1}^T \sum_s \frac{\frac{1}{\mu_t(s)}}{\sum_{\tau=1}^t \frac{1}{\mu_\tau(s)}}\leq HS\sqrt{A}\ln(T). 
 \end{align*}
 
 Combining $\term_1, \term_2, \term_3$ finishes the proof. 

\end{proof}

\subsection{Shannon entropy}

\begin{proof}[Proof of \pref{lem: V(bt) known shannon}]
\begin{align*}
    &\sum_{t=1}^T V^{\tildeP_t, \pi_t}(s_0; b_t) \\
    &\lesssim H\sqrt{\ln T}\sum_{t,s,a} \mu^{\tildeP_t, \pi_t}(s) \left( \frac{1}{\mu_t(s)\sqrt{\sum_{\tau=1}^{t-1} \frac{\xi_\tau(s,a)}{\mu_\tau(s)} + \frac{1}{\mu_t(s)} }} + \frac{1}{\sqrt{t}} \right)\left(\xi_t(s,a)+1-\frac{\min_{\tau\in[t]}\mu_\tau(s)}{\min_{\tau\in[t-1]}\mu_\tau(s)}\right) \\
    &\qquad + \sum_{t,s}\mu^{\tildeP_t, \pi_t}(s) \nu_t(s) \\
    &\leq \sum_{t,s,a} \frac{H\sqrt{\ln T}}{\sqrt{\sum_{\tau=1}^{t-1} \frac{\xi_\tau(s,a)}{\mu_\tau(s)} + \frac{1}{\mu_t(s)} }} \xi_t(s,a)+ \sum_{t,s,a} \frac{H\sqrt{\ln T}}{\sqrt{\sum_{\tau=1}^{t-1} \frac{\xi_\tau(s,a)}{\mu_\tau(s)} + \frac{1}{\mu_t(s)} }}\left(1-\frac{\min_{\tau\in[t]}\mu_\tau(s)}{\min_{\tau\in[t-1]}\mu_\tau(s)}\right) \\
    &\qquad + \sum_{t,s,a} \frac{H\sqrt{\ln T}\mu_t(s)}{\sqrt{t}}\xi_t(s,a) + H\sqrt{\ln T}\sqrt{\sum_{t,s,a} \mu^{\tildeP_t, \pi_t}(s) \frac{1}{t} }\sqrt{\sum_{t,s,a} \mu^{\tildeP_t, \pi_t}(s)\left(1-\frac{\min_{\tau\in[t]}\mu_\tau(s)}{\min_{\tau\in[t-1]}\mu_\tau(s)}\right)^2 }  \\
    &\qquad  + \sum_{t,s} \mu^{\tildeP_t, \pi_t}(s) \nu_t(s) \\
    &\leq H\sqrt{\ln T}\sum_{s,a} \sqrt{\sum_t\frac{\frac{\xi_t(s,a)}{\mu_t(s)}}{\sum_{\tau=1}^{t-1} \frac{\xi_\tau(s,a)}{\mu_\tau(s)} + \frac{1}{\mu_t(s)}}}\sqrt{\sum_t \mu_t(s)\xi_t(s,a)} + H\sqrt{\ln T}\sum_{t,s,a}\ln\left(\frac{\min_{\tau\in[t-1]}\mu_\tau(s)}{\min_{\tau\in[t]}\mu_\tau(s)}\right) \\
    &\qquad + H\sqrt{\ln T}\sum_{s,a}\sqrt{\sum_t \frac{\mu_t(s)\xi_t(s,a)}{t}}\sqrt{\sum_t \mu_t(s)\xi_t(s,a)} \\
    &\qquad + H\sqrt{\ln T}\sqrt{HA\ln(T)}\sqrt{A\sum_{t,s}\ln\left(\frac{\min_{\tau\in[t-1]}\mu_\tau(s)}{\min_{\tau\in[t]}\mu_\tau(s)}\right)}  + \sum_{t,s} \mu^{\tildeP_t, \pi_t}(s) \nu_t(s) \\
    &\lesssim H\sqrt{\ln T}\sum_{s,a}\sqrt{\ln(T)\sum_t \mu_t(s)\xi_t(s,a)} + \sum_{t,s}\mu^{\tildeP_t, \pi_t}(s) \nu_t(s) + H^{2}SA\ln^{\frac{3}{2}}(T)\\
    &\lesssim H\sum_{s,a}\sqrt{\ln^3(T)\sum_{t=1}^T \mu_t(s)\pi_t(a|s)(1-\pi_t(a|s)) } +  \sum_{s,t}\mu^{\tildeP_t, \pi_t}(s) \nu_t(s) + H^{2}SA\ln^{\frac{3}{2}}(T)
\end{align*}
By \pref{lem: common lemma for final} and \pref{lem: nu term final}, the expectation of this can be upper bounded by 
\begin{align*}
    &\E\left[H\sum_{s,a} \sqrt{\ln^3(T)\sum_{t=1}^T \mu^{\pi_t}(s)\pi_t(a|s)(1-\pi_t(a|s))}\right] \\
    &\qquad + \sqrt{H^6S^2A^3\ln^3(T)} + \ind\{\text{unknown transition}\}\sqrt{H^3S^5A^3\ln^3(T)\iota} \\
    &\qquad + H^4SA^2\ln^{\frac{3}{2}}(T) + \ind\{\text{unknown transition}\}HS^4A^2\ln(T)\iota \\
    &\lesssim \E\left[H\sum_{s,a} \sqrt{\ln^3(T)\sum_{t=1}^T \mu^{\pi_t}(s)\pi_t(a|s)(1-\pi_t(a|s))}\right] \\
    &\qquad + H^4SA^2\sqrt{\ln^{3}(T)} + \ind\{\text{unknown transition}\}HS^4A^2\ln(T)\iota. \tag{using $H\leq S$ and $\log(T)\lesssim \iota$} 
\end{align*}
\end{proof}

\subsection{Log barrier}
\begin{lemma}\label{lem: log barrier ito lemma}
    Let $\eta_1>0, \eta_2, \eta_3, \ldots$ be updated by 
    \begin{align*}
        \frac{1}{\eta_{t+1}} = \frac{1}{\eta_t} + \eta_t \phi_t\qquad \forall t\geq 1
    \end{align*}
    with $0\leq \phi_t\leq \eta_t^{-2}$. 
    Then 
    \begin{align*}
        \frac{1}{\eta_{t+1}} \geq \frac{1}{2}\sqrt{\sum_{\tau=1}^{t+1} \phi_\tau}. 
    \end{align*}
    
\end{lemma}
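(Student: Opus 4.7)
The plan is to switch to the variable $x_t := 1/\eta_t$, under which the update becomes $x_{t+1} = x_t + \phi_t/x_t$. Squaring both sides yields
\begin{align*}
x_{t+1}^2 \;=\; x_t^2 + 2\phi_t + \frac{\phi_t^2}{x_t^2} \;\geq\; x_t^2 + 2\phi_t,
\end{align*}
which is the key inequality. Telescoping from $\tau=1$ to $\tau=t$ and discarding the non-negative $x_1^2$ term gives $x_{t+1}^2 \geq 2\sum_{\tau=1}^{t} \phi_\tau$. Note that this step freely drops the $\phi_t^2/x_t^2$ contribution; no sharpening of it is needed because the target constant $1/2$ in the statement is quite loose.

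The above bound already controls all terms of $\sum_{\tau=1}^{t+1}\phi_\tau$ except the last one, which I would handle by invoking the hypothesis $\phi_{t+1} \leq \eta_{t+1}^{-2} = x_{t+1}^2$ directly. Combining, I obtain
\begin{align*}
\sum_{\tau=1}^{t+1}\phi_\tau \;=\; \sum_{\tau=1}^{t}\phi_\tau + \phi_{t+1} \;\leq\; \tfrac{1}{2}\,x_{t+1}^2 + x_{t+1}^2 \;=\; \tfrac{3}{2}\,x_{t+1}^2 \;\leq\; 4\,x_{t+1}^2,
\end{align*}
and taking square roots recovers the claimed bound $\tfrac{1}{\eta_{t+1}} = x_{t+1} \geq \tfrac{1}{2}\sqrt{\sum_{\tau=1}^{t+1}\phi_\tau}$.

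There is essentially no hard step. The only mildly subtle point is that the hypothesis $\phi_t \leq \eta_t^{-2}$ is applied at the index $t+1$ (not during the telescoping itself), which is what lets the sum on the left of the claim reach up to $\phi_{t+1}$ even though the telescoped inequality only accumulates $\phi_1,\dots,\phi_t$.
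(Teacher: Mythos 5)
Your proof is correct and follows essentially the same approach as the paper: telescope a lower bound on $1/\eta_{t+1}^2 - 1/\eta_t^2$ (the paper uses a difference-of-squares factorization where you simply square the recursion, but these are the same computation), then invoke $\phi_{t+1}\leq \eta_{t+1}^{-2}$ to absorb the final summand. Your constants are in fact slightly sharper than the paper's, which conservatively bounds the increment by $\phi_t$ rather than $2\phi_t$.
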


\begin{proof}
    By the update rule,  
    \begin{align*}
        \frac{1}{\eta_{t+1}^2} - \frac{1}{\eta_t^2} =\left(\frac{1}{\eta_{t+1}}+\frac{1}{\eta_t}\right)\left(\frac{1}{\eta_{t+1}}-\frac{1}{\eta_t}\right) =  \left(\frac{1}{\eta_{t+1}}+\frac{1}{\eta_t}\right)\eta_t\phi_t \geq \phi_t, 
    \end{align*} 
    which implies
    \begin{align*}
        \frac{1}{\eta_{t+1}}\geq \sqrt{\frac{1}{\eta_1^2}+\sum_{\tau=1}^t\phi_\tau} \geq  \sqrt{\sum_{\tau=1}^t \phi_\tau}. 
    \end{align*}
    
    By the condition on $\phi_t$, we also have 
    \begin{align*}
        \frac{1}{\eta_{t+1}}\geq \sqrt{\phi_{t+1}}. 
    \end{align*}
    Combining the two inequalities finishes the proof. 
\end{proof}

\begin{proof}[Proof of \pref{lem: log barrier bt lemma}]
    In this proof we only focus on the know transition case. We use $\calT_r$ and $\calT_v$ to denote the set of real and virtual episodes, respectively. 
    
    Let $\phi_t(s,a)=\frac{4\zeta_t(s,a)}{\mu_t(s)^2\log(T)}$ in real episodes and $\phi_t(s,a)=\frac{\ind\{(s_t^\dagger,a_t^\dagger)=(s,a)\}}{24\eta_t(s,a)^2H\log T}$ in virtual episodes. We first show that $\phi_t(s,a)\leq \frac{1}{\eta_t(s,a)^2}$, which allows us to apply \pref{lem: log barrier ito lemma} because $\frac{1}{\eta_{t+1}(s,a)}=\frac{1}{\eta_{t}(s,a)}+\eta_t(s,a)\phi_t(s,a)$ by our update rule. This is clear for virtual episodes. For real episodes, 
    \begin{align*}
        \phi_t(s,a)\eta_t(s,a)^2 = \frac{4\eta_t(s,a)^2\zeta_t(s,a)}{\mu_t(s)^2\log T}\leq \frac{H^2}{\log T}\times \frac{1}{H^3S}\leq 1
    \end{align*}
    because $\frac{\eta_t(s,a)}{\mu_t(s)}\leq \frac{1}{60\sqrt{H^3S}}$ in real episodes. 
    
    \begin{align*}
        &\sum_{t=1}^T  V^{\pi_t}(s_0;b_t)\\
        &\lesssim\sum_{t\in\calT_r} \sum_{s} \mu^{\pi_t}(s)\sum_a \left(\frac{\eta_t(s,a)\zeta_t(s,a)}{\mu_t(s)^2\log(T)}\log(T)\right) + \sum_{t\in\calT_v} \mu^{\pi_t}(s_t^\dagger) \frac{1}{\eta_t(s_t^\dagger, a_t^\dagger)H\log T}\log T + \sum_{t=1}^T \sum_s \mu^{\pi_t}(s)\nu_t(s)\\
        &\lesssim \sum_{t\in\calT_r}\sum_{s,a}\eta_t(s,a)\frac{\zeta_t(s,a)}{\mu_t(s)} + \frac{\sqrt{H^3S}}{H}|\calT_v| + H^4SA\ln(T)  \tag{in virtual episodes, $\frac{\eta_t(s_t^\dagger, a_t^\dagger)}{\mu_t(s_t^\dagger)}\geq \frac{1}{\sqrt{H^3S}}$, and we use \pref{lem: nu term final} to bound the last term} \\
        &\leq \sqrt{\log(T)}\sum_{t\in\calT_r}\sum_{s,a} \frac{\frac{\zeta_t(s,a)}{\mu_t(s)}}{\sqrt{\sum_{\tau\leq t: \tau\in\calT_r} \frac{\zeta_\tau(s,a)}{\mu_\tau(s)^2} }} + \sqrt{HS}|\calT_v| + + H^4SA^2\ln(T)  \tag{by \pref{lem: log barrier ito lemma} and the condition verified at the beginning of the proof}\\
        &\leq \sqrt{\log T}\sum_{s,a}\sqrt{\sum_{t\in\calT_r}\frac{\frac{\zeta_t(s,a)}{\mu_t(s)^2}}{ \sum_{\tau\leq t: \tau\in\calT_r} \frac{\zeta_\tau(s,a)}{\mu_\tau(s)^2}}}\sqrt{\sum_{t\in\calT_r} \zeta_t(s,a)} + \sqrt{HS}|\calT_v| + + H^4SA^2\ln(T) \\
        &\leq \log(T)\sum_{s,a}\sqrt{\sum_{t\in\calT_r} \zeta_t(s,a)} + \sqrt{HS}|\calT_v| + H^4SA^2\ln(T).  
    \end{align*}
    Now we bound the number of virtual episodes. Notice that each time a virtual episode happens, there exist $s,a$ such that $\frac{\eta_t(s,a)}{\mu_t(s)}\geq \frac{1}{60\sqrt{H^3S}}$, and  $\eta_t(s,a)$ will shrink by a factor of $(1+\frac{1}{24H\log T})$ after the virtual episode.  Since $\mu_t(s)\geq \gamma_t$, this event cannot happen if $\eta_t(s,a)\leq \frac{\gamma_t}{60\sqrt{H^3S}}$. Thus, the number of virtual episodes is upper bounded by
    \begin{align*}
        |\calT_v|\lesssim SA\times \frac{\log \frac{60\sqrt{H^3S}}{\gamma_t}}{\log\left(1+\frac{1}{24H\log T}\right)} \lesssim HSA \ln(T)\ln(SAT). 
    \end{align*}
    Applying this bound in the last expression and using $H\leq S$ finishes the proof. 
\end{proof}

\section{Final Regret Bounds through Self-Bounding (\pref{thm: known main lemma}, \pref{thm: unknown}, \pref{thm: small loss})}\label{app: final reg}
\begin{proof}[Proof of \pref{thm: known main lemma}]
    Let $\picirc=\argmax_{\pi}\Reg(\pi)$. 
    By \eqref{eq: final regret tmp}, \pref{lem: biasterm known}, and \pref{lem: V(bt) known Tsallis}, under known transition and Tsallis entropy, we have 
    \begin{align*}
        \Reg(\picirc)\lesssim H\sum_{s,a} \sqrt{\E\left[\sum_{t=1}^T \mu^{\pi_t}(s)\pi_t(a|s)(1-\pi_t(a|s))\right]\ln(T)} + H^5SA^2\ln(T) 
    \end{align*}
    For the adversarial regime, we bound the above by 
    \begin{align*}
        H \sqrt{SA\E\left[\sum_{t=1}^T \sum_{s,a}\mu^{\pi_t}(s)\pi_t(a|s))\right]\ln(T)} + H^5SA\ln(T) = \sqrt{H^3SAT} + H^5SA^2\ln(T).  
    \end{align*}
    For the stochastic regime, notice that $\Reg(\picirc)\geq \Reg(\pi^\star)\geq \E\left[\sum_{t=1}^T \mu^{\pi_t}(s)\pi_t(a|s)\Delta(s,a) \right] - \calC$, and we have  
    \begin{align*}
        \Reg(\picirc) 
        &\leq  c_1 H\sum_{s}\sum_{a\neq \pi^\star(s)} \sqrt{\E\left[\sum_{t=1}^T \mu^{\pi_t}(s)\pi_t(a|s)\right]\ln(T)} + c_2 H^5SA^2\ln(T)  \tag{for some universal constants $c_1, c_2$} \\ 
        &\leq H\sum_{s}\sum_{a\neq \pi^\star(s)}\left(\frac{\alpha}{H}\E\left[\sum_{t=1}^T \mu^{\pi_t}(s)\pi_t(a|s)\Delta(s,a) \right] + \frac{ c_1^2 H\ln(T)}{\alpha\Delta(s,a)}\right) + c_2 H^5SA^2\ln(T)   \tag{for arbitrary $\alpha>0$}\\
        &\leq \alpha \E\left[\sum_{t=1}^T \mu^{\pi_t}(s)\pi_t(a|s)\Delta(s,a) \right] + O\left(\sum_{s}\sum_{a\neq \pi^\star(s)}\frac{ H^2\ln(T)}{\alpha\Delta(s,a)} + H^5SA^2\ln(T)\right) \\
        &\leq \alpha(\Reg(\picirc) + \calC)  + O\left(\sum_{s}\sum_{a\neq \pi^\star(s)}\frac{ H^2\ln(T)}{\alpha\Delta(s,a)} + H^5SA^2\ln(T)\right)
    \end{align*}
    Picking $\alpha=\min\left\{\frac{1}{2}, \calC^{-\frac{1}{2}}\left(\frac{H^2\ln(T)}{\Delta(s,a)}\right)^{\frac{1}{2}}\right\}$ leads to the bound
    \begin{align*}
        \Reg(\picirc)\lesssim U + \sqrt{U\calC} + H^5SA^2\ln(T)
    \end{align*}
    where $U=\sum_{s}\sum_{a\neq \pi^\star(s)}\frac{H^2\ln(T)}{\Delta(s,a)}$. Finally, using that $\Reg(\pi)\leq \Reg(\picirc)$ for all $\pi$ finishes the proof. 
\end{proof}

\begin{proof}[Proof of \pref{thm: unknown}]
    By \eqref{eq: final regret tmp}, \pref{lem: biasterm known}, and \pref{lem: V(bt) known Tsallis}, under unknown transition and Tsallis entropy, we have 
    \begin{align*}
        \Reg(\pi)&\leq \underbrace{c_1\sqrt{H^3S^2A\E\left[\sum_{t=1}^T  \sum_{s,a} \left[\mu^{\pi_t}(s,a) - \mu^{\pi}(s,a)\right]_+ \right]\ln(T)\iota}}_{\term_1}\\
        &\qquad + \underbrace{c_2H\sum_{s,a} \sqrt{\E\left[\sum_{t=1}^T \mu^{\pi_t}(s)\pi_t(a|s)(1-\pi_t(a|s))\right]\ln(T)\iota}}_{\term_2} + c_3 H^2S^4A^2\ln(T)\iota  \tag{for universal constants $c_1, c_2, c_3$} 
    \end{align*}
    In the adversarial regime, we can bound it by the order of
    \begin{align*}
        \sqrt{H^4S^2A T\ln(T)\iota} + H^2S^4A^2\ln(T)\iota
    \end{align*}
    To get a bound in the stochastic regime, we first argue that it suffices to show the desired bound for all $\pi$ that satisfies $\Reg(\pi)\geq \Reg(\pi^\star)$. This is because we can then bound $\Reg(\pi)$ for $\pi$ such that $\Reg(\pi)< \Reg(\pi^\star)$ by 
    \begin{align*}
        \Reg(\pi) < \Reg(\pi^\star) \lesssim U+\sqrt{U(\calC+\calC(\pi^\star))} + \poly(H,S,A)\ln(T)\iota = U+\sqrt{U\calC} + \poly(H,S,A)\ln(T)\iota
    \end{align*}
    because $\calC(\pi^\star)=0$ by definition. 
    
    Below we assume that $\Reg(\pi)\geq \Reg(\pi^\star)$. 
    Note that by \pref{lem: policy difference absolute}, for any $\pi$, 
   \begin{align*}
       \sum_{s,a} \left| \mu^{\pi}(s,a) - \mu^{\pi^\star}(s,a) \right| 
       &\leq H  \sum_{s,a}\mu^{\pi}(s)\left| \pi(a|s) - \pi^\star(a|s) \right| \\
       &= H  \sum_{s}\sum_{a\neq \pi^\star(s)}\mu^{\pi}(s)\pi(a|s) +  H \sum_s \mu^{\pi}(s)(1-\pi(~\pi^\star(s)~|s))\\ 
       &= 2H \sum_{s}\sum_{a\neq \pi^\star(s)} \mu^{\pi}(s) \pi(a|s). 
   \end{align*}
   Hence, 
   \begin{align*}
       \term_1&\leq  c_1\sqrt{H^3S^2A\E\left[\sum_{t=1}^T  \sum_{s,a} \left|\mu^{\pi_t}(s,a) - \mu^{\pi}(s,a)\right| \right]\ln(T)\iota} \\ 
       &\leq  c_1\sqrt{H^3S^2A\E\left[\sum_{t=1}^T  \sum_{s,a} \left|\mu^{\pi_t}(s,a) - \mu^{\pi^\star}(s,a)\right| \right]\ln(T)\iota} + c_1\sqrt{H^3S^2A\sum_{t=1}^T  \sum_{s,a} \left|\mu^{\pi}(s,a) - \mu^{\pi^\star}(s,a)\right| \ln(T)\iota}\\ 
       &\leq c_1\sqrt{2H^4S^2A\E\left[\sum_{t=1}^T \sum_{s}\sum_{a\neq \pi^\star(s)}\mu^{\pi_t}(s,a)\right] \ln(T)\iota} + c_1\sqrt{2H^4S^2A\sum_{t=1}^T \sum_{s}\sum_{a\neq \pi^\star(s)}\mu^{\pi}(s,a) \ln(T)\iota}\iota \\
       &\leq \alpha \E\left[\sum_{t=1}^T \sum_{s}\sum_{a\neq \pi^\star(s)} \mu^{\pi_t}(s,a)\Delta_{\min}\right] +\alpha \sum_{t=1}^T \sum_{s}\sum_{a\neq \pi^\star(s)} \mu^{\pi}(s,a)\Delta_{\min} +  O\left(\frac{H^4S^2A\ln(T)\iota}{\alpha\Delta_{\min}}\right) \tag{by AM-GM}\\
       &\leq \alpha (\Reg(\pi^\star) + \calC) + \alpha(\Reg(\pi^\star) - \Reg(\pi) + \calC(\pi)) + O\left(  \frac{H^4S^2A\ln(T)\iota}{\alpha\Delta_{\min}}\right) \tag{see explanation below}\\
       &\leq \alpha\Reg(\pi) + \alpha(\calC+\calC(\pi)) + O\left(\frac{H^4S^2A\ln(T)\iota}{\alpha\Delta_{\min}}\right)  \tag{by the assumption $\Reg(\pi^\star)\leq \Reg(\pi)$}
   \end{align*}
   where in the second-to-last inequality we use the property: 
   \begin{align*}
       \Reg(\pi^\star) - \Reg(\pi) &= \E\left[\sum_{t=1}^T V^{\pi}(s_0;\ell_t) - V^{\pi^\star}(s_0;\ell_t) \right] \\
       &= \sum_{t=1}^T \sum_{s}\sum_{a\neq \pi^\star(s)}\mu^\pi(s,a)\Delta(s,a) - \sum_{t=1}^T \lambda_t(\pi) \\
       &\geq \sum_{t=1}^T \sum_{s}\sum_{a\neq \pi^\star(s)}\mu^\pi(s,a)\Delta_{\min} - \calC(\pi)
   \end{align*}
   For $\term_2$, similar to before, 
   \begin{align*}
       \term_2 
       &\leq c_2 H\sum_{s,a} \sqrt{\E\left[\sum_{t=1}^T \mu^{\pi_t}(s)\pi_t(a|s)(1-\pi_t(a|s))\right]\ln(T)} \\
       &\leq \alpha(\Reg(\pi) + \calC)  + O\left(\sum_{s}\sum_{a\neq \pi^\star(s)}\frac{ H^2\ln(T)}{\alpha\Delta(s,a)} + H^5SA^2\ln(T)\right) \\
       &\leq \alpha(\Reg(\pi) + \calC)  + O\left(\sum_{s}\sum_{a\neq \pi^\star(s)}\frac{ H^2\ln(T)}{\alpha\Delta(s,a)} + H^5SA^2\ln(T)\right)\\
       &\leq \alpha(\Reg(\pi) + \calC)  + O\left(\frac{ H^4S^2A\ln(T)}{\alpha\Delta_{\min}} + H^2S^4A^2\ln(T)\right)
   \end{align*}
   Combining $\term_1$ and $\term_2$, we get 
   \begin{align*}
       \Reg(\pi)\leq 2\alpha \Reg(\pi) + 2\alpha (\calC+\calC(\pi)) + O\left(\frac{H^4S^2A\ln(T)\iota}{\alpha\Delta_{\min}} + H^2S^4A^2\ln(T)\iota\right)
   \end{align*}
   Picking $\alpha=\min\left\{\frac{1}{4}, (\calC+\calC(\pi))^{-\frac{1}{2}}\left(\frac{H^4S^2A\ln(T)\iota}{\Delta_{\min}}\right)^{\frac{1}{2}}\right\}$ leads to the desired bound. 
\end{proof}

\begin{proof}[Proof of \pref{thm: small loss}]  
    \begin{align*}
        \Reg(\pi)\lesssim \sum_{s,a}\sqrt{\ln^2(T)\E\left[\sum_{t=1}^T (\ind_t(s,a)-\pi_t(a|s)\ind_t(s))^2L_{t,h(s)}^2\right]} + H^3S^2A^2\ln(T)\ln(SAT) 
    \end{align*}
    In the adversarial regime, 
    \begin{align*}
        \Reg(\pi)
        &\leq \sqrt{HSA\ln^2(T)\E\left[\sum_{t=1}^T \sum_{s,a} (\ind_t(s,a)-\pi_t(a|s)\ind_t(s))^2L_{t,h(s)}\right]} + H^3S^2A^2\ln(T)\ln(SAT) \\
        &\leq \sqrt{HSA\ln^2(T)\E\left[\sum_{t=1}^T \sum_{s,a} \ind_t(s,a)L_{t,h(s)}\right]} + H^3S^2A^2\ln(T)\ln(SAT) \\
        &\leq \sqrt{H^2SA\ln^2(T)\E\left[\sum_{t=1}^T V^{\pi_t}(s_0;\ell_t)\right]} + H^3S^2A^2\ln(T)\ln(SAT)  
    \end{align*}
    On the other hand, $\Reg(\pi)=\E\left[\sum_{t=1}^T V^{\pi_t}(s_0;\ell_t) - \sum_{t=1}^T V^{\pi}(s_0;\ell_t) \right]$. Solving the inequality, we get 
    \begin{align*}
        \Reg(\pi) \lesssim \sqrt{H^2SA\ln^2(T)\sum_{t=1}^T V^{\pi}(s_0;\ell_t)} + H^3S^2A^2\ln(T)\ln(SAT).    
    \end{align*}
    
    In the stochastic regime, 
    \begin{align*}
        \Reg(\pi)
        &\lesssim \sum_{s,a}\sqrt{\ln^2(T)\E\left[\sum_{t=1}^T (\ind_t(s,a)-\pi_t(a|s)\ind_t(s))^2L_{t,h(s)}^2\right]} + H^3S^2A^2\ln(T)\ln(SAT) \\
        &\leq \sum_{s,a}\sqrt{H^2\ln^2(T)\E\left[\sum_{t=1}^T \mu^{\pi_t}(s)\pi_t(a|s)(1-\pi_t(a|s))\right]} + H^3S^2A^2\ln(T)\ln(SAT),  
    \end{align*}
    which is similar to the stochastic bound in \pref{thm: known main lemma}. Following the same self-bounding analysis in the proof of \pref{thm: known main lemma} we can get the desired bound. 
\end{proof}

To get regret bounds for the Shannon entropy version under known and unknown transitions, we use  \pref{lem: biasterm known} and \pref{lem: V(bt) known shannon} and follow exactly the same procedure as in the proofs of \pref{thm: known main lemma} and \pref{thm: unknown}. This leads to the following guarantees: 
\begin{theorem}\label{thm: known main lemma shannon}
    Under known transitions, \pref{alg: template} with Shannon entropy regularizer ensures for any $\pi$ 
    \begin{align*}
        \Reg(\pi) \lesssim \sqrt{H^3SAT\ln^3 (T)} +\poly(H,S,A)\ln^2(T)
    \end{align*}
    in the adversarial case, 
    and 
    \begin{align*}
        &\Reg(\pi) \lesssim U+\sqrt{U\calC} + \poly(H,S,A)\ln^2(T)
    \end{align*}
    in the stochastic case, where
    $U=\sum_{s}\sum_{a\neq \pi^\star(s)} \frac{H^2\ln^3(T)}{\Delta(s,a)}$.  
\end{theorem}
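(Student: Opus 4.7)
}
My plan is to follow exactly the same recipe used in the proof of \pref{thm: known main lemma} (Tsallis, known transition), but with the bonus bound from \pref{lem: V(bt) known shannon} substituted in place of \pref{lem: V(bt) known Tsallis}. Concretely, starting from \eqref{eq: final regret tmp}, which holds for every regularizer, I would write
\begin{align*}
\Reg(\pi)\lesssim H^5SA\ln(T) + \E\left[\sum_s \mu^{\pi}(s)\,\biasterm^\pi(s) \right] + \E\left[\sum_{t=1}^T V^{\tildeP_t,\pi_t}(s_0;b_t)\right],
\end{align*}
and then plug in \pref{lem: biasterm known} (known-transition branch, $\lesssim H^5SA^2\ln(T)$) together with the known-transition half of \pref{lem: V(bt) known shannon}. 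After collecting terms I obtain
\begin{align*}
\Reg(\pi)\lesssim H\sum_{s,a}\sqrt{\E\left[\sum_{t=1}^T \mu_t(s)\pi_t(a|s)(1-\pi_t(a|s))\right]\ln^3(T)} + \poly(H,S,A)\ln^2(T),
\end{align*}
which is the Shannon analogue of the master inequality driving \pref{thm: known main lemma}. Since $\mu_t(s)=\mu^{\pi_t}(s)+\gamma_t$ under known transition, the $\gamma_t$-contribution is absorbed into the $\poly(H,S,A)\ln^2(T)$ term exactly as in the Tsallis case.

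For the adversarial bound, I would upper-bound $1-\pi_t(a|s)\le 1$ and apply Cauchy--Schwarz over $(s,a)$:
\begin{align*}
H\sum_{s,a}\sqrt{\E\!\left[\textstyle\sum_t \mu^{\pi_t}(s)\pi_t(a|s)\right]\ln^3(T)}\le H\sqrt{SA\,\E\!\left[\textstyle\sum_t\sum_{s,a}\mu^{\pi_t}(s,a)\right]\ln^3(T)}\le \sqrt{H^3SAT\ln^3(T)},
\end{align*}
using $\sum_{s,a}\mu^{\pi_t}(s,a)=H$. This yields the desired $\sqrt{H^3SAT\ln^3(T)}+\poly(H,S,A)\ln^2(T)$ bound.

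For the stochastic bound, I would apply the standard self-bounding trick as in the Tsallis proof. Restricting the sum to $a\ne\pi^\star(s)$ (the $a=\pi^\star(s)$ terms can be bounded via $\pi_t(\pi^\star(s)|s)(1-\pi_t(\pi^\star(s)|s))\le \sum_{a\ne\pi^\star(s)}\pi_t(a|s)$ and reindexing), I use AM--GM with a free parameter $\alpha>0$:
\begin{align*}
H\sqrt{\E\!\left[\textstyle\sum_t \mu^{\pi_t}(s)\pi_t(a|s)\right]\ln^3(T)}\le \alpha\,\E\!\left[\sum_t \mu^{\pi_t}(s,a)\Delta(s,a)\right] + \frac{c\,H^2\ln^3(T)}{\alpha\,\Delta(s,a)}
\end{align*}
for each $(s,a)$ with $a\neq\pi^\star(s)$. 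Summing over $(s,a)$ and using the stochastic-regime identity $\E[\sum_t\sum_s\sum_{a\ne\pi^\star(s)}\mu^{\pi_t}(s,a)\Delta(s,a)]\le \Reg(\pi^\star)+\calC\le \Reg(\picirc)+\calC$ (with $\picirc=\argmax_\pi \Reg(\pi)$), then rearranging and choosing $\alpha=\min\{1/2,\sqrt{U/\calC}\}$ with $U=\sum_s\sum_{a\ne\pi^\star(s)}H^2\ln^3(T)/\Delta(s,a)$, gives $\Reg(\picirc)\lesssim U+\sqrt{U\calC}+\poly(H,S,A)\ln^2(T)$. Since $\Reg(\pi)\le \Reg(\picirc)$, this completes the proof.

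The only place that requires care --- and the step I anticipate as the main (mild) obstacle --- is verifying that all contributions to the $\poly(H,S,A)\ln^2(T)$ term from \pref{lem: biasterm known} and \pref{lem: V(bt) known shannon} (in particular the $H^4SA^2\sqrt{\ln^3(T)}$ overhead coming from the learning-rate-tuning/$\nu_t$ bookkeeping in \pref{lem: V(bt) known shannon}) can indeed be absorbed into $\poly(H,S,A)\ln^2(T)$; this is immediate since $\sqrt{\ln^3(T)}\le \ln^2(T)$ for $T\ge e$. Everything else is a mechanical replay of the Tsallis argument with the exponent of $\ln(T)$ increased by one, which is the anticipated price of using Shannon regularization.
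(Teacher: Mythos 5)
Your proposal is correct and follows exactly the route the paper takes: the paper's entire proof of this theorem is the single remark at the end of \pref{app: final reg} stating that one should substitute \pref{lem: V(bt) known shannon} for \pref{lem: V(bt) known Tsallis} in the proof of \pref{thm: known main lemma}, which is precisely what you do (including the $\mu_t\to\mu^{\pi_t}$ absorption, the Cauchy--Schwarz step in the adversarial case, and the self-bounding argument with the $a=\pi^\star(s)$ term folded into the $a\neq\pi^\star(s)$ sum). The only cosmetic difference is that you make the choice of $\alpha$ and the restriction to $a\neq\pi^\star(s)$ explicit, whereas the paper leaves these implicit by reference to the Tsallis proof.
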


\begin{theorem}\label{thm: unknown shannon}
    Under unknown transitions, \pref{alg: template} with Shannon entropy regularizer ensures for any $\pi$ 
    \begin{align*}
        \Reg(\pi) \lesssim \sqrt{H^4S^2AT\ln^2(T)\iota}  +\poly(H,S,A)\ln(T)\iota
    \end{align*}
    in the adversarial case, 
    and 
    \begin{align*}
        &\Reg(\pi) \lesssim U+\sqrt{U(\calC+\calC(\pi))}  +\poly(H,S,A)\ln(T)\iota 
    \end{align*}
    in the stochastic case, where
    $U=\frac{H^4S^2A\ln^2(T)\iota}{\Delta_{\min}}$. 
\end{theorem}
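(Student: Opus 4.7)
The plan is to reuse the template that proved \pref{thm: known main lemma} and \pref{thm: unknown} in the Tsallis case, swapping in the Shannon-specific ingredients---only the bonus lemma changes, while \pref{alg: template} is unchanged. First I would combine \eqref{eq: transition decomposition} with \pref{lem: regterm unknown} (Shannon branch) to verify the hypothesis \eqref{eq: po regret bound} of \pref{lem: dilated bonus thm}, yielding the master decomposition \eqref{eq: final regret tmp}. Plugging in the unknown-transition branch of \pref{lem: biasterm known} for the bias part and the unknown-transition branch of \pref{lem: V(bt) known shannon} for the bonus part gives a bound of the shape
\begin{align*}
    \Reg(\pi) &\lesssim \underbrace{\sqrt{H^3 S^2 A\,\ln(T)\iota\,\E\!\left[\sum_{t=1}^T\sum_{s,a}[\mu^{\pi_t}(s,a)-\mu^{\pi}(s,a)]_+\right]}}_{\term_1} \\
    &\quad + \underbrace{H\sum_{s,a}\sqrt{\ln^3(T)\,\E\!\left[\sum_{t=1}^T \mu^{\pi_t}(s)\pi_t(a|s)(1-\pi_t(a|s))\right]}}_{\term_2} + \poly(H,S,A)\ln(T)\iota.
\end{align*}

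For the adversarial regime I would bound $\term_1$ by dropping the positive part and using $\sum_{s,a}\mu^{\pi_t}(s,a)=H$ to obtain $\term_1\lesssim \sqrt{H^4 S^2 A T\,\ln(T)\iota}$, and bound $\term_2$ by pulling a $\sqrt{SA}$ factor out with Cauchy--Schwarz (using $\sum_{t,s,a}\mu^{\pi_t}(s)\pi_t(a|s)\leq HT$) to get $\term_2\lesssim \sqrt{H^3 S A T\,\ln^3 T}$. The $\term_1$ contribution dominates and yields the claimed $\sqrt{H^4 S^2 A T\,\ln^2(T)\iota}$ (one $\ln T$ absorbed into $\iota$), plus the additive $\poly(H,S,A)\ln(T)\iota$ remainder.

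For the stochastic regime I would first reduce to the case $\Reg(\pi)\geq \Reg(\pi^\star)$, since otherwise the bound is controlled by $\pi^\star$ and $\calC(\pi^\star)=0$. Following the proof of \pref{thm: unknown}, I would then apply \pref{lem: policy difference absolute} to replace $\sum_{s,a}[\mu^{\pi_t}(s,a)-\mu^{\pi}(s,a)]_+$ by $H\sum_s\sum_{a\neq \pi^\star(s)}\mu^{\pi_t}(s)\pi_t(a|s)$ plus a symmetric $\mu^\pi$ term, split $\term_1$ and $\term_2$ via AM-GM against $\Delta_{\min}$ and the per-pair gaps $\Delta(s,a)$ respectively, and charge the $\mu^{\pi_t}$ (resp.\ $\mu^\pi$) contributions against $\Reg(\pi^\star)+\calC$ (resp.\ $\Reg(\pi^\star)-\Reg(\pi)+\calC(\pi)$) using the gap-based lower bound on these quantities. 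Consolidating constants into $U=H^4 S^2 A\,\ln^2(T)\iota/\Delta_{\min}$---the Shannon $\ln^3 T$ collapses to $\ln^2 T$ once one $\ln T$ is consumed by the square-root-against-gap AM-GM---and tuning $\alpha=\min\{1/4,\sqrt{U/(\calC+\calC(\pi))}\}$ gives the desired $U+\sqrt{U(\calC+\calC(\pi))}+\poly(H,S,A)\ln(T)\iota$. The main obstacle, as in \pref{thm: unknown}, is correctly tracking the $\calC(\pi)$ term (rather than merely $\calC(\pi^\star)$): it arises because the bias bound forces us to handle $[\mu^{\pi_t}-\mu^\pi]_+$ instead of $\mu^{\pi_t}-\mu^{\pi^\star}$, and the gap lower bound then picks up $\mu^\pi$-mass at suboptimal actions---exactly the $\calC(\pi)$ contribution---which is why the unknown-transition stochastic bound has an extra $\calC(\pi)$ compared to its known-transition counterpart \eqref{eq: C bound known transition}. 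The Shannon specifics add only the bookkeeping of one extra logarithmic factor beyond the Tsallis case.
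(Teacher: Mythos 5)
Your proposal matches the paper's own proof exactly: the paper's justification for \pref{thm: unknown shannon} is precisely ``use \pref{lem: biasterm known} and \pref{lem: V(bt) known shannon} and follow the same procedure as in the proofs of \pref{thm: known main lemma} and \pref{thm: unknown},'' which is the route you lay out, including the reduction to $\Reg(\pi)\geq\Reg(\pi^\star)$ and the self-bounding/AM-GM steps tracking $\calC(\pi)$. The only minor imprecision is your phrase about a $\ln T$ being ``consumed by the AM-GM''---really the $\ln^3 T$ factor in $\term_2$ is just dominated by the $\ln^2(T)\iota$ in $\term_1$ since $\iota\geq\ln T$---but this does not affect the correctness of the argument.
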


\newpage

\end{document}